\documentclass{article}
\usepackage{fullpage}

\usepackage[title]{appendix}
\usepackage{mathtools}

\DeclarePairedDelimiter\floor{\lfloor}{\rfloor}
\usepackage{tabu}
\usepackage{bm}
\usepackage{hhline}
\usepackage[utf8]{inputenc} 
\usepackage[T1]{fontenc}    
\usepackage{url}            
\usepackage{booktabs}       
\usepackage{amsfonts} 
\usepackage{amsthm}
\usepackage{nicefrac}  
\usepackage{graphicx}
\usepackage{subcaption}
\usepackage{color}
\usepackage{amsfonts,amssymb}
\usepackage{mathtools}
\usepackage{amsfonts}
\usepackage{url}
\usepackage{lipsum}
\usepackage{mysymbol}
\usepackage[ruled]{algorithm2e}
\usepackage{enumitem}
\usepackage{mathtools}  
\mathtoolsset{showonlyrefs}

\newtheorem{ex}{Example}

\newtheorem{theorem}{Theorem}
\newtheorem{remark}{Remark}
\newtheorem{lemma}{Lemma}

\newtheorem{prop}{Proposition}
\newtheorem{assumption}{Assumption}

\newcommand{\gr}{\nabla}
\newcommand{\grw}{\nabla_{\bbw}} 
\newcommand{\grv}{\nabla_{\bbv}}

\newcommand{\sgr}{\widetilde{\nabla}}
 
\newcommand{\sgrw}{\sgr_{\bbw}} 
\newcommand{\sgrv}{\sgr_{\bbv}}
\newcommand{\bpsi}{\bm{\psi}}

\newcommand{\E}{\mathbb{E}}

\newcommand{\vbar}{\overline{\bbv}}
\newcommand{\wbar}{\overline{\bbw}}
\newcommand{\w}{\mathbf{w}}

\newcommand{\sigmav}{\sigma^2_{\bbv}}
\newcommand{\sigmaw}{\sigma^2_{\bbw}}
\newcommand{\ccalLhat}{\widehat{\ccalL}}
\newcommand{\Lhat}{\widehat{\ccalL}}
\newcommand{\rhow}{\rho_{\bbw}}
\newcommand{\rhov}{\rho_{\bbv}}
\usepackage[T1]{fontenc}

\newcommand\numberthis{\addtocounter{equation}{1}\tag{\theequation}}
\usepackage{tabulary}

\usepackage[hidelinks]{hyperref}
\definecolor{darkred}{RGB}{150,0,0}
\definecolor{darkgreen}{RGB}{0,150,0}
\definecolor{darkblue}{RGB}{0,0,150}
\hypersetup{colorlinks=true, linkcolor=darkred, citecolor=darkblue, urlcolor=darkblue}

\newcommand\blfootnote[1]{%
  \begingroup
  \renewcommand\thefootnote{}\footnote{#1}%
  \addtocounter{footnote}{-1}%
  \endgroup
}



\title{\textbf{An Optimal Transport Approach to \\ Personalized Federated Learning}}

\date{}

\makeatletter
\renewcommand*{\@fnsymbol}[1]{\ensuremath{\ifcase#1\or 1 \or 2 \or 3 \or 4 \else\@ctrerr\fi}}
\makeatother
    
\newcommand*\samethanks[1][\value{footnote}]{\footnotemark[#1]}

\author{
Farzan~$\text{Farnia}^\dagger$\thanks{The Chinese University of Hong Kong, \{farnia@cse.cuhk.edu.hk\}.}, Amirhossein~$\text{Reisizadeh}^\dagger$\thanks{Massachusetts Institute of Technology, \{amirr@mit.edu, jadbabai@mit.edu\}.}, Ramtin~Pedarsani\thanks{University of California, Santa Barbara, \{ramtin@ece.ucsb.edu\}.}, 
Ali~Jadbabaie\samethanks[2]
	}

\usepackage{enumitem}
\usepackage[title]{appendix}
\usepackage[utf8]{inputenc} 
\usepackage[T1]{fontenc}    
\usepackage{url}            
\usepackage{booktabs}       
\usepackage{amsfonts}       
\usepackage{nicefrac}       
\usepackage{microtype}      
\usepackage{hhline}

\usepackage[T1]{fontenc}
\usepackage[utf8]{inputenc}
\usepackage{graphicx}
\usepackage{subcaption}
\usepackage{color}
\usepackage{amsmath}
\usepackage{bbm}
\usepackage{tcolorbox}
\usepackage{lipsum}  
\usepackage{tcolorbox}
\usepackage{amsfonts,amssymb}
\usepackage{mathtools}
\usepackage{commath}
\usepackage{relsize}
\usepackage{comment,color,soul}
\usepackage{amsfonts}
\usepackage{url}
\usepackage{lipsum}
\usepackage{nicefrac}
\usepackage{float}
\usepackage{titling}










\DeclareMathAlphabet{\mathsfit}{\encodingdefault}{\sfdefault}{m}{sl}
\SetMathAlphabet{\mathsfit}{bold}{\encodingdefault}{\sfdefault}{bx}{n}













\allowdisplaybreaks


%

\begin{document}

\maketitle

\begin{abstract}
Federated learning is a distributed machine learning paradigm, which aims to train a model using the local data of many distributed clients. A key challenge in federated learning is that the data samples across the clients may not be identically distributed. To address this challenge, personalized federated learning with the goal of tailoring the learned model to the data distribution of every individual client has been proposed. In this paper, we focus on this problem and propose a novel personalized Federated Learning scheme based on Optimal Transport (\texttt{FedOT}) as a learning algorithm that learns  the optimal transport maps for transferring data points to a common distribution as well as  the prediction model under the applied transport map. To formulate the \texttt{FedOT} problem, we extend the standard optimal transport task between two probability distributions to multi-marginal optimal transport problems with the goal of transporting samples from multiple distributions to a common probability domain. We then leverage the results on multi-marginal optimal transport problems to formulate \texttt{FedOT} as a min-max optimization problem and analyze its generalization and optimization properties. We discuss the results of several numerical experiments to evaluate the performance of \texttt{FedOT} under heterogeneous data distributions in federated learning problems.
\blfootnote{$\dagger$ Contributed equally.}
\blfootnote{The paper's code is accessible at the GitHub repository \href{https://github.com/farzanfarnia/FedOT}{https://github.com/farzanfarnia/FedOT.}}
\end{abstract}

\section{Introduction}
The proliferation of mobile devices requires learning algorithms capable of training a prediction model using data distributed across local users in a network. Federated learning \cite{mcmahan2017communication} is a recent learning paradigm where several users are connected to a central server and train a machine learning model through their communications with the server. While standard federated learning algorithms perform successfully under identically distributed training data at different users, this assumption does not usually hold in practical federated learning settings in which the training samples are collected by multiple agents with different backgrounds, e.g. speech and text data gathered from a multi-lingual community. To address the heterogeneity of users' data distributions, federated learning under heterogeneous data has received great attention in the machine learning community \cite{zhao2018federated,li2018federated,kairouz2019advances,li2019convergence,karimireddy2019scaffold}.

A recently studied approach for federated learning under non-identically distributed data is to adapt the globally trained model to the particular distribution of every local user. Based on this approach, instead of learning a common model shared by all the users, the learning algorithm tailors the trained model to the samples observed by every user in the network. 
As such personalized federated learning algorithms lead to different trained models at different users,  an important baseline for their evaluation is a locally-performing learning algorithm in which every user fits a separate model to only her own data. Therefore, the conditions under which the users can improve upon such a non-federated purely local baseline play a key role in the design of a successful  personalized federated learning method.

In a general federated learning setting with arbitrarily different users' distributions, the users do not necessarily benefit from cooperation through federated learning. For example, if the users aim for orthogonal classification objectives, their cooperation according to standard federated learning algorithms can even lead to worse performance than their locally trained models. To characterize conditions under which a mutually beneficial cooperation is feasible, a standard assumption in the literature is to bound the distance between the distributions of different users. However, such assumptions on the closeness of the distributions raise the question of whether federated learning will remain beneficial if the users' distributions do not stay in a small distance from each other.

In this work, we study the above question through the lens of optimal transport theory and demonstrate that a well-designed federated learning algorithm can still improve upon the users' locally-trained models as long as the transportation maps between the users' distributions can be properly learned from the training data. We show that this condition relaxes the bounded distance assumption used in the literature and further applies to any federated learning setting where the learners only have some rudimentary knowledge of the statistical nature of distribution shifts, e.g. under affine convolutional filters applied to change the color, brightness, and intensity of image data.   

To learn the personalized models under the above condition, we introduce \texttt{FedOT} as a \emph{Federated learning framework based on Optimal Transport}. According to \texttt{FedOT}, the users simultaneously learn the transportation maps for transferring their samples to a common probability domain and fit a global classifier to the transferred training data. To personalize the globally trained model to the specific distribution of every user, \texttt{FedOT} combines the global classifier with the learned transportation maps needed for transferring samples from the original distributions of local users to the common distribution. 


In order to formulate and solve \texttt{FedOT}, we leverage optimal transport theory to reduce \texttt{FedOT}'s learning task to a min-max optimization problem. To this end, we focus on an extension of standard optimal transport problems between two probability domains to a structured multi-marginal optimal transport task for mapping several different distributions to a common probability domain. 
{ In  Section \ref{sec:multi-marginal}, we review several key definitions and results from multi-marginal optimal transport theory for which we provide a unified set of notations and novel proofs. We generalize standard duality results in optimal transport theory to the multi-marginal setting, which results in a min-max formulation of \texttt{FedOT}. The main results in this section not only guide us toward formulating a minimax optimization problem for the \texttt{FedOT} framework (Theorem \ref{Thm: Kantorovich duality}), but also provide intuition on how to design the function spaces in the \texttt{FedOT} minimax approach (Theorem \ref{Thm: Brenier thm}). Specifically, we leverage the intuition offered by Theorem \ref{Thm: Brenier thm} to reduce the size of function spaces in the \texttt{FedOT} minimax problem and improve the generalization and optimization performance of the \texttt{FedOT} learners.}

 Next, we show that \texttt{FedOT}'s min-max formulation is capable of being decomposed into a distributed form, and thus \texttt{FedOT} provides a scalable federated learning framework. We further analyze the generalization and optimization properties of the proposed \texttt{FedOT} approach. Under the condition that the sample complexity of learning the classifier dominates the complexity of finding the transportation maps, we prove that \texttt{FedOT} enjoys a better generalization performance in comparison to locally trained models. In addition, we show that the formulated min-max optimization problem can be solved to a stationary min-max solution by a standard distributed gradient descent ascent (GDA) algorithm. Therefore, the min-max formulation leads to a tractable distributed optimization problem, since the iterative GDA updates can be decomposed into a distributed form. 

Finally, we discuss the results of our numerical experiments comparing the performance of \texttt{FedOT} with several standard federated learning schemes. Our experimental results demonstrate the success of \texttt{FedOT} under various types of distribution changes including affine distribution shifts and image color transformations. We can summarize the main contributions of this work as follows:
\setlist{leftmargin=5.5mm}
\begin{itemize} \setlength\itemsep{0em}
    \item Introducing \texttt{FedOT} as an optimal transport-based framework to the federated learning problem under heterogeneous data,
    \item Extending standard results of optimal transport theory to the multi-marginal optimal transport problem with the goal of transporting the input distributions to a common probability domain,
    \item Analyzing the generalization and optimization properties of \texttt{FedOT} and establishing conditions under which \texttt{FedOT} improves upon locally-learned models,
    \item Demonstrating the efficacy of \texttt{FedOT} through several numerical experiments on standard image recognition datasets and neural network architectures. 
\end{itemize}


\textbf{Related Work on Federated Learning and Min-Max Optimization.} There has been a vast variety of tools and techniques used to address the prersonalization challenge in federated learning. As discussed before, utilizing only a shared global model for all the clients fails to capture the discrepancies in users' data distributions. On the other hand, local models would not benefit from the samples of other clients if a mere local training is implemented. Therefore, a combination of the two trained models, global and local ones, would naturally provide a degree of personalization \cite{deng2020adaptive,mansour2020three,hanzely2020federated} which is also known as model interpolation. 

{
Meta-learning-based approaches to federated learning under heterogeneous data distributions have been proposed by the related works \cite{smith2017federated,fallah2020personalized,jiang2019improving}. According to these approaches, a local and personalized model is adapted for each client by performing a few gradient steps on a common global model. This family of federated learning algorithms have been shown to be successful in handling unstructured distribution shifts where the learners have no prior knowledge of the structure of distribution shifts in the underlying network. On the other hand, the main focus of our proposed \texttt{FedOT} framework is on the learning scenarios where the learners have some prior knowledge of the type of distribution shifts. 
}

In a data interpolation approach to personalized federated learning \cite{deng2020adaptive,mansour2020three}, a local model is trained for each client by minimizing the loss over a mixture of local and global distributions.  \cite{liang2020think,collins2021exploiting} propose to learn a common representation for personalized federated learning. Similarly, \cite{shamsian2021personalized} develop a personalized federated learning approach through a group of hypernetworks to update the neural net classifier. While our work pursues a similar goal of learning a common representation, it introduces a novel minimax learning algorithm by leveraging optimal transport theory.
 
Cluster-based federated learning methods based on clustering users with similar underlying distributions have also been explored in several related works \cite{ghosh2019robust,xie2020multi,ghosh2020efficient} to overcome the challenge of heterogeneous data in federated learning. As another approach, \cite{li2021fedbn} propose applying local batch normalization to train personalized neural network classifiers. In a slightly different approach to handle the data heterogeneity challenge in federated learning, \cite{mohri2019agnostic,reisizadeh2020robust,deng2021distributionally} propose different min-max formulations to train robust models against non-i.i.d. samples. Aside its federated learning applications, nonconvex-concave min-max optimization and its complexity guarantees have been extensively studied in the  literature \cite{lin2019gradient, yang2020global, nouiehed2019solving,deng2021local}.

\textbf{Related Work on Optimal Transport Frameworks in Machine Learning.} A large body of related works apply optimal transport theory to address various statistical learning problems. These applications include generative adversarial networks (GANs) \cite{arjovsky2017wasserstein,sanjabi2018convergence,feizi2020understanding}, distributionally robust supervised learning \cite{lee2017minimax,kuhn2019wasserstein,blanchet2019robust}, learning mixture models \cite{kolouri2018sliced,balaji2019normalized}, and combining neural network models \cite{singh2020model}. Multi-marginal optimal transport costs \cite{pass2015multi} have also been studied in other machine learning contexts including GANs \cite{cao2019multi}, domain adaptation \cite{hui2018unsupervised}, and Wasserstein barycenters \cite{cuturi2014fast,claici2018stochastic,kroshnin2019complexity}.

\section{Multi-input Optimal Transport Problems}\label{sec:multi-marginal}

{A useful approach to learning under heterogeneous data distributions is to transport the different input distributions to a shared probability domain and then learn a supervised learning model for the shared probability domain. This task can be cast as a multi-input optimal transport problem, since the goal is to map the input distributions to a common distribution. In this section, we review the key definitions and tools from multi-input optimal transport theory to address the transportation task. The results in this section guide us toward formulating a minimax optimization problem for federated learning under heterogeneous distributions, and further help to reduce the statistical and computational complexities of the learning problem through leveraging prior knowledge of the structure of distribution shifts in the federated learning setting.}

In the literature, the optimal transport problem is typically defined for transporting samples between two probability domains \cite{villani2009optimal}. For a cost function $c(x,x')$ measuring the cost of transporting $x$ to $x'$, optimal transport cost $W_c(P,Q)$ is defined through finding the coupling that leads to the minimum expected cost of transporting samples between $P,\, Q$:
\begin{equation}
    W_c(P,Q) := \min_{\pi\in\Pi(P,Q)}\mathbb{E}_{(X,X')\sim\pi}\bigl[c(X,X')\bigr]. \nonumber
\end{equation}
Here $\Pi(P,Q)$ denotes the set of all joint distributions on $(X,X')$ that are marginally distributed as $P$ and $Q$. Note that the above optimal transport cost quantifies the optimal expected cost of mapping samples between the domains $P$ and $Q$.

However, for several problems of interest in machine learning one needs to extend the above definition to multi-input cost functions where the goal is to transport samples across multiple distributions. To define the $n$-ary optimal transport cost, a standard extension \cite{pass2015multi} is to consider an $n$-ary cost function $c(x_1,\cdots,x_n)$ and define the $n$-ary optimal transport map as:
\begin{equation}
    W_c(P_1,\cdots,P_n):= \min_{\pi\in\Pi(P_1,\cdots,P_n)}\mathbb{E}_{\pi}\bigl[c(X_1,\cdots , X_n) \bigr], \nonumber
\end{equation}
where $\Pi(P_1,\cdots,P_n)$ denotes the set of joint distributions on $(X_1,\ldots , X_n)$ that are marginally distributed as $P_1,\ldots,P_n$, respectively.

Inspired by the personalized federated learning problem where our goal is to map the different input distributions to a common probability domain, we focus on the following type of $n$-ary cost functions throughout this paper, which is also referred to as the infimal convolution cost \cite{pass2015multi}. The optimal transport costs resulting from the following type of $n$-ary costs preserve the key features of standard optimal transport costs with binary cost $\tilde{c}(x,x')$:
\begin{equation}\label{Eq: n_ary cost}
    c(x_1,\cdots , x_n) = \min_{x'}\:\sum_{i=1}^n \tilde{c}(x',x_i).
\end{equation}
Such an $n$-ary cost function lets us focus on $n$-ary transportation problems where the goal is to transport all the $n$ inputs to a single point that minimizes the total cost of transportation. The following proposition by \cite{carlier2010matching} connects the $n$-ary optimal transport costs to binary optimal transport costs.
\begin{prop}[\cite{carlier2010matching}, Prop. 3]\label{Proposition: n-ary Wasserstein to standard}
Consider the $n$-ary cost in \eqref{Eq: n_ary cost}. Then,
\begin{equation}\label{Eq: n_ary W cost equals min sum W}
    W_c(P_1,\cdots,P_n) = \min_Q\: \sum_{i=1}^n W_{\tilde{c}}(Q,P_i).
\end{equation}
\end{prop}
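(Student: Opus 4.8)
The plan is to establish the two inequalities $W_c(P_1,\dots,P_n) \le \min_Q \sum_{i=1}^n W_{\tilde{c}}(Q,P_i)$ and $W_c(P_1,\dots,P_n) \ge \min_Q \sum_{i=1}^n W_{\tilde{c}}(Q,P_i)$ separately. The two workhorses are the gluing lemma for couplings (which lets one paste together couplings that share a marginal) and a measurable selection of a minimizer of $x'\mapsto \sum_{i=1}^n \tilde{c}(x',x_i)$, whose existence is what makes the infimal-convolution cost \eqref{Eq: n_ary cost} behave well.

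\textbf{Upper bound.} Fix any candidate distribution $Q$ and, for each $i$, let $\pi_i \in \Pi(Q,P_i)$ be an optimal coupling attaining $W_{\tilde{c}}(Q,P_i)$ (existence via the usual lower-semicontinuity and tightness argument). Since the $\pi_i$ all share the same first marginal $Q$, the gluing lemma produces a joint law $\gamma$ of a tuple $(X',X_1,\dots,X_n)$ whose $(X',X_i)$-marginal equals $\pi_i$ for every $i$. Let $\pi$ be the $(X_1,\dots,X_n)$-marginal of $\gamma$; then $\pi\in\Pi(P_1,\dots,P_n)$. Applying the definition of $c$ pointwise and then the marginal constraints on $\gamma$,
\[
\E_\pi\bigl[c(X_1,\dots,X_n)\bigr] = \E_\gamma\Bigl[\min_{x'}\textstyle\sum_{i=1}^n \tilde{c}(x',X_i)\Bigr] \le \E_\gamma\Bigl[\textstyle\sum_{i=1}^n \tilde{c}(X',X_i)\Bigr] = \sum_{i=1}^n \E_{\pi_i}\bigl[\tilde{c}(X',X_i)\bigr] = \sum_{i=1}^n W_{\tilde{c}}(Q,P_i).
\]
Taking the infimum over $\pi\in\Pi(P_1,\dots,P_n)$ on the left and then over $Q$ on the right yields the upper bound.

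\textbf{Lower bound.} Fix any $\pi\in\Pi(P_1,\dots,P_n)$. By a measurable selection theorem there is a measurable map $T$ with $\sum_{i=1}^n \tilde{c}(T(x_1,\dots,x_n),x_i) = c(x_1,\dots,x_n)$; if attainment of the infimum in \eqref{Eq: n_ary cost} is not assumed, one takes instead an $\varepsilon$-optimal measurable selection and lets $\varepsilon\downarrow 0$ at the end. Set $Q := T_{\#}\pi$. For each $i$, the law of $(T(X_1,\dots,X_n),X_i)$ under $\pi$ is an element of $\Pi(Q,P_i)$, so $W_{\tilde{c}}(Q,P_i) \le \E_\pi[\tilde{c}(T(X_1,\dots,X_n),X_i)]$. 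Summing over $i$,
\[
\sum_{i=1}^n W_{\tilde{c}}(Q,P_i) \le \E_\pi\Bigl[\textstyle\sum_{i=1}^n \tilde{c}(T(X_1,\dots,X_n),X_i)\Bigr] = \E_\pi\bigl[c(X_1,\dots,X_n)\bigr].
\]
Hence $\min_Q \sum_{i=1}^n W_{\tilde{c}}(Q,P_i) \le \E_\pi[c(X_1,\dots,X_n)]$ for every $\pi$; minimizing over $\pi$ gives the lower bound, and combining the two inequalities proves \eqref{Eq: n_ary W cost equals min sum W}. Applying the lower-bound construction to an optimal $\pi$ for $W_c$ also shows the outer $\min_Q$ is attained, so writing $\min$ rather than $\inf$ is justified.

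\textbf{Main obstacle.} The conceptual content is just the two marginal manipulations above; the delicacy is measure-theoretic. One needs the ambient spaces regular enough (Polish suffices) for the gluing lemma and for disintegration, and one needs to upgrade the pointwise minimizer of $x'\mapsto\sum_i\tilde{c}(x',x_i)$ into a genuinely measurable map $T$. The cleanest sufficient hypothesis is that $\tilde{c}$ is lower semicontinuous with compact sublevel sets, which guarantees both attainment in \eqref{Eq: n_ary cost} and existence of optimal couplings; absent that, the $\varepsilon$-selection workaround in the lower bound removes the need for exact attainment. Everything else is bookkeeping with marginals.
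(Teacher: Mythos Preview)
Your proof is correct and follows essentially the same approach as the paper's: the paper's step (b) is your lower bound (the minimizer $x'$ as a measurable function of $(X_1,\dots,X_n)$ is precisely your measurable selection $T$), and the paper's step (d) is your upper bound (its construction of $(X_1,\dots,X_n,X')$ by drawing $X'\sim Q^*$ and then each $X_i$ conditionally is exactly the gluing lemma with $X'$ as anchor). Your write-up is in fact more careful about the measure-theoretic regularity (Polish spaces, $\varepsilon$-selections) than the paper's informal treatment, but the underlying argument is identical.
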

\begin{proof}
We defer the proof to the Appendix.
\end{proof}
We note that if the binary cost function is chosen as a powered norm difference $\tilde{c}(\mathbf{x},\mathbf{x}')=\Vert \mathbf{x}-\mathbf{x}'\Vert^q$, then the proposed multi-marginal optimal transport cost simplifies to the well-known family of Wasserstein barycenters. 
Next, we present a generalization of the Kantorovich duality theorem to $n$-ary optimal transport costs with the characterized cost function. This result has been already shown in the optimal transport theory literature \cite{carlier2010matching}, and we present our new proof of the result in the Appendix. In the following theorem, we use the standard definition of the $c$-transform of a real-valued function $\phi$ as 
$
    \phi^{\tilde{c}}(x) := \min_{x'}\:  \tilde{c}(x,x') + \phi(x').
$
\begin{theorem}\label{Thm: Kantorovich duality}
For the $n$-ary cost in \eqref{Eq: n_ary cost}, we have the following duality result where each variable $\phi_i:\mathbb{R}^d\rightarrow\mathbb{R}$ denotes a real-valued function:
\begin{equation*}
    W_c(P_1,\cdots,P_n) = \max_{\substack{\phi_{1:n}:\\
    \forall \mathbf{x}:\, \sum_i \phi_i(\mathbf{x})=0
    }}\; \sum_{i=1}^n \mathbb{E}_{P_i}\bigl[\,\phi^{\tilde{c}}_i(\mathbf{X})\, \bigr].
\end{equation*}
\end{theorem}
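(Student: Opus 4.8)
The plan is to derive the $n$-ary duality from the classical two-marginal Kantorovich duality by first collapsing the multi-marginal problem to a family of binary problems via Proposition~\ref{Proposition: n-ary Wasserstein to standard}, and then exchanging the infimum over the common distribution $Q$ with the supremum over dual potentials. Proposition~\ref{Proposition: n-ary Wasserstein to standard} gives $W_c(P_1,\dots,P_n) = \min_Q \sum_{i=1}^n W_{\tilde c}(Q,P_i)$. For each $i$, the standard Kantorovich duality for the two-marginal cost $W_{\tilde c}(Q,P_i)$, written through the $\tilde c$-transform (using that $\tilde c$ is symmetric in the settings of interest, so that $\phi_i^{\tilde c}$ is exactly the optimal dual partner of $-\phi_i$), reads
\begin{equation*}
W_{\tilde c}(Q,P_i) \;=\; \sup_{\phi_i:\mathbb{R}^d\to\mathbb{R}}\Bigl(\, \E_{P_i}\bigl[\phi_i^{\tilde c}(\mathbf{X})\bigr] - \E_{Q}\bigl[\phi_i(\mathbf{X})\bigr] \Bigr).
\end{equation*}
Substituting this into Proposition~\ref{Proposition: n-ary Wasserstein to standard} and using that the supremum of a sum of independently-parametrized terms is the sum of the suprema, one obtains
\begin{equation*}
W_c(P_1,\dots,P_n) \;=\; \min_{Q}\; \sup_{\phi_{1:n}}\; \Bigl(\, \sum_{i=1}^n \E_{P_i}\bigl[\phi_i^{\tilde c}(\mathbf{X})\bigr] \;-\; \E_{Q}\Bigl[\textstyle\sum_{i=1}^n \phi_i(\mathbf{X})\Bigr] \Bigr).
\end{equation*}

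The next step is to swap $\min_Q$ and $\sup_{\phi_{1:n}}$. The objective is affine in $Q$ and concave in $(\phi_1,\dots,\phi_n)$, since each map $\phi_i\mapsto\phi_i^{\tilde c}$ is an infimum of affine functionals; the admissible $Q$ form a convex set that, under the usual tightness/continuity hypotheses on $\tilde c$ and the marginals, is weak-$*$ compact with the objective upper semicontinuous in $Q$, so a Sion-type minimax theorem applies. After the exchange, the inner minimization over $Q$ only touches the term $-\E_Q[\sum_i\phi_i]$, and minimizing this affine functional over all probability measures gives $-\sup_{\mathbf{x}}\sum_i\phi_i(\mathbf{x})$, so that
\begin{equation*}
W_c(P_1,\dots,P_n) \;=\; \sup_{\phi_{1:n}}\Bigl(\, \sum_{i=1}^n \E_{P_i}\bigl[\phi_i^{\tilde c}(\mathbf{X})\bigr] \;-\; \sup_{\mathbf{x}} \textstyle\sum_{i=1}^n \phi_i(\mathbf{x}) \Bigr).
\end{equation*}
It then remains to turn this penalty into the hard constraint $\sum_i\phi_i\equiv 0$. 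One direction is immediate: any feasible tuple with $\sum_i\phi_i\equiv 0$ incurs zero penalty. For the other, given any $\phi_{1:n}$ with finite $s:=\sup_{\mathbf{x}}\sum_i\phi_i(\mathbf{x})$, I would keep $\phi_2,\dots,\phi_n$ and replace $\phi_1$ by $\psi_1:=\phi_1-\sum_{i=1}^n\phi_i$, so the new tuple sums to zero; since $\psi_1\ge\phi_1-s$ pointwise and the $\tilde c$-transform is monotone and translation-equivariant, $\psi_1^{\tilde c}\ge\phi_1^{\tilde c}-s$, whence $\sum_i\E_{P_i}[\psi_i^{\tilde c}]\ge\sum_i\E_{P_i}[\phi_i^{\tilde c}]-s$, recovering the penalized value from a feasible tuple. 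Combining the two inequalities gives the stated identity, and the same construction shows the supremum is attained, justifying the ``$\max$''.

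The step I expect to be the main obstacle is making the $\min$--$\sup$ exchange rigorous: one must commit to a concrete setting (e.g.\ compactly supported $P_i$ with continuous $\tilde c$, or lower semicontinuous $\tilde c$ with finite transport cost) in which the admissible $Q$ range over a weak-$*$ compact convex set, the objective is upper semicontinuous in $Q$ and concave in $\phi_{1:n}$, and then invoke Sion's theorem; finiteness of the dual value and attainment of the outer maximum must also be verified there. As a cross-check, and to obtain one inequality with no minimax machinery, I would separately record the elementary pointwise bound $\sum_{i=1}^n\phi_i^{\tilde c}(\mathbf{x}_i)\le c(\mathbf{x}_1,\dots,\mathbf{x}_n)$, valid whenever $\sum_i\phi_i\equiv 0$ (it follows directly from the definition of the $\tilde c$-transform and \eqref{Eq: n_ary cost}); integrating against an arbitrary coupling $\pi\in\Pi(P_1,\dots,P_n)$ and minimizing over $\pi$ already yields $\sum_i\E_{P_i}[\phi_i^{\tilde c}]\le W_c(P_1,\dots,P_n)$. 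An alternative for the reverse inequality that bypasses Sion entirely is to take the optimal barycenter $Q^\star$ from Proposition~\ref{Proposition: n-ary Wasserstein to standard} and optimal Kantorovich potentials $\phi_i^\star$ for each $W_{\tilde c}(Q^\star,P_i)$, use first-order optimality of $Q^\star$ to see that $\sum_i\phi_i^\star$ is constant on $\mathrm{supp}(Q^\star)$, and normalize the potentials as above; I would likely present the minimax argument as the main proof and relegate this variant to a remark.
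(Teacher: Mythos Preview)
Your proposal is correct and follows essentially the same route as the paper: apply Proposition~\ref{Proposition: n-ary Wasserstein to standard}, invoke binary Kantorovich duality, exchange $\min_Q$ and $\sup_{\phi_{1:n}}$ via a minimax theorem, evaluate the inner $\min_Q$ as $-\sup_{\mathbf{x}}\sum_i\phi_i(\mathbf{x})$, and then convert the penalty to the zero-sum constraint. The only cosmetic difference is in the last step, where the paper reaches $\sum_i\phi_i=0$ in two moves (first shift a constant to get $\sum_i\phi_i\le 0$, then invoke monotonicity of the $\tilde c$-transform to tighten to equality), while you do it in one by redefining $\psi_1:=\phi_1-\sum_i\phi_i$; your added weak-duality cross-check and barycenter alternative are extras not present in the paper but do not change the approach.
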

\begin{proof}
We defer the proof to the Appendix.
\end{proof}
In above and henceforth, we use the short-hand notation $a_{1:n} \coloneqq \{a_1, \cdots, a_n\}$, for $n$ vectors $a_1, \cdots, a_n$.  Next, we apply the above result to standard norm-based cost functions and simplify the dual maximization problem for these Wasserstein costs:
\begin{ex}\label{Example: 1-Wasserstein}
For the $1$-Wasserstein cost $c_1(\mathbf{x}_1,\cdots,\mathbf{x}_n)=\min_{\mathbf{x}'}\sum_{i}\Vert\mathbf{x}_i-\mathbf{x}' \Vert$, we have
    \begin{equation}
    W_{c_1}(P_1,\cdots,P_n) = \max_{\substack{\phi_{1:n}:\, \text{\rm 1-Lipschitz}\\
    \forall \mathbf{x}: \, \sum_i \phi_i(\mathbf{x})\le 0 
    }} \; \sum_{i=1}^n \mathbb{E}_{P_i}\bigl[\phi_i(\mathbf{X})\bigr]. \numberthis
\end{equation}
Note that in the special case $n=2$, the triangle inequality implies that $c_1(\mathbf{x}_1,\mathbf{x}_2)=\Vert \mathbf{x}_1 - \mathbf{x}_2\Vert$ which leads to standard 1-Wasserstein distance in the optimal transport theory literature \cite{villani2009optimal}.
\end{ex}
\begin{ex}\label{Example: 2-Wasserstein} 
For the $2$-Wasserstein cost $c_2(\mathbf{x}_1,\cdots,\mathbf{x}_n)=\min_{\mathbf{x}'}\sum_{i}\Vert\mathbf{x}_i-\mathbf{x}' \Vert_2^2$, we have
    \begin{align}\label{Eq: W2_nary_dual}
    W_{c_2}(& P_1,\cdots,P_n) = \max_{\substack{\phi_{1:n}:\, \text{\rm convex}\\
    \forall \mathbf{x}: \, \frac{1}{n}\sum_i \phi_i(\mathbf{x})\le\frac{1}{2}\Vert \mathbf{x}\Vert_2^2
    }}\;
    \sum_{i=1}^n \mathbb{E}_{P_i}\bigl[\frac{1}{2}\Vert \mathbf{X}\Vert^2 - \phi_i^\star(\mathbf{X})\bigr].
    \end{align}
    In the above, $\phi^{\star}$ denotes the Fenchel conjugate defined as $\phi^\star(\mathbf{x}) \coloneqq \sup_{\mathbf{x}'} \mathbf{x}^\top\mathbf{x}' - \phi(\mathbf{x}')$. For the special case $n=2$, one can see $c_2(\mathbf{x}_1,\mathbf{x}_2)=\frac{1}{2}\Vert \mathbf{x}_1 - \mathbf{x}_2 \Vert^2_2$ which results in the standard 2-Wasserstein distance in the literature \cite{villani2009optimal}.
\end{ex}
The next result shows that in the case of the $2$-Wasserstein cost the optimal potential function $\phi^*_{1:n}$ will transport samples to a common probability domain matching the distribution $Q^*$ in \eqref{Eq: n_ary W cost equals min sum W} with the optimal sum of Wasserstein costs to the input distributions. This result has been previously shown in \cite{carlier2010matching}, and we present a new proof in the Appendix.
\begin{theorem}\label{Thm: Brenier thm}
Suppose that $\phi^*_1,\cdots,\phi^*_n$ denote the optimal solutions to \eqref{Eq: W2_nary_dual} for 2-Wasserstein dual optimization problem. Then,
\begin{equation}
    \forall\, 1\le i,j\le n:\quad \nabla \phi^{*\star}_i (\mathbf{X}_i) \stackrel{\text{\rm dist}}{=} \nabla \phi^{*\star}_j (\mathbf{X}_j). \nonumber
\end{equation}
In the above, each $\mathbf{X}_i$ denotes the $i$th random variable distributed according to $P_i$ and $\stackrel{\text{\rm dist}}{=}$ means the two random variables share an identical distribution.
\end{theorem}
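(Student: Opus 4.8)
The plan is to reduce the claim to the classical Brenier theorem for the \emph{binary} $2$-Wasserstein problem, using Proposition~\ref{Proposition: n-ary Wasserstein to standard} to produce the common target distribution. Let $\tilde c(\mathbf x,\mathbf x')=\Vert\mathbf x-\mathbf x'\Vert_2^2$ denote the binary cost underlying $c_2$ in \eqref{Eq: n_ary cost}, and let $Q^*$ attain $\min_Q\sum_{i=1}^n W_{\tilde c}(Q,P_i)$, so that Proposition~\ref{Proposition: n-ary Wasserstein to standard} yields $W_{c_2}(P_1,\cdots,P_n)=\sum_{i=1}^n W_{\tilde c}(Q^*,P_i)$. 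For two marginals, the conjugate form of Kantorovich duality (the $n=2$ specialization of Theorem~\ref{Thm: Kantorovich duality}, equivalently Example~\ref{Example: 2-Wasserstein}) expresses $W_{\tilde c}(Q^*,P_i)$ as the maximum over a convex potential $\psi$ of $\mathbb{E}_{P_i}[\tfrac12\Vert\mathbf X\Vert_2^2-\psi^\star(\mathbf X)]+\mathbb{E}_{Q^*}[\tfrac12\Vert\mathbf X\Vert_2^2-\psi(\mathbf X)]$; in particular, for \emph{any} convex $\psi$ one has the one-sided bound $W_{\tilde c}(Q^*,P_i)\ge\mathbb{E}_{P_i}[\tfrac12\Vert\mathbf X\Vert_2^2-\psi^\star(\mathbf X)]+\mathbb{E}_{Q^*}[\tfrac12\Vert\mathbf X\Vert_2^2-\psi(\mathbf X)]$.

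The crux is to show that the optimal multi-marginal potentials $\phi_{1:n}^*$ of \eqref{Eq: W2_nary_dual} are \emph{simultaneously} optimal, coordinate by coordinate, for the $n$ binary problems $W_{\tilde c}(Q^*,P_i)$ against this \emph{same} target $Q^*$. Applying the one-sided bound with $\psi=\phi_i^*$ (convex by feasibility in \eqref{Eq: W2_nary_dual}) and summing over $i$, the constraint $\tfrac1n\sum_i\phi_i^*\le\tfrac12\Vert\cdot\Vert_2^2$ makes $\sum_i\mathbb{E}_{Q^*}[\tfrac12\Vert\mathbf X\Vert_2^2-\phi_i^*(\mathbf X)]\ge0$, hence $\sum_i W_{\tilde c}(Q^*,P_i)\ge\sum_i\mathbb{E}_{P_i}[\tfrac12\Vert\mathbf X\Vert_2^2-\phi_i^{*\star}(\mathbf X)]$. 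But the left-hand side equals $W_{c_2}(P_1,\cdots,P_n)$ by Proposition~\ref{Proposition: n-ary Wasserstein to standard}, and the right-hand side also equals $W_{c_2}(P_1,\cdots,P_n)$ by Theorem~\ref{Thm: Kantorovich duality}/Example~\ref{Example: 2-Wasserstein}, since $\phi_{1:n}^*$ is a maximizer. Therefore every inequality above is an equality: $\tfrac1n\sum_i\phi_i^*=\tfrac12\Vert\cdot\Vert_2^2$ holds $Q^*$-almost surely, and---because the individual nonnegative defects $W_{\tilde c}(Q^*,P_i)-\mathbb{E}_{P_i}[\tfrac12\Vert\mathbf X\Vert_2^2-\phi_i^{*\star}(\mathbf X)]-\mathbb{E}_{Q^*}[\tfrac12\Vert\mathbf X\Vert_2^2-\phi_i^*(\mathbf X)]$ sum to zero---each $\phi_i^*$ attains the maximum defining $W_{\tilde c}(Q^*,P_i)$.

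It then remains to invoke Brenier's theorem for each of these binary transport problems. Under a standing absolute-continuity hypothesis on the $P_i$ (which I would make explicit), Brenier's theorem identifies the optimal transport map pushing $P_i$ onto $Q^*$ with the gradient of the convex $P_i$-side Kantorovich potential, which by the previous step is exactly $\phi_i^{*\star}$; hence $(\nabla\phi_i^{*\star})_{\#}P_i=Q^*$ for every $i$. Consequently $\nabla\phi_i^{*\star}(\mathbf X_i)$ has law $Q^*$ for all $i$, which is precisely the assertion that $\nabla\phi_i^{*\star}(\mathbf X_i)\stackrel{\text{\rm dist}}{=}\nabla\phi_j^{*\star}(\mathbf X_j)$ for all $i,j$.

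I expect the second paragraph to be the main obstacle. One must rule out the possibility that the multi-marginal optimum decomposes into binary optima with \emph{a priori different} targets, and the argument that pins every coordinate to the single barycenter $Q^*$ relies on (i) carefully matching the normalization and Fenchel-conjugate bookkeeping between \eqref{Eq: W2_nary_dual}, Proposition~\ref{Proposition: n-ary Wasserstein to standard}, and the binary dual, and (ii) the termwise-tightness deduction, which crucially uses the nonnegativity supplied by the feasibility constraint $\tfrac1n\sum_i\phi_i\le\tfrac12\Vert\cdot\Vert_2^2$ together with the two identities for $W_{c_2}(P_1,\cdots,P_n)$. A secondary technical point is justifying Brenier's theorem itself---existence and uniqueness of the monotone optimal map, and $P_i$-almost-everywhere differentiability of $\phi_i^{*\star}$---which I would secure by imposing a mild regularity (absolute continuity) condition on the input distributions.
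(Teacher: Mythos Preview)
Your proposal is correct and follows essentially the same approach as the paper: both arguments identify the barycenter $Q^*$ via Proposition~\ref{Proposition: n-ary Wasserstein to standard}, establish that each $\phi_i^*$ is an optimal Kantorovich potential for the binary problem $W_{\tilde c}(Q^*,P_i)$, and then invoke the classical Brenier theorem coordinate by coordinate to conclude $(\nabla\phi_i^{*\star})_\#P_i=Q^*$. The paper obtains the second step by reversing the min--max/max--min chain from the proof of Theorem~\ref{Thm: Kantorovich duality} (so that an optimal $\phi_{1:n}^*$ together with some $Q^*$ solves the saddle problem, hence each $\phi_i^*$ solves the inner binary dual), whereas you give the same conclusion via an explicit, self-contained termwise-tightness argument using the feasibility constraint and the two identities for $W_{c_2}$; the underlying content is identical.
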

\begin{proof}
We defer the proof to the Appendix.
\end{proof}
As implied by the above theorem, the gradients of optimal potential functions lead to transportation maps for transporting samples from the different input distributions to a common probability domain. As we discuss later, transporting input samples to a common probability distribution can help to reduce the generalization error of a distributed learning task.

\section{\texttt{FedOT}: Federated Learning based on Optimal Transport} \label{sec: FedOT}
\subsection{Federated Learning Setting}
We focus on a federated learning scenario with $n$ local nodes connected to a single parameter server. We assume that every node $i \in [n]$ observes $m$ training samples $\{(\mathbf{x}_{i,j},y_{i,j})\}_{j=1}^m$ which are independently sampled from distribution $P_i$. Note that the input distributions are in general different, leading to a non-i.i.d. federated learning problem.

To model the heterogeneity of the distributions across the network, we suppose that for each node $i$, there exists an invertible transportation map $\psi_i: \mathbb{R}^d\rightarrow \mathbb{R}^d$ that maps a sample $(\mathbf{X}_i,Y_i)$ observed by node $i$ to a common distribution, i.e.,
\begin{equation}
    \forall \,1\le i,j\le n: \;\; \bigl(\psi_i(\mathbf{X}_i),Y_i\bigr)\stackrel{\tiny\text{\rm dist}}{=} \bigl(\psi_j(\mathbf{X}_j),Y_j\bigr). \nonumber
\end{equation}
In the above, $\stackrel{\tiny\text{\rm dist}}{=}$ denotes an identical probability distribution for the transported samples. Therefore, the mappings $\psi_{1:n}$ transfer the input distributions across the network to a common probability domain. Furthermore, we assume that there exists a space of functions $\Psi=\{\psi_{\boldsymbol{\theta}}: \boldsymbol{\theta}\in\Theta\}$ parameterized by  $\boldsymbol{\theta}$ containing the underlying transportation map $\psi_i$'s in our described federated learning setting. 

In the above federated learning setting, one can simplify the  federated learning problem to finding a prediction rule $f_{\mathbf{w}}\in \ccalF$ which predicts label $Y$ from the transported data vector in the shared probability domain of $\psi_i(X_i)$'s. Here $\ccalF=\{f_{\mathbf{w}}: \, \mathbf{w}\in\mathcal{W} \}$ is the set of models for training the prediction rule parameterized by the vector $\bbw$. Since $\psi_i(X_i)$'s are identically distributed across the network, the collected transported samples from \emph{all} the nodes can be used to train the prediction rule $f_{\mathbf{w}}$. Note that after finding the optimal classification rule $f_{\mathbf{w}^*}$, every node $i$ can personalize the classification rule by combining the transportation function $\psi_i$ and $f_{\mathbf{w}^*}$. Here, the personalized classifier for node $i$ will be $f_{\mathbf{w}^*}(\psi_i(\cdot))$. 

{
\begin{remark}
According to the Brenier's theorem \cite{villani2009optimal,mccann2011five}, the existence of the invertible transportation maps $\psi_i: \mathbb{R}^d\rightarrow \mathbb{R}^d$ for $i=1,\ldots , n$ mapping client distribution $P_i$'s to a common domain is guaranteed under the regularity assumption that the input distributions are absolutely continuous with respect to one another. Furthermore, we note that our analysis requires this assumption  only  for the underlying client distributions and does not need the condition for the empirical distributions of training samples. 
\end{remark}
\begin{remark}
While the described setting requires the same marginal distribution $P_{Y}$ for every client's label variable $Y$, the optimal transport-based framework can be further extended to cases with heterogeneous marginal distributions. To do this, we need to extend the assumption on the clients' feature distribution $P_{\mathbf{X}}$  to the clients' conditional feature distribution $P_{\mathbf{X}|Y=y}$ for every label outcome $y\in\mathcal{Y}$. In the extended setting, we further assume that for every $y\in\mathcal{Y}$, invertible transportation map $\psi_{y,i}$'s exist such that the conditional feature distribution $P_{\psi_{y,i}(\mathbf{X}_i)|Y_i=y}$ is identical for different clients. In this work, our main focus is on the setting with heterogeneous feature distributions, as the gain attained by the optimal transport approach is obtained through leveraging the structures on the features distribution shifts. Nevertheless, we still note that the optimal transport approach can be further extended to learning settings with different marginal distributions on the label variable $Y$. 
\end{remark}
}

\subsection{FedOT as a Min-Max Optimization Problem}
In order to train a personalized classification rule $f_{\mathbf{w}}$ and transportation maps $\psi_{\boldsymbol{\theta}_{1:n}}$, we consider the following optimization problem:
\begin{align}
    \min_{\mathbf{w},\boldsymbol{\theta}_{1:n}}
    \widehat{\ccalL}(\bbw, \boldsymbol{\theta}_{1:n}),
    \,
    \text{s.t.}
    \,
    W_c\big(P_{\psi_{\boldsymbol{\theta}_1}({\mathbf{X}}_{1})},\cdots, P_{\psi_{\boldsymbol{\theta}_n}({\mathbf{X}}_{n})}\big)\le \eps. 
\end{align}
In the above problem, we denote the empirical risk under transport maps $\psi_{\boldsymbol{\theta}_{1:n}}$ as
\begin{align}
    \widehat{\ccalL}(\bbw, \boldsymbol{\theta}_{1:n})
    \coloneqq
    \frac{1}{mn}\sum_{i=1}^n\sum_{j=1}^m \, \ell\bigl(f_{\mathbf{w}}(\psi_{\boldsymbol{\theta}_i}(\mathbf{x}_{i,j})),y_{i,j} \bigr), \numberthis
\end{align}
which quantifies the empirical risk associated with the $mn$ transported data samples across the $n$ nodes and $W_c(\cdot,\cdots,\cdot)$ denotes the $n$-ary optimal transport cost which measures the distance among the input distributions. Ideally, one wants the $n$-ary optimal transport cost to take a zero value that is necessary for having the same probability distribution for different $\psi_{\boldsymbol{\theta}_i}(\mathbf{X}_i)$'s. However, due to the generalization error in estimating the optimal transport cost from finite training data we allow an $\epsilon$-bounded optimal transport cost in the above formulation. 

In our analysis, we transfer the constraint bounding the optimal transport cost to the objective via a Lagrangian penalty and study the following optimization problem for a non-negative constant $\lambda\ge 0$:
\begin{align}
    \min_{\mathbf{w},\boldsymbol{\theta}_{1:n}}\; \widehat{\ccalL}(\bbw, \boldsymbol{\theta}_{1:n})
    +
    \lambda W_c\bigl(P_{\psi_{\boldsymbol{\theta}_1}({\mathbf{X}}_{1})},\cdot\cdot , P_{\psi_{\boldsymbol{\theta}_n}({\mathbf{X}}_{n})}\bigr). 
\end{align}

In order to solve the above optimization problem, we apply the generalized Kantorovich duality in Theorem \ref{Thm: Kantorovich duality} and reduce the above optimization problem to a min-max optimization task:
\begin{align}\label{Eq: FedOT Min_max general}
    \min_{\mathbf{w},\boldsymbol{\theta}_{1:n}}\max_{\substack{\phi_{1:n}:\\
   \forall \mathbf{x}:\,\sum_i \phi_i(\mathbf{x})=0}} \widehat{\mathcal{L}}(\mathbf{w},\boldsymbol{\theta}_{1:n},\phi_{1:n}):=\frac{1}{mn}\sum_{i=1}^n\sum_{j=1}^m  \ell\bigl(f_{\mathbf{w}}(\psi_{\boldsymbol{\theta}_i}(\mathbf{x}_{i,j})),y_{i,j} )\bigr) 
   + \lambda  \phi^{\tilde{c}}_i(\psi_{\boldsymbol{\theta}_i}(\mathbf{x}_{i,j})).
\end{align}
We call the above min-max framework \emph{Federated Learning based on Optimal Transport (\texttt{FedOT})}. We note that \texttt{FedOT} represents a family of federated learning algorithms for different cost functions.   

To solve the above min-max problem of \texttt{FedOT} for neural network function variables $\phi_{1:n}$, we enforce the zero sum condition in the above problem through constraining every neural net in $\phi_{1:n}$ to share the same weights for all the layers before the last layer and satisfy a zero summation of the weights of the last layers. Here, for activation function $\rho(\cdot)$ and weight matrices $\mathbf{U}:=[U_1,\ldots,U_L],$ we let $\phi_{\mathbf{U}}$ represent the neural network's mapping to the last layer and $\mathbf{v}_{1:n}$  stand for the weights of the last layers with a zero sum, i.e., $\sum_{i} \mathbf{v}_i =\mathbf{0}$, and hence we use the following function variables:
\begin{align}
    &\phi_i(\mathbf{x}):=\mathbf{v}^{\top}_i\phi_{\mathbf{U}}(\mathbf{x}),\quad \phi_{\mathbf{U}}(\mathbf{x}) := \rho(U_L\rho(\cdots\rho(U_1\mathbf{x})\cdots)\\
    & \;\; \text{\rm s.t.}
    \quad \sum_{i=1}^n \mathbf{v}_i =\mathbf{0}. \nonumber
\end{align}
In the following, we characterize the \texttt{FedOT} learning problems for 1-Wasserstein and 2-Wasserstein cost functions as earlier defined in Examples \ref{Example: 1-Wasserstein} and \ref{Example: 2-Wasserstein}.
\begin{ex}\label{Example: 1-FedOT}
Consider the \texttt{FedOT} problem with the $1$-Wasserstein cost in Example \ref{Example: 1-Wasserstein}. This formulation with neural net $\phi_i$'s leads to the $1$-FedOT min-max problem:
\begin{align}\label{Eq: 1-FedOT}
    \min_{\mathbf{w},\boldsymbol{\theta}_{1:n}}\;\max_{\substack{\mathbf{v}_{1:n}, \bbU:\\ \mathbf{v}_i^\top \phi_{\mathbf{U}}\, \text{1-Lipschitz},\atop
   \sum_i \mathbf{v}_i=\mathbf{0}
   }} \;  
   \frac{1}{mn}\sum_{i=1}^n\sum_{j=1}^m \,\biggl[
   \ell\bigl(f_{\mathbf{w}}(\psi_{\boldsymbol{\theta}_i}(\mathbf{x}_{i,j})),y_{i,j} \bigr) 
   + \lambda \mathbf{v}^T_i \phi_{\mathbf{U}}(\psi_{\boldsymbol{\theta}_i}(\mathbf{x}_{i,j}))\biggr].
\end{align}
\end{ex}
\begin{ex}\label{Example: 2-FedOT}
Consider the \texttt{FedOT} problem with the $2$-Wasserstien cost in Example \ref{Example: 2-Wasserstein}. This formulation leads to the $2$-FedOT min-max problem:
\begin{align}\label{Eq: 2-FedOT}
    \min_{\mathbf{w},\boldsymbol{\theta}_{1:n}}&\max_{\substack{\mathbf{v}_{1:n}, \bbU:\\ \mathbf{v}_i^\top \phi_{\mathbf{U}}\, \text{1-convex},\\
   \sum_i \mathbf{v}_i=\mathbf{0}
   }}   
   \frac{1}{mn}\sum_{i=1}^n\sum_{j=1}^m\biggl[ \ell\bigl(f_{\mathbf{w}}(\psi_{\boldsymbol{\theta}_i}(\mathbf{x}_{i,j})),y_{i,j} \bigr)  \!+\! \frac{\lambda}{2}\Vert\psi_{\boldsymbol{\theta}_i}(\mathbf{x}_{i,j})\Vert^2
   \!-\! \lambda \bigl(\mathbf{v}^\top_i \phi_{\mathbf{U}}\bigr)^\star(\psi_{\boldsymbol{\theta}_i}(\mathbf{x}_{i,j}))\biggr].\numberthis
\end{align}
Here, a function $g(\mathbf{x})$ is called $1$-convex if $g(\mathbf{x})+\frac{1}{2}\Vert \mathbf{x} \Vert^2_2$ is a convex function. Also, $(\mathbf{v}^\top_i \phi_{\mathbf{U}})^\star$ denotes the Fenchel conjugate of $\mathbf{v}^\top_i \phi_{\mathbf{U}}$.
\end{ex}
Next, we reduce \eqref{Eq: 2-FedOT} to an $L_2$-regularized min-max optimization problem with no Fenchel conjugates.
\begin{prop}\label{Proposition: FedOT Regularized}
Suppose that the maximization variables in \eqref{Eq: 2-FedOT} are constrained such that $\mathbf{v}^\top_i\phi_{\mathbf{U}}$ is $\gamma$-smooth, i.e., $\nabla_{\mathbf{x}}\mathbf{v}^\top_i\phi_{\mathbf{U}}(\mathbf{x})$ is $\gamma$-Lipschitz w.r.t. $\mathbf{x}$, and the operator norm of every layer of neural net $\phi_{\mathbf{U}}$ satisfies $\Vert U_i\Vert_2\le 1$. Then, the min-max objective in \eqref{Eq: 2-FedOT} is lower-bounded by:
\begin{align}\label{Eq: 2-FedOT Regularized}
   \frac{1}{mn}\sum_{i=1}^n\sum_{j=1}^m \biggl[\, \ell\bigl(f_{\mathbf{w}}(\psi_{\boldsymbol{\theta}_i}(\mathbf{x}_{i,j})),y_{i,j} \bigr)  
   + \lambda \mathbf{v}^\top_i \phi_{\mathbf{U}}\bigr(\psi_{\boldsymbol{\theta}_i}(\mathbf{x}_{i,j})) -\frac{\lambda}{1-\gamma}\bigl(\Vert\mathbf{v}_i\Vert_2^2+\Vert\mathbf{U}\Vert_F^2\bigr)\biggr],
\end{align}
where $\Vert\mathbf{U}\Vert_F$ denotes the Frobenius norm of $\mathbf{U}=[U_1,\ldots,U_L]$ defined as $\Vert\mathbf{U}\Vert^2_F:=\sum_{i=1}^L\Vert U_i\Vert^2_F$.
\end{prop}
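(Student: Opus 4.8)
The plan is to peel off the loss terms (identical on both sides), reduce the claim to a single per-sample inequality, and prove that inequality by completing a square, using the descent lemma furnished by $\gamma$-smoothness together with the layer-wise operator-norm bound to control the gradient of $\mathbf v_i^\top\phi_{\mathbf U}$.

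First I would dispose of $\lambda=0$, where \eqref{Eq: 2-FedOT} and \eqref{Eq: 2-FedOT Regularized} both reduce to $\min_{\mathbf w,\boldsymbol\theta_{1:n}}\tfrac1{mn}\sum_{i,j}\ell\bigl(f_{\mathbf w}(\psi_{\boldsymbol\theta_i}(\mathbf x_{i,j})),y_{i,j}\bigr)$, and then assume $\lambda>0$. Fix any $(\mathbf w,\boldsymbol\theta_{1:n})$ and any $(\mathbf v_{1:n},\mathbf U)$ feasible for the (further constrained) inner maximization: $\sum_i\mathbf v_i=\mathbf 0$, every $g_i:=\mathbf v_i^\top\phi_{\mathbf U}$ is $1$-convex and $\gamma$-smooth, and $\Vert U_l\Vert_2\le1$ for each layer (the constant $\tfrac1{1-\gamma}$ forces $\gamma<1$, which combined with $1$-convexity makes $g_i+\tfrac12\Vert\cdot\Vert_2^2$ strongly convex). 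By \eqref{Eq: FedOT Min_max general} specialized to the $2$-Wasserstein cost, the $(i,j)$-summand of the \eqref{Eq: 2-FedOT} objective equals $\ell\bigl(f_{\mathbf w}(\mathbf z_{ij}),y_{i,j}\bigr)+\lambda\, g_i^{\tilde c}(\mathbf z_{ij})$ with $\mathbf z_{ij}:=\psi_{\boldsymbol\theta_i}(\mathbf x_{i,j})$ and $g_i^{\tilde c}(\mathbf z)=\min_{\mathbf z'}\bigl[\tfrac12\Vert\mathbf z-\mathbf z'\Vert_2^2+g_i(\mathbf z')\bigr]=\tfrac12\Vert\mathbf z\Vert_2^2-\bigl(g_i+\tfrac12\Vert\cdot\Vert_2^2\bigr)^\star(\mathbf z)$; the last expression (reading the conjugate in \eqref{Eq: 2-FedOT} as that of the convexified potential, consistent with the convexity constraint of Example \ref{Example: 2-Wasserstein}) is exactly the transport term displayed in \eqref{Eq: 2-FedOT}. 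Since the $\ell$-terms match, $\lambda>0$, and averaging over $(i,j)$, then $\max$ over feasible $(\mathbf v_{1:n},\mathbf U)$, then $\min$ over $(\mathbf w,\boldsymbol\theta_{1:n})$ all preserve inequalities, it suffices to prove, for every $i,j$, the pointwise estimate $g_i^{\tilde c}(\mathbf z_{ij})\ge g_i(\mathbf z_{ij})-\tfrac1{1-\gamma}\bigl(\Vert\mathbf v_i\Vert_2^2+\Vert\mathbf U\Vert_F^2\bigr)$.

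To prove this, fix $i,j$ and abbreviate $g:=g_i$, $\mathbf z:=\mathbf z_{ij}$. Because $g^{\tilde c}(\mathbf z)=\min_{\mathbf z'}\bigl[\tfrac12\Vert\mathbf z-\mathbf z'\Vert_2^2+g(\mathbf z')\bigr]$, it is enough that for every $\mathbf z'$,
\begin{equation*}
g(\mathbf z)-g(\mathbf z')\;\le\;\tfrac12\Vert\mathbf z-\mathbf z'\Vert_2^2+\tfrac1{1-\gamma}\bigl(\Vert\mathbf v_i\Vert_2^2+\Vert\mathbf U\Vert_F^2\bigr).
\end{equation*}
The $\gamma$-smoothness of $g$ gives the descent inequality $g(\mathbf z)-g(\mathbf z')\le\langle\nabla g(\mathbf z'),\mathbf z-\mathbf z'\rangle+\tfrac{\gamma}{2}\Vert\mathbf z-\mathbf z'\Vert_2^2\le\Vert\nabla g(\mathbf z')\Vert_2\Vert\mathbf z-\mathbf z'\Vert_2+\tfrac{\gamma}{2}\Vert\mathbf z-\mathbf z'\Vert_2^2$. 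By the chain rule $\nabla g(\mathbf z')=\bigl(\mathrm D\phi_{\mathbf U}(\mathbf z')\bigr)^\top\mathbf v_i$, and assuming the activation $\rho$ is $1$-Lipschitz, the Jacobian $\mathrm D\phi_{\mathbf U}(\mathbf z')$ has operator norm at most $\prod_{l=1}^L\Vert U_l\Vert_2\le1$, so $\Vert\nabla g(\mathbf z')\Vert_2\le\Vert\mathbf v_i\Vert_2$ uniformly in $\mathbf z'$. Writing $t:=\Vert\mathbf z-\mathbf z'\Vert_2\ge0$, the left side is therefore at most $\Vert\mathbf v_i\Vert_2\, t-\tfrac{1-\gamma}{2}t^2+\tfrac12 t^2$, and maximizing $\Vert\mathbf v_i\Vert_2\, t-\tfrac{1-\gamma}{2}t^2$ over $t\ge0$ at $t=\Vert\mathbf v_i\Vert_2/(1-\gamma)$ yields the value $\tfrac{\Vert\mathbf v_i\Vert_2^2}{2(1-\gamma)}\le\tfrac1{1-\gamma}\bigl(\Vert\mathbf v_i\Vert_2^2+\Vert\mathbf U\Vert_F^2\bigr)$, which is the displayed bound.

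Averaging the pointwise bound over $(i,j)$, adding the common loss terms, and taking $\max$ over the (constrained) maximization variables and then $\min$ over $(\mathbf w,\boldsymbol\theta_{1:n})$ gives the claimed lower bound \eqref{Eq: 2-FedOT Regularized}. The hypotheses are load-bearing in exactly two spots: $\gamma<1$ keeps the quadratic $\Vert\mathbf v_i\Vert_2\, t-\tfrac{1-\gamma}{2}t^2$ bounded above (and makes $g_i^{\tilde c}$ finite), while $1$-Lipschitzness of $\rho$ together with $\Vert U_l\Vert_2\le1$ turns $\Vert\nabla g_i\Vert_2$ into a bound by $\Vert\mathbf v_i\Vert_2$; the harmless slack $\tfrac{\Vert\mathbf v_i\Vert_2^2}{2(1-\gamma)}\le\tfrac1{1-\gamma}\bigl(\Vert\mathbf v_i\Vert_2^2+\Vert\mathbf U\Vert_F^2\bigr)$ is the reason the resulting surrogate also regularizes the weight matrices $\mathbf U$. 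I do not anticipate a genuine obstacle; the one thing one must see is to read the transport term in \eqref{Eq: 2-FedOT} as a $c$-transform / infimal convolution $\min_{\mathbf z'}\bigl[\tfrac12\Vert\mathbf z-\mathbf z'\Vert_2^2+g_i(\mathbf z')\bigr]$, so the task becomes a clean uniform upper bound on $g_i(\mathbf z)-g_i(\mathbf z')$ rather than an awkward attempt to lower bound a Fenchel conjugate directly.
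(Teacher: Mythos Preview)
Your proposal is correct and essentially the same approach as the paper's: both interpret the transport term as the $c$-transform $g_i^{\tilde c}(\mathbf z)=\min_{\mathbf z'}\bigl[\tfrac12\Vert\mathbf z-\mathbf z'\Vert_2^2+g_i(\mathbf z')\bigr]$, apply the descent inequality from $\gamma$-smoothness, and bound $\Vert\nabla g_i\Vert_2\le\Vert\mathbf v_i\Vert_2$ via the chain rule with the $1$-Lipschitz activation and $\Vert U_l\Vert_2\le1$. The only cosmetic difference is that the paper anchors the descent lemma at the evaluation point $\mathbf z$ (so $\Vert\nabla g_i(\mathbf z)\Vert_2^2$ appears directly, then is bounded), whereas you anchor at $\mathbf z'$, use Cauchy--Schwarz, and need the gradient bound uniformly in $\mathbf z'$; both routes land on $g_i^{\tilde c}(\mathbf z)\ge g_i(\mathbf z)-\tfrac{1}{2(1-\gamma)}\Vert\mathbf v_i\Vert_2^2$, which is then relaxed to the stated bound with the harmless $\Vert\mathbf U\Vert_F^2$ slack.
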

\begin{proof}
We defer the proof to the Appendix.
\end{proof}
Note that if $\mathbf{v}^\top_i \phi_{\mathbf{U}}\bigr(\psi_{\boldsymbol{\theta}_i}(\mathbf{x}))$ is $\gamma'$-smooth as a function of $\mathbf{v}_i,\mathbf{U}$ where $\gamma'<\frac{1}{1-\gamma}$, then the min-max objective in \eqref{Eq: 2-FedOT Regularized} will be $\lambda\bigl(\frac{1}{1-\gamma}-\gamma'\bigr)$-strongly concave in terms of the maximization variables, resulting in a nonconvex strongly-concave min-max problem. We later show a federated gradient descent ascent (GDA) algorithm can solve such a min-max problem to find a first-order stationary min-max solution.   


\section{Generalization and Optimization Properties of \texttt{FedOT}} \label{sec: guarantees}
\subsection{Generalization Guarantees}
As discussed in the previous section, \texttt{FedOT} formulates the federated learning problem through the min-max optimization problem in \eqref{Eq: FedOT Min_max general}. In the heterogeneous case where every agent $i$ observes samples drawn from a different distribution $P_i$, the min-max objective of \eqref{Eq: FedOT Min_max general} provides an empirical estimation of the following true min-max objective:
\begin{align}
     \mathcal{L}(\mathbf{w},\boldsymbol{\theta}_{1:n},\phi_{1:n})  :=\, \frac{1}{n}\sum_{i=1}^n\mathbb{E}_{P_i}\bigl[\, \ell\bigl(f_{\mathbf{w}}(\psi_{\boldsymbol{\theta}_i}(\mathbf{X})),Y \bigr)  + \lambda \phi^{\tilde{c}}_i(\psi_{\boldsymbol{\theta}_i}(\mathbf{X}))\,\bigr]. 
\end{align}
With no assumptions on the optimal transport functions, estimating the above objective for all $\phi_i$'s will require an exponentially growing number of training samples in the dimension of data variable $\mathbf{X}$ \cite{panaretos2019statistical}. In order to mitigate such an exponential complexity, we assume that for any feasible underlying $P_i$, the optimal potential functions $\phi^*_{1:n}$ belong to a set of functions $\Phi$ with bounded complexity. Under the assumption that for all feasible $\theta_i$'s, $\phi^*_i\in\Phi$ is satisfied for optimal $\phi_i$'s one can equivalently solve the min-max problem \eqref{Eq: FedOT Min_max general} with the additional constraints $\forall i:\: \phi_i\in\Phi$, which as will be shown attains a bounded generalization error.

In our generalization analysis, we use the following standard definition of the covering number $\mathcal{N}(\mathcal{F} ,\epsilon,\Vert\cdot\Vert_{\infty})$ of a set of functions $\mathcal{F}$ with respect to the $L_\infty$-norm:
\begin{align*}
   \mathcal{N}(\mathcal{F} ,\epsilon,\Vert\cdot\Vert_{\infty}) := \min\bigl\{ N \in\mathbb{N}: \text{\rm an $\epsilon$-covering of}\; \mathcal{F} \: \text{\rm exists w.r.t. } \Vert\cdot\Vert_{\infty}\, \text{\rm with size } N\bigr\}. 
\end{align*}
In order to simplify our theoretical statements, we use the following notation in our theorems where $M:=\sup_{f\in\mathcal{F}, \mathbf{x}\in\mathcal{X}} f(\mathbf{x})$ and 
$
    \mathcal{V}(\mathcal{F}):= \int_{0}^1 \sqrt{\log\mathcal{N}(\mathcal{F} ,M\epsilon,\Vert\cdot\Vert_{\infty})}\,d\epsilon.
$
\begin{theorem}
Suppose that the loss function $\ell$ is $L_{\ell}$-Lipschitz and the expected loss is bounded by $M$ under all feasible distributions. Assume that for any $\mathbf{w}\in\mathcal{W}$, $\phi\in\Phi$, $\boldsymbol{\theta}\in\Theta$, $f_\mathbf{w}$, $\phi$, $\psi_{\boldsymbol{\theta}}$ are $L_{\bbw}$, $L_\phi$, $L_{\theta}$-Lipschitz.  Then, $ \forall \delta >0$ with probability at least $1-\delta$ the following holds for all $ \mathbf{w}\in\mathcal{W}$ in \eqref{Eq: 1-FedOT}
\begin{align*}
    &\quad \biggl\vert \min_{\boldsymbol{\theta}_{1:n}}\max_{\substack{\phi_{1:n}\in  \Phi:\atop
   \forall \mathbf{x}:\;\sum_i \phi_i(\mathbf{x})=0
   }}
   \mathcal{L}(\mathbf{w},\boldsymbol{\theta}_{1:n},\phi_{1:n}) -
   \min_{\boldsymbol{\theta}_{1:n}}\max_{\substack{\phi_{1:n}\in  \Phi:\atop
   \forall \mathbf{x}:\;\sum_i \phi_i(\mathbf{x})=0
   }}
   \widehat{\mathcal{L}}(\mathbf{w},\boldsymbol{\theta}_{1:n},\phi_{1:n}) \biggr\vert\\
    &\leq
    \mathcal{O}\Biggl({L_{\ell} L_{\bbw} M}\sqrt{\frac{\bigl(\mathcal{V}(\mathcal{W})+\mathcal{V}(\Theta)\big)^2\log(1/\delta)}{mn}} +
    {\lambda L_\phi L_\theta M} \sqrt{\frac{\big(\mathcal{V}(\Phi)+\mathcal{V}(\Theta)\big)^2\log(n/\delta)}{m}}+\frac{ML_w L_\ell}{\lambda} \Biggr).
\end{align*}
\end{theorem}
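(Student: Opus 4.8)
The plan is to bound $\Delta(\mathbf w):=\bigl|\min_{\boldsymbol\theta_{1:n}}\max_{\phi_{1:n}}\mathcal L(\mathbf w,\boldsymbol\theta_{1:n},\phi_{1:n})-\min_{\boldsymbol\theta_{1:n}}\max_{\phi_{1:n}}\widehat{\mathcal L}(\mathbf w,\boldsymbol\theta_{1:n},\phi_{1:n})\bigr|$ by a uniform-convergence argument, after splitting the objective into a classification part and an optimal-transport dual part. Writing $G(\mathbf w,\boldsymbol\theta_{1:n})=\max_{\phi_{1:n}}\mathcal L$ and $\widehat G$ for its empirical analogue, min-max stability (the elementary bound $|\min f-\min g|\le\sup_{\boldsymbol\theta_{1:n}}|f-g|$, obtained by evaluating $f-g$ and $g-f$ at the two minimizers) reduces the task to controlling $\sup_{\boldsymbol\theta_{1:n}\in\Theta_0}|G-\widehat G|$, where $\Theta_0$ is any set containing both the empirical and the population minimizer. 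We take $\Theta_0$ to be the set of transport parameters whose transported distributions satisfy $W_c\bigl(P_{\psi_{\boldsymbol\theta_1}(\mathbf X_1)},\ldots\bigr)\lesssim M/\lambda$: by Theorem~\ref{Thm: Kantorovich duality} the inner maximum of the transport term is exactly $\lambda W_c$, so comparing $G$ (resp.\ $\widehat G$) at its minimizer with its value at a $\boldsymbol\theta^\star_{1:n}$ realizing the true transport maps (which lies in $\Theta$ by the modeling assumption and for which $W_c=0$), together with boundedness of the loss, forces both minimizers into $\Theta_0$.

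On $\Theta_0$ we use $|G-\widehat G|\le A(\mathbf w,\boldsymbol\theta_{1:n})+\lambda B(\boldsymbol\theta_{1:n})$, where $A$ is the gap between the population and empirical stratified classification risks $\frac1n\sum_i\mathbb E_{P_i}[\ell(f_{\mathbf w}(\psi_{\boldsymbol\theta_i}(\mathbf X)),Y)]$ and $\frac1{mn}\sum_{i,j}\ell(f_{\mathbf w}(\psi_{\boldsymbol\theta_i}(\mathbf x_{i,j})),y_{i,j})$, and $B=|W_c^{\mathrm{pop}}-W_c^{\mathrm{emp}}|$ (using $\phi^{\tilde c}_i=\phi_i$ for the $1$-Wasserstein cost of \eqref{Eq: 1-FedOT}). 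For $B$, the dual representation bounds it by $\sup_{\phi_{1:n}\in\Phi}\bigl|\frac1n\sum_i(\mathbb E_{P_i}[\phi_i(\psi_{\boldsymbol\theta_i}(\mathbf X))]-\frac1m\sum_j\phi_i(\psi_{\boldsymbol\theta_i}(\mathbf x_{i,j})))\bigr|$. Each client's term is the empirical deviation of the composed class $\Phi\circ\Psi$, whose $L_\infty$-covering number is at most $\mathcal N(\Phi,\cdot)\mathcal N(\Psi,\cdot)$ after rescaling radii by $L_\phi,L_\theta$; symmetrization, contraction, and Dudley's entropy integral give expected deviation $O(L_\phi L_\theta M\sqrt{(\mathcal V(\Phi)+\mathcal V(\Theta))^2/m})$, McDiarmid (bounded difference $O(M/m)$) makes it high-probability, and a union bound over the $n$ clients contributes the $\log(n/\delta)$. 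Multiplying by $\lambda$ yields the second term of the theorem.

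For the classification gap $A$ on $\Theta_0$ we exploit that $\ell\circ f_{\mathbf w}$ is $L_\ell L_{\mathbf w}$-Lipschitz and the binary cost is a norm, so $|\mathbb E_Q[\ell(f_{\mathbf w},Y)]-\mathbb E_{Q'}[\ell(f_{\mathbf w},Y)]|\le L_\ell L_{\mathbf w}W_1(Q,Q')$. Applying this with $Q'$ the common barycenter distribution $Q^\star$ of Proposition~\ref{Proposition: n-ary Wasserstein to standard}, both the stratified population and empirical risks differ from the corresponding risk under (the population, resp.\ empirical, version of) $Q^\star$ by at most $L_\ell L_{\mathbf w}W_c=O(L_\ell L_{\mathbf w}M/\lambda)$ on $\Theta_0$, which accounts for the $ML_{\mathbf w}L_\ell/\lambda$ term. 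The remaining comparison $|\mathbb E_{Q^\star}[\ell(f_{\mathbf w},Y)]-\mathbb E_{\widehat Q^\star}[\ell(f_{\mathbf w},Y)]|$ lives on a single common domain, so the $mn$ transported samples act as one sample, and a standard uniform-convergence bound over $\mathcal F\circ\Psi$ (symmetrization, contraction, Dudley with covering number $\mathcal N(\mathcal F,\cdot)\mathcal N(\Psi,\cdot)$, bounded difference $O(M/(mn))$) gives $O(L_\ell L_{\mathbf w}M\sqrt{(\mathcal V(\mathcal W)+\mathcal V(\Theta))^2\log(1/\delta)/(mn)})$, the first term. Assembling the estimates for $A$ and $B$ via the triangle inequality, and a union bound over the two high-probability events, proves the claim for all $\mathbf w\in\mathcal W$.

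The hard part is the classification step: one must genuinely use that the Lagrangian penalty confines $\boldsymbol\theta_{1:n}$ to $\Theta_0$, so that the stratified risk over $n$ non-identically-distributed groups of $m$ samples can be treated as an $mn$-sample problem on a common domain — this is precisely what upgrades the naive $\sqrt{1/m}$ rate to $\sqrt{1/(mn)}$ at the price of the $ML_{\mathbf w}L_\ell/\lambda$ bias. Making this rigorous requires controlling the empirical barycenter $\widehat Q^\star$ and verifying that restricting to $\Theta_0$ (rather than all of $\Theta^n$) keeps the $\boldsymbol\theta$-dependence of the common-domain comparison from reintroducing a $\sqrt n$ factor; by contrast, the transport-term estimate and the min-max stability reduction are comparatively routine.
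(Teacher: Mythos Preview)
Your proposal is correct and follows essentially the same route as the paper's proof: split the min-max objective into the classification risk and the transport-dual term, handle the latter client-by-client with Dudley's entropy integral over $\Phi\circ\Psi$ and a union bound over the $n$ nodes, and handle the former by using the Lagrangian penalty to force the minimizing $\boldsymbol\theta_{1:n}$ into a region where $W_c\lesssim M/\lambda$, so that the $n$ groups of $m$ samples can be compared (up to the $ML_{\mathbf w}L_\ell/\lambda$ bias) to a single $mn$-sample problem on one common domain. The only stylistic difference is your choice of reference distribution: you anchor both population and empirical stratified risks at the Wasserstein barycenter $Q^\star$ of Proposition~\ref{Proposition: n-ary Wasserstein to standard}, whereas the paper picks the transported distribution of a fixed client (say client~1), $P_{\psi_{\boldsymbol\theta_1}(\mathbf X),Y}$, and then covers over $\boldsymbol\theta_1\in\Theta$ to recover the $\mathcal V(\Theta)$ contribution in the first term. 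The paper's choice sidesteps the need to control an ``empirical barycenter'' $\widehat Q^\star$ that you flag as a subtlety, at the modest cost of an extra covering argument over one copy of $\Theta$; your barycenter version is more symmetric but, as you note, requires a bit more care to make rigorous. Either anchoring works and yields the same bound.
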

\begin{proof}
We defer the proof to the Appendix.
\end{proof}
The above theorem suggests that the sample complexity will scale linearly with $mn$, which is the total number of samples observed in the network, under the condition that $\mathcal{V}(\Phi)+\mathcal{V}(\Theta)< \frac{\mathcal{V}(\mathcal{\mathcal{W})}}{n}$, i.e., if the complexity measure of the classifier function space $\mathcal{W}$ is lower-bounded by the product of the number of users and the total complexity measure of $\Phi$ and $\Theta$.

\subsection{Optimization Guarantees}

\begin{algorithm*}
\textbf{Initialize} initial models $(\bbw_0, \bbv_0)$, stepsizes $\eta_1, \eta_2$, number of local updates $\tau$\\
\For{$t = 1, \cdots, T-1$}{
    \If{$t \nmid \tau$}{
    \vspace{-.3cm}
    \begin{align} 
        \bbw^i_{t+1} = \bbw^i_t - \eta_1 \sgrw \ccalLhat_i(\bbw^i_t, \bbv^i_t) \quad \text{and} \quad
        \bbv^i_{t+1} = \bbv^i_t + \eta_2 \sgrv \ccalLhat_i(\bbw^i_t, \bbv^i_t) 
    \end{align}
    \vspace{-.3cm}
    }
    \Else{
    \vspace{-.3cm}
    \begin{align} 
        \bbw^i_{t+1} = \frac{1}{n} \sum_{k=1}^{n} \left[ \bbw^k_t - \eta_1 \sgrw \ccalLhat_k(\bbw^k_t, \bbv^k_t) \right] \quad \text{and} \quad
        \bbv^i_{t+1} = \frac{1}{n} \sum_{k=1}^{n} \left[ \bbv^k_t + \eta_2 \sgrv \ccalLhat_k(\bbw^k_t, \bbv^k_t) \right]
    \end{align}
    \vspace{-.3cm}
    }
}
\textbf{Output} $\wbar_{T} \!= \! \frac{1}{n} \sum_{i=1}^{n} \bbw^i_T$ and $\vbar_{T}\! =\! \frac{1}{n} \sum_{i=1}^{n} \bbv^i_T$
\caption{\texttt{FedOT-GDA}}\label{alg: gda}
\end{algorithm*}\vspace{3mm}

To solve \texttt{FedOT} nonconvex-strongly-concave minimax problem \eqref{Eq: 2-FedOT Regularized}, we propose a gradient descent-ascent (GDA) method in Algorithm \ref{alg: gda}, namely \texttt{FedOT-GDA}, and further analyze its optimization properties. For the purpose of readability, we present our method and results using the following notation for  the minimax formulation:
\begin{equation} \label{eq: minimax w v}
    \min_{\bbw \in \ccalW} \max_{\bbv \in \ccalV} \ccalLhat(\bbw, \bbv) 
    \coloneqq 
    \frac{1}{n}\sum_{i=1}^{n} \ccalLhat_i(\bbw, \bbv),
\end{equation}
where each $\ccalLhat_i$ denotes the local loss function corresponding to node $i$'s samples. Here, $\bbw$ and $\bbv$ respectively denote the minimization and maximization variables described in  \eqref{Eq: 2-FedOT Regularized}, i.e. $\bbw = \{\bbw, \bbtheta_{1:n}\}$ and $\bbv = \{\bbv_{1:n}, \bbU\}$. We propose the following iterative GDA routine summarized in Algorithm \ref{alg: gda}. Let us denote by $(\bbw^i_t, \bbv^i_t)$ the local variable corresponding to node $i$ at iteration $t$. In every round, each node $i$ updates its local models $(\bbw^i_t, \bbv^i_t)$ using the stepsizes $\eta_1, \eta_2$ for $\tau$ successive iterations. Then, all updated local variables are uploaded to the parameter server and the corresponding averages are sent back to local nodes as the initial point for the next round of updates. There, $\sgrw \ccalLhat_i$ and $\sgrv \ccalLhat_i$ denote stochastic gradients of local losses w.r.t. their first and second arguments. It is important to note that \texttt{FedOT-GDA} imposes small communication (with periodic synchronization) and computation burden (by one gradient computation per iteration) on the network which is essential in federated learning methods.

As mentioned in Section \ref{sec: FedOT}, for smooth enough loss functions, the minimax objective in \eqref{Eq: 2-FedOT Regularized} is nonconvex-strongly-concave. That is, $\ccalLhat(\bbw, \bbv)$ in \eqref{eq: minimax w v} is nonconvex in $\bbw$ and strongly-concave in $\bbv$. The following set of assumptions formally characterizes the setting.
\vspace{-1mm}
\begin{assumption} \label{assumption: loss}
(i)  $\ccalV$ is a convex and bounded set with a diameter $D$. (ii) Local functions $\ccalLhat_i(\bbw, \bbv)$ have $L$-Lipchits gradients and are $\mu$-strongly concave in $\bbv$. That is, for both $* \in \{\bbw, \bbv\}$
\begin{align} 
    \Vert \gr_{*} \ccalLhat_i(\bbw, \bbv) - \gr_{*} \ccalLhat_i(\bbw', \bbv') \Vert^2 \leq
    L^2 \left( \Vert \bbw - \bbw'\Vert^2 + \Vert \bbv - \bbv'\Vert^2 \right).
\end{align}
We denote the condition number by $\kappa \coloneqq L / \mu$. (iii) (Gradient Diversity) There are constants $\rho_{\bbw}$ and $\rho_{\bbv}$ such that for both $* \in \{\bbw, \bbv\}$, we have that
$ 
    \frac{1}{n} \sum_{i=1}^{n} \Vert \gr_{*} \ccalLhat_i(\bbw, \bbv) - \gr_{*} \ccalLhat(\bbw, \bbv) \Vert^2 \leq \rho_{*}^2.
$
\end{assumption}

Since the global loss function $\ccalLhat(\bbw, \bbv)$ is nonconvex w.r.t. the minimization variable $\bbw$, we aim to find $\epsilon$-stationary solutions for the primal function $\Lambda(\bbw) \coloneqq \max_{\bbv \in \ccalV} \ccalLhat(\bbw, \bbv)$. Next theorem characterizes the convergence rate of the proposed \texttt{FedOT-GDA} in Algorithm \ref{alg: gda} to find a stationary solution for $\min_{\bbw \in \ccalW}\Lambda(\bbw)$.
\begin{theorem} \label{thm: opt}
Consider the iterates $\{\bbw^i_t, \bbv^i_t\}$ in Algorithm \ref{alg: gda} and let Assumption \ref{assumption: loss} hold. Moreover, assume that the local stochastic gradients are unbiased and variance bounded, i.e., $\E \Vert \sgr_{*}\ccalLhat_i (\bbw, \bbv) - \gr_{*} \ccalLhat_i(\bbw, \bbv) \Vert^2 \leq \sigma_{*}^2$ for $* \in \{\bbw, \bbv\}$. Then, there exists iteration $t \in \{0, \cdots, T-1\}$ for which
\begin{align} \label{eq: Thm opt}
    \E \Vert\gr \Lambda(\wbar_{t}) \Vert^2
    \leq
    \ccalO \Bigg( \frac{\Delta_{\Lambda}}{\eta_1 T}  
    +
    \frac{\kappa^3 L D^2}{\eta_2 T} 
    +
    \eta_1 \frac{\sigmaw}{n}
    +
    \eta_2 \kappa^2 L \frac{\sigmav}{n} 
    +
    \eta^2_{\sigma}\kappa^2 L^2 \tau 
    +
    \eta^2_{\rho}\kappa^2 L^2\tau^2 
    \Bigg),
\end{align}
where $\eta^2_{\sigma} \coloneqq \eta_1^2\sigmaw +\eta_2^2\sigmav$, $\eta^2_{\rho} \coloneqq \eta_1^2 \rho_{\bbw}^2+\eta_2^2 \rho_{\bbv}^2$ and $\Delta_{\Lambda} \coloneqq \Lambda(\bbw_0) - \min_{\bbw \in \ccalW} \Lambda(\bbw)$.
\end{theorem}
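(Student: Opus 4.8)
The plan is to run a Lyapunov-type analysis for nonconvex--strongly-concave minimax optimization in the style of \cite{lin2019gradient}, adapted to the federated setting with $\tau$ local steps per round. The first step is to introduce the \emph{virtual averaged sequences} $\wbar_t \coloneqq \frac{1}{n}\sum_{i=1}^n \bbw^i_t$ and $\vbar_t \coloneqq \frac{1}{n}\sum_{i=1}^n \bbv^i_t$, defined at \emph{every} iteration and not only at synchronization rounds. Because the synchronization step in Algorithm~\ref{alg: gda} replaces every local model by the network average, one checks directly that for all $t$,
\begin{align*}
\wbar_{t+1} &= \wbar_t - \eta_1 \cdot \frac{1}{n}\sum_{i=1}^n \sgrw\ccalLhat_i(\bbw^i_t,\bbv^i_t), &
\vbar_{t+1} &= \vbar_t + \eta_2 \cdot \frac{1}{n}\sum_{i=1}^n \sgrv\ccalLhat_i(\bbw^i_t,\bbv^i_t),
\end{align*}
so the averaged iterates obey a centralized-looking GDA recursion, except that the gradients are evaluated at the \emph{scattered} local points $(\bbw^i_t,\bbv^i_t)$ rather than at $(\wbar_t,\vbar_t)$. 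I would also record the classical facts (see \cite{lin2019gradient}) that under Assumption~\ref{assumption: loss} the map $\bbv^*(\bbw)\coloneqq\arg\max_{\bbv\in\ccalV}\ccalLhat(\bbw,\bbv)$ is $\kappa$-Lipschitz, that $\Lambda$ is $L_\Lambda$-smooth with $L_\Lambda = \ccalO(\kappa L)$, and that $\gr\Lambda(\bbw) = \grw\ccalLhat(\bbw,\bbv^*(\bbw))$ by Danskin's theorem.

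The core of the argument rests on three estimates, worked out in this order. \textbf{(i) Primal descent:} applying $L_\Lambda$-smoothness of $\Lambda$ to the $\wbar_t$ recursion and taking conditional expectations, the cross term yields $-\eta_1\Vert\gr\Lambda(\wbar_t)\Vert^2$ up to (a) the dual inexactness $\delta_t\coloneqq\E\Vert\vbar_t-\bbv^*(\wbar_t)\Vert^2$, (b) the client drift $e_t\coloneqq\frac{1}{n}\sum_i\E\Vert\bbw^i_t-\wbar_t\Vert^2 + \frac{1}{n}\sum_i\E\Vert\bbv^i_t-\vbar_t\Vert^2$, and (c) a stochastic second-moment term $\ccalO(\eta_1^2 L_\Lambda\,\sigmaw/n)$, the $1/n$ gain coming from independence of the local stochastic gradients. \textbf{(ii) Dual tracking:} using $\mu$-strong concavity and $L$-smoothness of $\ccalLhat$ in $\bbv$, one ascent step contracts $\Vert\vbar_{t+1}-\bbv^*(\wbar_t)\Vert$ by a factor $1-\Theta(\eta_2\mu)$, while the $\kappa$-Lipschitzness of $\bbv^*$ turns the primal displacement into an additive $\ccalO(\kappa^2)\,\E\Vert\wbar_{t+1}-\wbar_t\Vert^2$ and the drift/variance enter as $\ccalO(\eta_2^2(\rhov^2+\sigmav/n))+\ccalO(L^2 e_t)$; this gives a recursion $\delta_{t+1}\le(1-\Theta(\eta_2\mu))\delta_t + \ccalO(\kappa^2\eta_1^2)\,\E\Vert\gr\Lambda(\wbar_t)\Vert^2 + (\text{error terms})$. \textbf{(iii) Client drift:} unrolling the local updates over a block of at most $\tau$ steps since the last synchronization gives $e_t = \ccalO\big(\eta_1^2\tau^2\rhow^2 + \eta_1^2\tau\,\sigmaw + \eta_2^2\tau^2\rhov^2 + \eta_2^2\tau\,\sigmav\big)$ plus feedback terms proportional to $\eta_1^2\tau^2$ and $\eta_2^2\tau^2$ times recent values of $\E\Vert\gr\Lambda(\wbar_s)\Vert^2$ and $\delta_s$, which under the step-size condition $\eta_1\tau,\eta_2\tau\lesssim 1/(\kappa L)$ are small enough to be absorbed on the left-hand side of the combined inequality.

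Finally I would combine (i)--(iii) through the Lyapunov function $\Phi_t\coloneqq\Lambda(\wbar_t)+c\,\delta_t$ with weight $c=\Theta(\kappa^2 L)$ chosen so that the $\delta_t$-terms produced by the primal descent are dominated by the contraction in the dual recursion, yielding $\E[\Phi_{t+1}]\le\E[\Phi_t]-\Theta(\eta_1)\,\E\Vert\gr\Lambda(\wbar_t)\Vert^2 + (\text{error terms})$. Telescoping over $t=0,\dots,T-1$, using $\delta_0\le D^2$ from Assumption~\ref{assumption: loss}(i) so that $\Phi_0-\inf_t\Phi_t\le\Delta_{\Lambda}+\Theta(\kappa^2 L)D^2$, dividing by $\eta_1 T$, and taking the minimum over $t$ (which is no larger than the average) produces exactly the bound in \eqref{eq: Thm opt}: the $\Delta_{\Lambda}/(\eta_1 T)$ term is the standard nonconvex contribution, the $\kappa^3 L D^2/(\eta_2 T)$ term comes from $c\,\delta_0\le\Theta(\kappa^2 L)D^2$ divided by $\eta_1 T$ and re-expressed through the step-size relation $\eta_1=\Theta(\eta_2/\kappa)$, the $\eta_1\sigmaw/n$ and $\eta_2\kappa^2 L\sigmav/n$ terms are the stochastic noise floor, and the $\eta^2_{\sigma}\kappa^2 L^2\tau$ and $\eta^2_{\rho}\kappa^2 L^2\tau^2$ terms arise from the drift estimate $e_t$ scaled by the $L_\Lambda$ and $\kappa^2$ factors of (i)--(ii).

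\textbf{Main obstacle.} The delicate point is the coupled bookkeeping between the dual-tracking recursion and the client-drift recursion: both the primal and the dual updates are evaluated at drifted local points, the drift itself is controlled by (drifted) gradient magnitudes that must be re-expressed through $\Vert\gr\Lambda(\wbar_t)\Vert^2$ and $\delta_t$ without creating a circular bound, and the step-size restrictions ($\eta_2\mu$ small, $\eta_1\lesssim\eta_2/\kappa$, $\eta_1\tau\lesssim 1/(\kappa L)$) must be calibrated so that every cross term is absorbable. Tracking the exact powers of $\kappa$ --- $\kappa^2$ on the stochastic and drift terms, $\kappa^3$ on the initialization term --- is where the effort concentrates.
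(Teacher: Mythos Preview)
Your proposal is essentially correct and follows the same high-level strategy as the paper: a smoothness-based descent inequality on $\Lambda(\wbar_t)$, a contraction-type recursion for the dual inexactness, and a local-drift bound over blocks of $\tau$ steps, all combined and telescoped. The paper organizes the bookkeeping slightly differently. Instead of the squared distance $\delta_t=\E\Vert\vbar_t-\bbv^*(\wbar_t)\Vert^2$, it tracks the \emph{function-value} gap $b_t=\E[\Lambda(\wbar_t)-\ccalLhat(\wbar_t,\vbar_t)]$ (the two are equivalent up to $\mu$ and $L$ via strong concavity and smoothness), and it introduces the auxiliary quantity $h_t=\E\Vert\gr\Lambda(\wbar_t)-\tfrac1n\sum_i\grw\ccalLhat_i(\bbw^i_t,\bbv^i_t)\Vert^2$. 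Rather than a single Lyapunov potential $\Lambda+c\,\delta$, the paper bounds the time-averages $\tfrac1T\sum_t b_t$, $\tfrac1T\sum_t (e_t+E_t)$ and $\tfrac1T\sum_t h_t$ in separate lemmas (exploiting the contraction factor $1-\Theta(\eta_2\mu)$ to convert the $b_t$-recursion into an averaged bound) and then substitutes these into the telescoped primal descent. Your Lyapunov packaging is the more common presentation in the nonconvex--strongly-concave literature; the paper's averaging-and-substitution variant is equivalent and arguably cleaner for isolating the role of each error source, while yours makes the choice of the coupling weight explicit.

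One point to watch: you invoke the step-size relation $\eta_1=\Theta(\eta_2/\kappa)$ when converting $c\,\delta_0/(\eta_1 T)$ into the $\kappa^3 L D^2/(\eta_2 T)$ term, but both the paper and the standard analysis in \cite{lin2019gradient} require the stronger two-time-scale condition $\eta_1/\eta_2\lesssim 1/\kappa^2$ (the paper uses $\eta_1/\eta_2\le 1/(8\kappa^2)$). With the correct ratio, your stated choice $c=\Theta(\kappa^2 L)$ would yield a $\kappa^4$ prefactor on the $D^2$ term; to match the paper's $\kappa^3$ you need to recompute the admissible window for $c$ (the lower bound on $c$ coming from absorbing the $\eta_1 L^2\delta_t$ term in the primal descent is actually of order $(\eta_1/\eta_2)\kappa L$, which under $\eta_1/\eta_2\lesssim 1/\kappa^2$ is much smaller than $\kappa^2 L$). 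This is only a matter of constants-and-$\kappa$ bookkeeping, exactly the ``main obstacle'' you flagged, and does not affect the validity of the plan.
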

\begin{proof}
We defer the theorem's proof to the Appendix.
\end{proof}

The result of Theorem \ref{thm: opt} indicates that after $T$ iterations of \texttt{FedOT-GDA} in Algorithm \ref{alg: gda} and for proper choices of the stepsizes $\eta_1 = \ccalO(\nicefrac{1}{\sqrt{T}})$ and $\eta_2 = \ccalO(\nicefrac{1}{\sqrt{T}})$, an $\epsilon$-stationary solution $\wbar$ for the min-max problem \eqref{eq: minimax w v} (and hence \eqref{Eq: 2-FedOT Regularized}) can be obtained for which $\E \Vert\gr \Lambda(\wbar) \Vert^2 \leq \ccalO(\nicefrac{1}{\sqrt{T}})$. However, we still note that this result requires the inner maximization objective to be strongly-concave. Extending this result to general nonconvex-nonconcave settings is an interesting future direction to this work. 

\section{Numerical Results}

\begin{table*}[t]
 \centering
 {\renewcommand{\arraystretch}{1.05}%
\resizebox{\textwidth}{!}{ \begin{tabu}{|c|ccc|ccc|cc|}
 \hline
 Dataset & \multicolumn{3}{c|}{MNIST} & \multicolumn{3}{c|}{CIFAR-10} & 
 \multicolumn{2}{c|}{Colored-MNIST} \\ \hline
   Method        & $m\!=\!50$ & $m\!=\!100$ & $m\!=\!500$ & $m\!=\!50$ & $m\!=\!100$ & $m\!=\!500$ & $m\!=\!50$ & $m\!=\!500$    \\ \hhline{=|===|===|==|}
 \texttt{FedOT}, $\tau\!=\! 1$  & $\mathbf{87.0\%}$ & $\mathbf{95.6\%}$ & $\mathbf{97.0\%}$ & $\mathbf{42.2\%}$ & $\mathbf{51.6\%}$ & $61.8\%$ & $86.0\%$ & $96.6\%$ \\ 
 \texttt{FedOT}, $\tau\!=\! 5$  & $85.4\%$ & $94.4\%$ & $96.4\%$ & $40.8\%$ & $51.2\%$ & $\mathbf{63.0\%}$ & $\mathbf{88.6\%}$ & $\mathbf{97.4\%}$ \\
 \hline
 \texttt{FedAvg}, $\tau\!=\! 1$  & $72.0\%$ & $78.4\%$ & $86.8\%$ & $22.8\%$ & $26.4\%$ & $37.2\%$ & $75.8\%$ & $90.8\%$ \\
 \texttt{FedAvg}, $\tau\!=\! 5$  & $64.8\%$ & $72.6\%$ & $82.2\%$ & $18.8\%$ & $25.0\%$ & $36.6\%$ & $73.2\%$ & $91.4\%$ \\ 
 \hline
 \texttt{L-FedAvg}, $\tau\!=\! 1$ & $66.4\%$ & $74.2\%$ & $88.0\%$ & $17.8\%$ & $23.0\%$ & $39.0\%$ & $71.0\%$ & $91.2\%$ \\
 \texttt{L-FedAvg}, $\tau\!=\! 5$  & $61.2\%$  & $71.2\%$ & $85.0\%$ & $16.0\%$ & $22.6\%$ & $36.8\%$ & $69.8\%$ & $92.0\% $ \\ 
 \hline
 \texttt{FedMI}, $\tau\!=\! 1$  & $64.0\%$ & $75.8\%$ & $87.4\%$ & $21.0\%$ & $27.0\%$ & $40.2\%$ & $64.0\%$ & $91.8\%$ \\
 \texttt{FedMI}, $\tau\!=\! 5$  & $61.8\%$ & $74.0\%$ & $85.4\%$ & $17.6\%$ & $25.4\%$ & $37.0\%$ & $62.2\%$ & $92.6\%$ \\ 
 \hline
 \texttt{Fed-FOMAML}, $\tau\!=\! 1$  & $52.2\%$ & $81.0\%$ & $89.0\%$ & $14.8\%$ & $31.4\%$ & $46.4\%$ & $66.8\%$ & $94.6\%$ \\
\texttt{Fed-FOMAML}, $\tau\!=\! 5$  & $44.0\%$ & $77.8\%$ & $88.2\%$ & $12.0\%$ & $28.6\%$ & $45.6\%$ & $58.4\%$ & $94.2\%$ \\ 
 \hline
 
 \end{tabu}}}
 \vspace{2mm}
 \caption{{AlexNet results:} Average test accuracy under affine distribution shifts (MNIST \& CIFAR-10) and color transformations (Colored-MNIST) and different training set sizes per user $m$ computed for \texttt{FedOT} vs. the baseline methods including \texttt{FedAvg}, Local-\texttt{FedAvg} (\texttt{L-FedAvg}), Federated Model Interpolation (\texttt{FedMI}), and { Federated First-Order Model Agnostic Meta Learning \texttt{Fed-FOMAML}}.}
 \label{tbl:FedOT_AlexNet}
 \end{table*}



We evaluated the empirical performance of our proposed \texttt{FedOT} method on standard image recognition datasets including MNIST \cite{lecun1998gradient}, CIFAR-10 \cite{krizhevsky2009learning}, and Colored-MNIST \cite{arjovsky2019invariant}. We used the standard AlexNet \cite{krizhevsky2012imagenet} and InceptionNet \cite{szegedy2015going} neural network architectures in our experiments which we implemented in the TensorFlow platform \cite{abadi2016tensorflow}. For the federated learning setting, we used a network of $n=100$ users and ran every experiment with three user-based training size values: $m=50,100,500$. We also tested two values of $\tau=1,5$ for the number of local steps before every synchronization. In our experiments, we simulated the following two types of distribution shifts:
\begin{enumerate}[topsep=0pt,leftmargin=*]
    \item \textbf{Affine distribution shifts}: Here, we drew $n=100$ random isotropic Gaussian vectors $\mathbf{z}_i\sim\mathcal{N}(\mathbf{0},\sigma I_d)$ with $\sigma=1$ and $n$ random uniformly-distributed vectors $\mathbf{s}_i\sim \operatorname{Unif}([0.5,1.5]^d)$ and manipulated every training sample $\mathbf{x}_{i,j}$ at the $i$th node as follows
    \begin{equation}
        \forall i,j:\; \mathbf{x}'_{i,j}=\operatorname{diag}\{\mathbf{s}_i\}\mathbf{x}_{i,j} + \mathbf{z}_i. \nonumber
    \end{equation}
    \item \textbf{Color-based distribution shifts}: We experimented color-based shifts on MNIST samples. Here, we used a threshold of $\zeta = 10^{-4}$ to detect near-zero pixel values for every MNIST sample. Then, we drew $n$ pairs of uniformly-distributed vectors $\mathbf{a}_i,\mathbf{b}_i\in \operatorname{Unif}([0,1]^3)$ (corresponding to the three RGB channels) and manipulated every pixel $(l_1,l_2)$ as follows:
    \begin{equation}
        \forall i,j,l_1,l_2:\; \mathbf{x}'_{i,j,l_1,l_2}=\begin{cases} \mathbf{a}_i \quad &\text{\rm if}\; x_{i,j,l_1,l_2}\le \zeta, \\ 
        x_{i,j,l_1,l_2}\mathbf{b}_i\quad &\text{\rm if}\; x_{i,j,l_1,l_2} > \zeta.
        \end{cases} \nonumber
    \end{equation}
\end{enumerate}

\begin{table*}[t]
 \centering
 {\renewcommand{\arraystretch}{1.05}%
\resizebox{\textwidth}{!}{ \begin{tabu}{|c|ccc|ccc|cc|}
 \hline
 Dataset & \multicolumn{3}{c|}{MNIST} & \multicolumn{3}{c|}{CIFAR-10} & 
 \multicolumn{2}{c|}{Colored-MNIST} \\ \hline
   Method        & $m\!=\!50$ & $m\!=\!100$ & $m\!=\!500$ & $m\!=\!50$ & $m\!=\!100$ & $m\!=\!500$ & $m\!=\!50$ & $m\!=\!500$    \\ \hhline{=|===|===|==|}
 \texttt{FedOT}, $\tau\!=\! 1$  & $\mathbf{76.6\%}$ & $\mathbf{83.2\%}$ & $\mathbf{91.0\%}$ & $\mathbf{50.4\%}$ & $\mathbf{59.2\%}$ & $70.4\%$ & $\mathbf{77.4\%}$ & $\mathbf{97.0\%}$ \\ 
 \texttt{FedOT}, $\tau\!=\! 5$  & $73.0\%$ & $82.6\%$ & $90.6\%$ & $48.4\%$ & $57.8\%$ & $\mathbf{72.2\%}$ & $72.0\%$ & $96.6\%$ \\
 \hline
 \texttt{FedAvg}, $\tau\!=\! 1$  & $70.8\%$ & $78.8\%$ & $84.2\%$ & $29.2\%$ & $34.6\%$ & $44.0\%$ & $69.8\%$ & $89.8\%$ \\
 \texttt{FedAvg}, $\tau\!=\! 5$  & $66.2\%$ & $75.0\%$ & $83.4\%$ & $25.0\%$ & $32.8\%$ & $45.2\%$ & $67.2\%$ & $90.6\%$ \\ 
 \hline
 \texttt{L-FedAvg}, $\tau\!=\! 1$ & $67.4\%$ & $78.0\%$ & $84.6\%$ & $23.4\%$ & $32.8\%$ & $43.8\%$ & $65.4\%$ & $92.2\%$ \\
 \texttt{L-FedAvg}, $\tau\!=\! 5$  & $63.0\%$  & $76.8\%$ & $83.8\%$ & $19.6\%$ & $30.4\%$ & $46.6\%$ & $63.6\%$ & $92.4\%$  \\ 
 \hline
 \texttt{FedMI}, $\tau\!=\! 1$  & $58.2\%$ & $73.6\%$ & $82.6\%$ & $23.6\%$ & $33.6\%$ & $44.8\%$ & $61.4\%$ & $92.0\%$ \\
 \texttt{FedMI}, $\tau\!=\! 5$  & $54.6\%$ & $74.6\%$ & $83.2\%$ & $19.2\%$ & $34.0\%$ & $45.2\%$ & $59.8\%$ & $92.6\%$ \\ 
 \hline
 \texttt{Fed-FOMAML}, $\tau\!=\! 1$  & $58.0\%$ & $80.2\%$ & $86.6\%$ & $16.8\%$ & $34.0\%$ & $49.4\%$ & $67.0\%$ & $94.2\%$ \\
\texttt{Fed-FOMAML}, $\tau\!=\! 5$  & $46.2\%$ & $73.8\%$ & $86.0\%$ & $16.2\%$ & $32.6\%$ & $48.8\%$ & $65.6\%$ & $94.2\%$ \\
 \hline
 \end{tabu}}}
 \vspace{2mm}
 \caption{{InceptionNet results:} Average test accuracy under affine distribution shifts (MNIST \& CIFAR-10) and color transformations (Colored-MNIST) and different training set sizes per user $m$ computed for \texttt{FedOT} vs. the baseline methods including \texttt{FedAvg}, Local-\texttt{FedAvg} (\texttt{L-FedAvg}), Federated Model Interpolation (\texttt{FedMI}), and { Federated First-Order Model Agnostic Meta Learning \texttt{Fed-FOMAML}}.}  \label{tbl:FedOT_InceptionNet}
 \end{table*}

{
We use the insight offered by Theorem \ref{Thm: Brenier thm} to design the class of potential functions in these numerical experiments. As shown in Theorem \ref{Thm: Brenier thm}, the optimal potential function will also be the integral of the optimal transport maps which will be an affine transformation under affine distribution shifts and a piecewise affine transformation under color-based distribution shifts. Therefore, we used the following class of functions $\Phi$ and $\Theta$ in our experiments: 
} 
 
\begin{enumerate}[topsep=0pt,leftmargin=*]
    \item For affine distribution shifts, we applied affine transformations $\psi_{\boldsymbol{\theta}_{1:n}}$ and quadratic potential functions $\phi_{1:n}$, where $\forall i$,
   \begin{align}
         &\psi_{\boldsymbol{\theta}_i}(\mathbf{x}) = {\Theta}_{i,1}\mathbf{x} + \boldsymbol{\theta}_{i,0},\;\; \phi_{\mathbf{v}_i}(\mathbf{x}) = \frac{1}{2}\mathbf{x}^\top{V}_{i,0}\mathbf{x} + \mathbf{v}_{i,1}^\top\mathbf{x},\\
    &\quad \text{s.t.}
    \quad \sum_{i=1}^n {V}_{i,0} = \mathbf{0} \quad \text{and}\quad \sum_{i=1}^n \mathbf{v}_{i,1} = \mathbf{0}.
    \end{align}
    \item For color-based distribution shifts, we considered one-hidden layer neural networks with ReLU activation ($\operatorname{ReLU}(z)=\max\{z,0\}$) for both $\psi_{\boldsymbol{\theta}_{1:n}}$ and potential functions $\phi_{1:n}$, where
    \begin{align}
         \forall i: \psi_{\boldsymbol{\theta}_i}(\mathbf{x}) = \operatorname{ReLU}({\Theta}_{i,2}\mathbf{x} + \boldsymbol{\theta}_{i,1}) + \boldsymbol{\theta}_{i,0},\phi_{\mathbf{v}_i}(\mathbf{x}) = \mathbf{v}_{i,2}^\top \operatorname{ReLU}({V}_{1}\mathbf{x}+\mathbf{v}_{0}),\; \text{s.t.}
    \;\; \sum_{i=1}^n \mathbf{v}_{i,2} = \mathbf{0}.
    \end{align}
\end{enumerate}
We used the \texttt{FedOT-GDA} algorithm (Algorithm \ref{alg: gda}), that is a distributed mini-batch stochastic GDA, for solving the regularized \texttt{FedOT}'s min-max problem as formulated in Proposition \ref{Proposition: FedOT Regularized}. We used a batch-size of $20$ for every user and tuned the minimization and maximization stepsize parameters $\eta_1 = \eta_2 = 10^{-4}$ while applying $10$ maximization steps per minimization step. For the $L_2$-regularization penalty, we tuned a coefficient of $\lambda = 4$ for the CIFAR-10 experiments and $\lambda = 1$ for the MNIST experiments. For baseline methods, we used the the following three methods: (1) standard \texttt{FedAvg}  \cite{mcmahan2017communication}, (2) localized \texttt{FedAvg} (\texttt{L-FedAvg})  where each client personalizes the final shared model of \texttt{FedAvg} by locally updating it via $500$ additional local iterations, (3)  federated  model interpolation (\texttt{FedMI}) \cite{mansour2020three} where each client averages the global and its own local models, and { (4) federated first-order model agnostic meta learning (\texttt{Fed-FOMAML}) \cite{fallah2020personalized} applying a first-order meta learning approach to update the local models}. Note that our evaluation metric is the test accuracy averaged over the individual distributions of the $n=100$ nodes. 

Table \ref{tbl:FedOT_AlexNet} includes the test accuracy scores of our experiments with the AlexNet architecture. In these experiments, we applied affine distribution shifts for the MNIST and CIFAR-10 experiments and used color transformation shifts for the Colored-MNIST experiments. As shown by our numerical results, \texttt{FedOT} consistently outperformed the baseline methods in all the experiments and with a definitive margin which was above $15\%$ in six of the eight experimental settings. Similarly, Table \ref{tbl:FedOT_InceptionNet} shows that \texttt{FedOT} also achieves the best performance for the InceptionNet architecture. Overall, our numerical results indicate that \texttt{FedOT} can lead to a significant performance improvement when the learners can learn and reverse the underlying distribution shifts via the optimal transport-based framework.   {
\vspace{-2mm}
\section{Conclusion}
\vspace{-2mm}
In this paper, we introduced the optimal transport-based \texttt{FedOT} framework to address the federated learning problem under heterogeneous data distributions. The \texttt{FedOT} framework leverages multi-input optimal transport costs to measure the discrepancy between the input distributions and also learn the transportation maps needed for transferring the input distributions to a common probability domain. We demonstrated that such a transportation to a common distribution offers an improved generalization and optimization performance in learning the personalized prediction models. In addition, the optimal transport-based analysis results in an upper-bound on the statistical complexity of the federated learning problem. The applied approach can be potentially useful for bounding the sample complexity of learning under heterogeneous data distributions which appear in other transfer and meta learning settings, and can complement information theoretic tools for deriving lower bounds on the statistical complexity. An interesting future direction is to analyze the tightness of the generalization error bound in Section \ref{sec: guarantees} through developing information theoretic lower-bounds on the sample complexity of learning under different input distributions.   
}

{\small
\bibliographystyle{IEEEtran}
\bibliography{main}
}

\clearpage
\newpage

\section{Appendix}
\subsection{Proof of Proposition 1}

Here, we provide a new proof for this result. We use the definition of $n$-ary optimal transport costs with the cost function in the theorem to obtain:
\begin{align*}
    W_c(P_1,\cdots,P_n) &\stackrel{(a)}{=} \min_{\pi\in\Pi(P_1,\cdots,P_n)}\mathbb{E}_{\pi}\biggl[\min_{x'}\:\sum_{i=1}^n \tilde{c}(x',X_i) \biggr] \\
    &\stackrel{(b)}{=} \min_{Q, \pi\in\Pi(P_1,\cdots,P_n,Q)}\mathbb{E}_{X'\sim Q,\pi}\biggl[\sum_{i=1}^n \tilde{c}(X',X_i) \biggr] \\
    &\stackrel{(c)}{=} \min_{Q, \pi\in\Pi(P_1,\cdots,P_n,Q)}\sum_{i=1}^n\mathbb{E}\bigl[ \tilde{c}(X',X_i) \bigr] \\
    &\stackrel{(d)}{=} \min_Q\: \sum_{i=1}^n\min_{\pi\in\Pi(Q,P_i)}\mathbb{E}_{\pi}\bigl[ \tilde{c}(X',X_i) \bigr] \\
    &\stackrel{(e)}{=}\min_Q\: \sum_{i=1}^n W_{\tilde{c}}(Q,P_i).
\end{align*}
In the above, (a) is a direct consequence of the definition of infimal convolution optimal transport costs. We claim that (b) holds because: 1) the solution $X'$ minimizing $\sum_{i=1}^n \tilde{c}(x',X_i)$ is a function of $X_1,\ldots , X_n$ and hence a random variable with a probability distribution which implies
\begin{equation*}
    \min_{\pi\in\Pi(P_1,\cdots,P_n)}\mathbb{E}_{\pi}\biggl[\min_{x'}\:\sum_{i=1}^n \tilde{c}(x',X_i) \biggr] \ge \min_{Q, \pi\in\Pi(P_1,\cdots,P_n,Q)}\mathbb{E}_{X'\sim Q,\pi}\biggl[\sum_{i=1}^n \tilde{c}(X',X_i) \biggr].
\end{equation*}
Moreover, for any random variable $X'$ the following holds almost surely
\begin{equation*}
    \min_{x'}\:\sum_{i=1}^n \tilde{c}(x',X_i) \le \sum_{i=1}^n \tilde{c}(X',X_i),
\end{equation*}
which results in 
\begin{equation*}
    \min_{\pi\in\Pi(P_1,\cdots,P_n)}\mathbb{E}_{\pi}\biggl[\min_{x'}\:\sum_{i=1}^n \tilde{c}(x',X_i) \biggr] \le \min_{Q, \pi\in\Pi(P_1,\cdots,P_n,Q)}\mathbb{E}_{X'\sim Q,\pi}\biggl[\sum_{i=1}^n \tilde{c}(X',X_i) \biggr],
\end{equation*}
that means that (b) is true. (c) is a direct result of the linearity of expectation. Also, as the summation of minimums lower-bounds the minimum of summation:
\begin{align*}
    \min_{Q, \pi\in\Pi(P_1,\cdots,P_n,Q)}\sum_{i=1}^n\mathbb{E}\bigl[ \tilde{c}(X',X_i) \bigr] 
    &\ge \min_{Q}\sum_{i=1}^n \min_{ \pi\in\Pi(P_1,\cdots,P_n,Q)}\mathbb{E}_{\pi}\bigl[ \tilde{c}(X',X_i) \bigr] \\
    & = \min_Q\: \sum_{i=1}^n\min_{\pi\in\Pi(Q,P_i)}\mathbb{E}_{\pi}\bigl[ \tilde{c}(X',X_i) \bigr].
\end{align*}
On the other hand, note that if $Q^*$ together with the conditional distributions $(\pi^*_{X_i|X'})_{i=1}^n$ achieves the minimized value on the right hand side of the above inequality then one can find $\pi\in\Pi(P_1,\cdots,P_n,Q)$ that achieves the same value of $\sum_{i=1}^n\mathbb{E}[ \tilde{c}(X',X_i) ] $. To do this, one draws $(X_1,\cdots,X_n,X')$ by first drawing $X'\sim Q^*$ and then conditionally drawing every $X_i\sim \pi^*_{X_i|X'}(\cdot|x')$.  Therefore, we also have
\begin{align*}
    \min_{Q, \pi\in\Pi(P_1,\cdots,P_n,Q)}\sum_{i=1}^n\mathbb{E}\bigl[ \tilde{c}(X',X_i) \bigr] 
    \le\min_Q\: \sum_{i=1}^n\min_{\pi\in\Pi(Q,P_i)}\mathbb{E}_{\pi}\bigl[ \tilde{c}(X',X_i) \bigr],
\end{align*}
which means (d) holds as well. Finally, (e) is a direct result of the definition of optimal transport costs. As a result, the proof is complete.

\subsection{Proof of Theorem 1}

To prove this result, we apply Proposition \ref{Proposition: n-ary Wasserstein to standard} together with the standard Kantorovich duality theorem \cite{villani2009optimal} to obtain
\begin{align*}
    W_c(P_1,\cdots,P_n) &\stackrel{(a)}{=} \min_Q\: \sum_{i=1}^n W_{\tilde{c}}(Q,P_i) \\
    & \stackrel{(b)}{=} \min_Q\: \sum_{i=1}^n \max_{\phi_i}\:\bigl\{ \mathbb{E}_{P_i}\bigl[\phi^{\tilde{c}}_i(\mathbf{X})\bigr] - \mathbb{E}_Q\bigl[\phi_i(\mathbf{X})\bigr] \bigr\} \\
    & \stackrel{(c)}{=} \min_Q\: \max_{\phi_{1:n}}\: \sum_{i=1}^n\bigl\{ \mathbb{E}_{P_i}\bigl[\phi^{\tilde{c}}_i(\mathbf{X})\bigr] - \mathbb{E}_Q\bigl[\phi_i(\mathbf{X})\bigr] \bigr\} \\ 
     & \stackrel{(d)}{=} \min_Q\: \max_{\phi_{1:n}}\: - \mathbb{E}_Q\bigl[\sum_{i=1}^n\phi_i(\mathbf{X})\bigr] + \sum_{i=1}^n \mathbb{E}_{P_i}\bigl[\phi^{\tilde{c}}_i(\mathbf{X})\bigr]  \\ 
     & \stackrel{(e)}{=} \max_{\phi_{1:n}}\: \min_Q\:  - \mathbb{E}_Q\bigl[\sum_{i=1}^n\phi_i(\mathbf{X})\bigr] + \sum_{i=1}^n \mathbb{E}_{P_i}\bigl[\phi^{\tilde{c}}_i(\mathbf{X})\bigr]  \\
     & \stackrel{(f)}{=} \max_{\phi_{1:n}}\:\biggl\{ -\max_{\mathbf{x}}\bigl\{\sum_{i=1}^n\phi_i(\mathbf{x})\bigr\} + \sum_{i=1}^n \mathbb{E}_{P_i}\bigl[\phi^{\tilde{c}}_i(\mathbf{X})\bigr] \biggr\} \\
     & \stackrel{(g)}{=} \max_{\phi_{1:n}\atop \forall \mathbf{x}: \sum_i\phi_i(\mathbf{x})\le 0 }\: \sum_{i=1}^n \mathbb{E}_{P_i}\bigl[\phi^{\tilde{c}}_i(\mathbf{X})\bigr] \\ 
     & \stackrel{(h)}{=} \max_{\phi_{1:n}\atop \forall \mathbf{x}: \sum_i\phi_i(\mathbf{x})= 0 }\: \sum_{i=1}^n \mathbb{E}_{P_i}\bigl[\phi^{\tilde{c}}_i(\mathbf{X})\bigr].
\end{align*}

Here, (a) rewrites the result of Proposition \ref{Proposition: n-ary Wasserstein to standard}. (b) follows from the standard Kantorovich duality theorem \cite[Theorem 5.10]{villani2009optimal}. (c) uses the fact that the maximization problems inside the summation optimize independent function variables. (d) is a consequence of the linearity of expectations. (e) applies a modified minimax theorem \cite{pratelli2005minimax} which holds under the assumptions that the objective is convex in $Q$, concave in $\phi_{1:n}$ functional variables, and continuous in both minimization and maximization variables ($\tilde{c}$ is assumed to be a continuous cost), the feasible sets are convex and the minimization feasible set is compact. 

(f) follows from minimizing the summation of $\phi_{1:n}$. (g) holds because the objective is invariant to adding a constant to any $\phi_i$'s and hence one can define an auxiliary optimization variable $t=\max_{\mathbf{x}}\sum_i \phi_i(\mathbf{x})$ and constrain it to be upper-bounded by zero. Finally, (h) holds because the c-transform operation is monotonically increasing in the output of $\phi_{1:n}$. Therefore, the proof is complete. 

\subsection{Proof of Theorem 2}

We reverse the proof of Theorem \ref{Thm: Kantorovich duality} to show that the optimal $\phi^*_{1:n}$ also provide a solution to the following optimization problems:
\begin{align*}
     &\max_{\substack{\phi_{1:n}:\, \text{\rm convex}\\
    \forall \mathbf{x}: \, \frac{1}{n}\sum_i \phi_i(\mathbf{x})\le\frac{1}{2}\Vert \mathbf{x}\Vert_2^2
    }}\;
    \sum_{i=1}^n \mathbb{E}_{P_i}\bigl[\frac{1}{2}\Vert \mathbf{X}\Vert^2 - \phi_i^\star(\mathbf{X})\bigr] \\
    =\, & \max_{\substack{\phi_{1:n}:\, \text{\rm convex}
    }}\: \min_Q\: 
    \sum_{i=1}^n\biggl[\mathbb{E}_{Q}\bigl[\frac{1}{2}\Vert \mathbf{X}\Vert^2 - \phi_i(\mathbf{X})\bigr] +  \mathbb{E}_{P_i}\bigl[\frac{1}{2}\Vert \mathbf{X}\Vert^2 - \phi_i^\star(\mathbf{X})\bigr]\biggr] \\
    =\, & \min_Q \: \max_{\substack{\phi_{1:n}:\, \text{\rm convex}
    }}\:  
    \sum_{i=1}^n\biggl[\mathbb{E}_{Q}\bigl[\frac{1}{2}\Vert \mathbf{X}\Vert^2 - \phi_i(\mathbf{X})\bigr] +  \mathbb{E}_{P_i}\bigl[\frac{1}{2}\Vert \mathbf{X}\Vert^2 - \phi_i^\star(\mathbf{X})\bigr]\biggr] \\
    =\, & \min_Q \: \:  
    \sum_{i=1}^n\max_{\substack{\phi_{i}:\, \text{\rm convex}
    }}\biggl\{\mathbb{E}_{Q}\bigl[\frac{1}{2}\Vert \mathbf{X}\Vert^2 - \phi_i(\mathbf{X})\bigr] +  \mathbb{E}_{P_i}\bigl[\frac{1}{2}\Vert \mathbf{X}\Vert^2 - \phi_i^\star(\mathbf{X})\bigr]\biggr\}.
    \end{align*}
Note that for any optimal $\phi^*_{1:n}$ to the $n$-ary 2-Wasserstein dual problem \eqref{Eq: W2_nary_dual}, there will exist a distribution $Q^*$ that together with $\phi^*_{1:n}$ solve the above min-max problem. Therefore, as the inner maximization problem inside the summation represents the dual problem to the standard binary 2-Wasserstein cost, Brenier's theorem \cite{villani2009optimal} suggests that for every $\mathbf{X}_i\sim P_i$, $\nabla {\phi^{*^\star}_i} (\mathbf{X}_i)$ is distributed according to $Q^*$. Therefore, the following holds and the proof is complete:
\begin{equation}
    \forall\, 1\le i,j\le n:\quad \nabla {\phi^{*^\star}_i} (\mathbf{X}_i) \stackrel{\text{\rm dist}}{=} \nabla {\phi^{*^\star}_j} (\mathbf{X}_j). \nonumber
\end{equation}

\subsection{Proof of Proposition 2}

To show this result, we start by proving the following lemma.
\begin{lemma}\label{Lemma: Prop 2}
Consider a $\gamma$-smooth function $\phi$ with $\gamma<1$, i.e., $\nabla \phi$ is a $\gamma$-Lipschitz function. Then, 
\begin{equation*}
    \phi^{\tilde{c}_2}(\mathbf{x})\ge \phi(\mathbf{x}) - \frac{1}{1-\gamma} \Vert \nabla \phi(\mathbf{x})\Vert^2_2.
\end{equation*}
\end{lemma}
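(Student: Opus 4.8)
The plan is to lower-bound the $\tilde{c}_2$-transform $\phi^{\tilde{c}_2}(\mathbf{x}) = \min_{\mathbf{x}'}\bigl[\tilde{c}_2(\mathbf{x},\mathbf{x}') + \phi(\mathbf{x}')\bigr]$ by replacing $\phi(\mathbf{x}')$ with the quadratic lower bound furnished by $\gamma$-smoothness, and then evaluating the resulting minimization over $\mathbf{x}'$ in closed form. Because $\tilde{c}_2$ is a (positive multiple of) the squared Euclidean distance, the minimand is a quadratic in the displacement $\mathbf{d}:=\mathbf{x}'-\mathbf{x}$, and the hypothesis $\gamma<1$ is exactly what makes this quadratic strictly convex, hence bounded below with an attained minimizer.

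The first step is to record the standard consequence of $\nabla\phi$ being $\gamma$-Lipschitz: for all $\mathbf{x},\mathbf{x}'$,
\[
    \phi(\mathbf{x}') \;\ge\; \phi(\mathbf{x}) + \nabla\phi(\mathbf{x})^\top(\mathbf{x}'-\mathbf{x}) - \tfrac{\gamma}{2}\Vert\mathbf{x}'-\mathbf{x}\Vert_2^2,
\]
which follows from $\phi(\mathbf{x}')-\phi(\mathbf{x}) = \int_0^1 \nabla\phi\bigl(\mathbf{x}+t(\mathbf{x}'-\mathbf{x})\bigr)^\top(\mathbf{x}'-\mathbf{x})\,dt$, subtracting $\nabla\phi(\mathbf{x})^\top(\mathbf{x}'-\mathbf{x})$, and bounding the integrand via Cauchy--Schwarz and the $\gamma$-Lipschitzness of $\nabla\phi$. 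Substituting this into the definition of $\phi^{\tilde{c}_2}$ and writing $\mathbf{d}=\mathbf{x}'-\mathbf{x}$ gives
\[
    \phi^{\tilde{c}_2}(\mathbf{x}) \;\ge\; \phi(\mathbf{x}) + \min_{\mathbf{d}}\Bigl\{\, \tilde{c}_2(\mathbf{x},\mathbf{x}+\mathbf{d}) - \tfrac{\gamma}{2}\Vert\mathbf{d}\Vert_2^2 + \nabla\phi(\mathbf{x})^\top\mathbf{d} \,\Bigr\}.
\]
Since $\tilde{c}_2(\mathbf{x},\mathbf{x}+\mathbf{d})$ is a positive multiple of $\Vert\mathbf{d}\Vert_2^2$, the bracket equals $\tfrac{\alpha}{2}\Vert\mathbf{d}\Vert_2^2 + \nabla\phi(\mathbf{x})^\top\mathbf{d}$ for some $\alpha>0$ guaranteed by $\gamma<1$; completing the square, its minimum is $-\tfrac{1}{2\alpha}\Vert\nabla\phi(\mathbf{x})\Vert_2^2$, attained at $\mathbf{d}^\star=-\alpha^{-1}\nabla\phi(\mathbf{x})$. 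This yields $\phi^{\tilde{c}_2}(\mathbf{x}) \ge \phi(\mathbf{x}) - \tfrac{1}{2\alpha}\Vert\nabla\phi(\mathbf{x})\Vert_2^2$, and since $\tfrac{1}{2\alpha}\le\tfrac{1}{1-\gamma}$, relaxing the coefficient upward delivers the claimed bound. Alternatively one may skip the explicit minimization and split the cross term with Young's inequality, $\nabla\phi(\mathbf{x})^\top\mathbf{d} \ge -\tfrac{1-\gamma}{4}\Vert\mathbf{d}\Vert_2^2 - \tfrac{1}{1-\gamma}\Vert\nabla\phi(\mathbf{x})\Vert_2^2$, after which the leftover quadratic in $\mathbf{d}$ is nonnegative and can be dropped.

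I do not expect a genuine obstacle; the only two points needing attention are (i) verifying that $\gamma<1$ keeps the quadratic in $\mathbf{d}$ coercive, so that the pointwise lower bound on the minimand passes through the minimization, and (ii) tracking the factor of $2$ hidden in the normalization of $\tilde{c}_2$ against the $\tfrac{\gamma}{2}$ of the descent lemma --- this bookkeeping is what decides whether the coefficient emerges as exactly $\tfrac{1}{1-\gamma}$ or as a strictly smaller constant that is then relaxed upward. Everything else is a routine completion of the square.
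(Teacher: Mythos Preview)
Your proposal is correct and follows essentially the same route as the paper: it uses the quadratic lower bound from $\gamma$-smoothness, substitutes into the definition of the $\tilde{c}_2$-transform (with $\tilde{c}_2(\mathbf{x},\mathbf{x}')=\tfrac{1}{2}\Vert\mathbf{x}-\mathbf{x}'\Vert_2^2$), and minimizes the resulting quadratic in $\mathbf{d}=\mathbf{x}'-\mathbf{x}$ to obtain $\phi(\mathbf{x})-\tfrac{1}{2(1-\gamma)}\Vert\nabla\phi(\mathbf{x})\Vert_2^2$, which is then relaxed to the stated bound with coefficient $\tfrac{1}{1-\gamma}$. Your Young's-inequality alternative is a valid shortcut but not needed; the paper does the direct completion of the square.
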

\begin{proof}
According to the definition of $\gamma$-smooth functions, we have
\begin{equation*}
   \forall \mathbf{x},\mathbf{x}':\quad \phi(\mathbf{x}') \ge \phi(\mathbf{x}) + \nabla \phi(\mathbf{x})^{\top}(\mathbf{x}'-\mathbf{x}) - \frac{\gamma}{2}\Vert \mathbf{x}'-\mathbf{x}\Vert^2_2.
   \end{equation*}
 Plugging the above inequality into the definition of c-transform shows that
 \begin{align*}
      \phi^{\tilde{c}_2}(\mathbf{x}) &= \min_{\mathbf{x}'}\:\bigl\{ \phi(\mathbf{x}') + \frac{1}{2}\Vert \mathbf{x}'-\mathbf{x}\Vert^2_2\bigr\}  \\
      &\ge \min_{\mathbf{x}'}\:\bigl\{ \phi(\mathbf{x}) + \nabla\phi(\mathbf{x})^{\top}(\mathbf{x}'-\mathbf{x}) + \frac{1-\gamma}{2}\Vert \mathbf{x}'-\mathbf{x}\Vert^2_2\bigr\}  \\
      &= \phi(\mathbf{x}) + \min_{\mathbf{x}'}\:\bigl\{\nabla \phi(\mathbf{x})^{\top}(\mathbf{x}'-\mathbf{x}) + \frac{1-\gamma}{2}\Vert \mathbf{x}'-\mathbf{x}\Vert^2_2\bigr\} \\
      &= \phi(\mathbf{x}) + \min_{\mathbf{x}'}\:\bigl\{ \nabla\phi(\mathbf{x})^{\top}\mathbf{x}'+ \frac{1-\gamma}{2}\Vert \mathbf{x}'\Vert^2_2\bigr\} \\
      &= \phi(\mathbf{x}) -  \frac{1}{2(1-\gamma)}\Vert \nabla\phi(\mathbf{x})\Vert^2_2.
 \end{align*}
 Therefore, the lemma's proof is complete.
\end{proof}
Based on Lemma \ref{Lemma: Prop 2}, we only need to show that under the proposition's assumptions we have:
\begin{equation*}
    \big\Vert \nabla(\mathbf{v}_i\phi_{\mathbf{U}})(\mathbf{x})\big\Vert^2_2 \le \Vert \mathbf{v}_i\Vert^2_2 + \Vert\mathbf{U}\Vert^2_F.
\end{equation*}
However, since the neural network's activation function is assumed to be $1$-Lipschitz we have
\begin{equation}
    \big\Vert \nabla(\mathbf{v}_i\phi_{\mathbf{U}})(\mathbf{x})\big\Vert^2_2 \le \Vert\mathbf{v}_i \Vert^2_2\prod_{i=1}^L \Vert U_i \Vert^2_{2}
\end{equation}
which under the assumptions is upper-bounded by $\Vert\mathbf{v}_i \Vert^2_2$ and hence the proposition's proof is complete. 

\subsection{Proof of Theorem 3}
To show this result for the min-max problem \eqref{Eq: 1-FedOT}, note that if $\epsilon' = \epsilon / (L_{\phi}+1)$, then an $\epsilon'$-covering of $\Phi$ and $\Theta$ will result in an $\epsilon$-covering for $\Phi\circ \Theta :=\{\phi(\psi_{\boldsymbol{\theta}}(\cdot)):\: \phi\in\Phi,\, \boldsymbol{\theta}\in\Theta \}$. This property holds, because under the conditions that $\Vert \phi_1 - \phi_2 \Vert_{\infty}\le \epsilon'$ and $\Vert \theta_1 - \theta_2 \Vert_{\infty}\le \epsilon'$ we have for every $\mathbf{x}$:
\begin{align*}
   \Vert \phi_1(\theta_1(\mathbf{x})) - \phi_2(\theta_2(\mathbf{x})) \Vert_{\infty} &\le \Vert \phi_1(\theta_1(\mathbf{x})) - \phi_1(\theta_2(\mathbf{x})) \Vert_{\infty} + \Vert \phi_1(\theta_2(\mathbf{x})) - \phi_2(\theta_2(\mathbf{x})) \Vert_{\infty} \\
   &\le L_{\phi}\Vert \theta_1(\mathbf{x}) - \theta_2(\mathbf{x}) \Vert_{\infty} + \epsilon' \\
   &\le (L_{\phi} + 1) \epsilon'\\
   & = \epsilon.
\end{align*}
Therefore, $\mathcal{N}(\Phi\circ\Theta ,\epsilon,\Vert\cdot\Vert_{\infty}) \le  \mathcal{N}(\Phi ,\epsilon',\Vert\cdot\Vert_{\infty}) \mathcal{N}(\Theta ,\epsilon',\Vert\cdot\Vert_{\infty})$ and hence
\begin{equation*}
    \log \mathcal{N}(\Phi\circ\Theta ,\epsilon,\Vert\cdot\Vert_{\infty}) \le \log \mathcal{N}(\Phi ,\epsilon',\Vert\cdot\Vert_{\infty}) \, + \log \mathcal{N}(\Theta ,\epsilon',\Vert\cdot\Vert_{\infty}).
\end{equation*}
As a result, since $\sqrt{a+b}\le \sqrt{a}+\sqrt{b}$ holds for every non-negative $a,b\ge 0$: 
\begin{equation}\label{Eq: Proof Theorem 3_1}
    \mathcal{V}(\Phi\circ\Theta) \le \bigl(L_{\phi}+1\bigr)\bigl(\mathcal{V}(\Phi)+\mathcal{V}(\Theta)\bigr).
\end{equation}
Therefore, noting that the loss function is $L_{\ell}$-Lipschitz we can combine standard generalization bounds via Rademacher complexity \cite{bartlett2002rademacher} and the Dudley entropy integral bound \cite{bartlett2017spectrally} to show that for any $\delta>0$ and $1\le i\le n$ with probability at least $1-\delta/2n$ the following holds for every $\phi\in\Phi$ and $\boldsymbol{\theta}\in\Theta$:
\begin{align*}
\bigg\vert\frac{1}{m}\sum_{j=1}^m \lambda \phi^{\tilde{c}_1}(\psi_{\boldsymbol{\theta}}(\mathbf{x}_{i,j})) - \mathbb{E}_{P_i}\bigl[\lambda\phi^{\tilde{c}_1}(\psi_{\boldsymbol{\theta}}(\mathbf{X}))\bigr]\bigg\vert \le \mathcal{O}\left( M L_\phi L_\theta\sqrt{\frac{ \bigl(\mathcal{V}(\Phi)+\mathcal{V}(\Theta)\bigr)^2 \log(n/\delta)}{m}}\right).
\end{align*}
Consequently, applying the union bound indicates that with probability at least $1-\delta / 2$ the following will hold for every $\phi_{1:n}\in\Phi$ and $\boldsymbol{\theta}_{1:n}\in\Theta$
\begin{align*}
    \bigg\vert\frac{1}{mn}\sum_{i=1}^n\sum_{j=1}^m \lambda \phi^{\tilde{c}_1}_i(\psi_{\boldsymbol{\theta}_i}(\mathbf{x}_{i,j})) - \frac{1}{n}\sum_{i=1}^n\mathbb{E}_{P_i}\bigl[\lambda\phi^{\tilde{c}_1}_i(\psi_{\boldsymbol{\theta}_i}(\mathbf{X}))\bigr]\bigg\vert 
    \le \mathcal{O}\left( ML_\phi L_\theta\sqrt{\frac{ \bigl(\mathcal{V}(\Phi)+\mathcal{V}(\Theta)\bigr)^2 \log(n/\delta)}{m}}\right).
\end{align*}
Moreover, note that we assume that for some choice of $\boldsymbol{\theta}_{1:n}$ the $n$-ary 1-Wasserstein distance will be zero and hence for every $\boldsymbol{\theta}_{1:n}$  that can be an optimal solution minimizing the objective function:
\begin{equation}
    W_{c_1}\bigl(P_{\psi_{\boldsymbol{\theta}_1}(\mathbf{X}_1)},\cdots P_{\psi_{\boldsymbol{\theta}_n}(\mathbf{X}_n)}\bigr) \le \frac{M}{\lambda}.
\end{equation}
Therefore, according to Proposition \ref{Proposition: n-ary Wasserstein to standard} we have
\begin{equation}
    \forall 1\le i,j\le n:\;\; W_{1}\bigl(P_{\psi_{\boldsymbol{\theta}_i}(\mathbf{X}_i)}, P_{\psi_{\boldsymbol{\theta}_j}(\mathbf{X}_j)}\bigr) \le \frac{M}{\lambda}.
\end{equation}
Therefore, for every $i>1$ the transferred sample $(\psi_{\boldsymbol{\theta}_i}(\mathbf{X}_{i}),Y_{i})$ has a distribution that has at most $\frac{M}{\lambda}$ 1-Wasserstein distance from $P_{\psi_{\boldsymbol{\theta}_1}(\mathbf{X}),Y}$. As a result, for every $\boldsymbol{\theta}_1\in\Theta$ and $\delta>0$ with probability at least $1-\delta$ the following bound holds for every $\mathbf{w}\in\mathcal{W}$ and minimizing solutions $\boldsymbol{\theta}_{1:n}$:   
\begin{align*}
   \biggl\vert\frac{1}{mn}\sum_{i=1}^n\sum_{j=1}^m  \ell\bigl(f_{\mathbf{w}}(\psi_{\boldsymbol{\theta}_i}(\mathbf{x}_{i,j})),y_{i,j} )\bigr) - \mathbb{E}_{P_{\psi_{\boldsymbol{\theta}_1}(\mathbf{X}),Y}}\bigl[ \ell\bigl(f_{\mathbf{w}}(\mathbf{X}'),Y )\bigr)\bigr] \biggr\vert
   \le \mathcal{O}\left( ML_w\sqrt{\frac{ \mathcal{V}(\mathcal{W})^2 \log(1/\delta)}{mn}} + \frac{M L_{\ell}L_w}{\lambda}\right). 
\end{align*}
Applying the Dudley's entropy theorem by covering $\boldsymbol{\theta}_1\in\Theta$ therefore shows that for every $\delta$ with probability at least $1-\delta/2$ the following holds for every $\mathbf{w}\in\mathcal{W}$ and minimizing solution $\boldsymbol{\theta}_{1:n}$  
\begin{align*}
   &\biggl\vert\frac{1}{mn}\sum_{i=1}^n\sum_{j=1}^m  \ell\bigl(f_{\mathbf{w}}(\psi_{\boldsymbol{\theta}_i}(\mathbf{x}_{i,j})),y_{i,j} )\bigr) - \frac{1}{n}\sum_{i=1}^n\mathbb{E}_{P_i}\bigl[ \ell\bigl(f_{\mathbf{w}}(\psi_{\boldsymbol{\theta}_i}(\mathbf{X})),Y )\bigr)\bigr] \biggr\vert \\
   \le\, & \mathcal{O}\left( ML_w\sqrt{\frac{ \bigl(\mathcal{V}(\mathcal{W})+\mathcal{V}(\Theta)\bigr)^2 \log(1/\delta)}{mn}} + \frac{M L_{\ell}L_w}{\lambda}\right). 
\end{align*}
Combining the above results show that for every $\delta>0$ the following holds with probability at least $1-\delta$ for every $\mathbf{w}\in\mathcal{W}$ and minimizing solution $\boldsymbol{\theta}_{1:n}$  
\begin{align*}
    &\biggl\vert \max_{\substack{\phi_{1:n}\in  \Phi:\atop
   \forall \mathbf{x}:\;\sum_i \phi_i(\mathbf{x})=0
   }}
   \mathcal{L}(\mathbf{w},\boldsymbol{\theta}_{1:n},\phi_{1:n}) 
   -
   \max_{\substack{\phi_{1:n}\in  \Phi:\atop
   \forall \mathbf{x}:\;\sum_i \phi_i(\mathbf{x})=0
   }}
   \widehat{\mathcal{L}}(\mathbf{w},\boldsymbol{\theta}_{1:n},\phi_{1:n}) \biggr\vert \\
    \le\: &
    \mathcal{O}\Biggl({L_{\ell} L_{\bbw} M}\sqrt{\frac{\bigl(\mathcal{V}(\mathcal{W})+\mathcal{V}(\Theta)\big)^2\log(1/\delta)}{mn}} +{\lambda L_\phi L_\theta M} \sqrt{\frac{\big(\mathcal{V}(\Phi)+\mathcal{V}(\Theta)\big)^2\log(n/\delta)}{m}}+\frac{ML_w L_\ell}{\lambda} \Biggr),
\end{align*}
which shows that the theorem's result holds as well, because $|\min_{\theta\in\Theta} f_1(\theta) - \min_{\theta\in\Theta} f_2(\theta)|\le \max_{\theta\in\Theta} |f_1(\theta) -f_2(\theta)|$ holds for any functions $f_1,f_2$ and feasible set $\Theta$. The theorem's proof is therefore complete.

We, furthermore, note that by optimizing $\lambda$ in the upper-bound we can show the following bound for the optimal value $\lambda^* $
\begin{align*}
    &\biggl\vert \min_{\boldsymbol{\theta}_{1:n}}\max_{\substack{\phi_{1:n}\in  \Phi:\atop
   \forall \mathbf{x}:\;\sum_i \phi_i(\mathbf{x})=0
   }}
   \mathcal{L}(\mathbf{w},\boldsymbol{\theta}_{1:n},\phi_{1:n}) 
   -
   \min_{\boldsymbol{\theta}_{1:n}}\max_{\substack{\phi_{1:n}\in  \Phi:\atop
   \forall \mathbf{x}:\;\sum_i \phi_i(\mathbf{x})=0
   }}
   \widehat{\mathcal{L}}(\mathbf{w},\boldsymbol{\theta}_{1:n},\phi_{1:n}) \biggr\vert \\
    \le\: &
    \mathcal{O}\Biggl({L_{\ell} L_{\bbw} M}\sqrt{\frac{\bigl(\mathcal{V}(\mathcal{W})+\mathcal{V}(\Theta)\big)^2\log(1/\delta)}{mn}} +M \sqrt[4]{\frac{\bigl(L_\phi L_\theta L_w L_\ell\big(\mathcal{V}(\Phi)+\mathcal{V}(\Theta)\big)\bigr)^2\log(n/\delta)}{m}} \Biggr),
\end{align*}
Note that for the above bound to hold the value of $\lambda^*$ will be determined as:
\begin{equation*}
    \lambda^* = \sqrt{\frac{L_\ell L_w m^{1/2}}{L_\phi L_\theta\big(\mathcal{V}(\Phi)+\mathcal{V}(\Theta)\big)\log(n/\delta)^{1/2}}}.
\end{equation*}

    
\subsection{Proof of Theorem \ref{thm: opt}}

In this section, we provide the detailed proof of Theorem \ref{thm: opt} by first laying out some useful lemmas which we directly use in our proof. Some of these lemmas follow similar steps as in \cite{reisizadeh2020robust}. However, to be self-contained, we provide complete proofs of all the lemmas. Let us first set up our notations which are summarized in Table \ref{Table: Notations}.

\begin{table}[h!] 
\begin{center}
    \begin{tabular}{ c c }
    \toprule
    Notation & Description  \\
    \midrule
    $\displaystyle \wbar_t = \frac{1}{n} \sum_{i \in [n]} \bbw^i_t$ 
    & average model $\bbw$ at iteration $t$ \vspace{0cm}\\
    $\displaystyle \vbar_t = \frac{1}{n} \sum_{i \in [n]} \bbv^i_t$ 
    & average model $\bbv$ at iteration $t$ \vspace{0cm}\\
    $\displaystyle b_t = \E [ \Lambda(\wbar_t) - \Lhat(\wbar_t, \vbar_t)]$ & \begin{tabular}{@{}c@{}}optimality gap measure \\ between $\Lhat(\wbar_t, \vbar_t)$ and $\max_{\bbv} \Lhat(\wbar_t, \bbv)$ \end{tabular} \vspace{0.2cm}\\
    $\displaystyle e_t = \frac{1}{n} \sum_{i \in [n]} \E \norm{\w^i_t - \wbar_{t}}^2 $ & \begin{tabular}{@{}c@{}}average deviation of the local models $\w^i_t$  \\ from the average model at iteration $t$ \end{tabular} \\
    $\displaystyle E_t = \frac{1}{n} \sum_{i \in [n]} \E \norm{\bbv^i_t - \vbar_{t}}^2 $ & \begin{tabular}{@{}c@{}}average deviation of the local models $\bbv^i_t$ \\ from the average model at iteration $t$ \end{tabular} \\
    $\displaystyle g_t 
    = \E \norm{\frac{1}{n} \sum_{i \in [n]} \grw \Lhat_i(\w^i_t , \bbv^i_t)}^2$  & \begin{tabular}{@{}c@{}}norm squared of  \\ local gradients w.r.t $\w$ at iteration $t$ \end{tabular} \\
    $\displaystyle G_t 
    = \E \norm{\frac{1}{n} \sum_{i \in [n]} \grv \Lhat_i(\w^i_t , \bbv^i_t)}^2$  & \begin{tabular}{@{}c@{}}norm squared of  \\ local gradients w.r.t $\bbv$ at iteration $t$ \end{tabular} \\
    $\displaystyle h_t 
    = \E \norm{\gr \Lambda(\wbar_{t}) - \frac{1}{n} \sum_{i \in [n]} \grw \Lhat_i(\w^i_t , \bbv^i_t)}^2$ & \begin{tabular}{@{}c@{}}norm squared of deviation in gradients w.r.t $\w$ \\ of $\max_{\bbv} \Lhat(\wbar_t, \bbv)$ and local functions $\Lhat_i(\w^i_t , \bbv^i_t)$ \end{tabular}\\
    \bottomrule
    \end{tabular}
    \vspace{1mm}
    \caption{Table of notations.}
    \label{Table: Notations}
\end{center}
\end{table}

Let us state the following assumption used in the statement of Theorem \ref{thm: opt}.

\begin{assumption} \label{assumption: stoch gr}
Local stochastic gradients are unbiased and variance bounded, i.e., 
\begin{align}
    \E \Vert \sgr_{\bbw}\ccalLhat_i (\bbw, \bbv) - \grw \ccalLhat_i(\bbw, \bbv) \Vert^2 \leq \sigmaw, \\
    \E \Vert \sgrv\ccalLhat_i (\bbw, \bbv) - \gr_{\bbv} \ccalLhat_i(\bbw, \bbv) \Vert^2 \leq \sigmav.
\end{align}
\end{assumption}

Now we lay out some useful and preliminary lemmas which we later employ in the main proof.

\subsection{Useful lemmas}

\begin{lemma}[\cite{lin2019gradient}] \label{lemma:L_Lambda}
If Assumption \ref{assumption: loss} (ii) holds, i.e., each of the local losses $\Lhat_i$ have $L$-Lipschitz gradients and  $\Lhat_i(\cdot, \bbv)$ are $\mu$-strongly concave, then 
\begin{align}
    \gr \Lambda(\bbw)
    =
    \grw \Lhat(\bbw, \bbv^*(\bbw)),
\end{align}
where $\bbv^*(\bbw) \in \argmax_{\bbv} \Lhat(\bbw, \bbv)$ for any $\bbw$. Moreover, $\Lambda$ has Lipschitz gradients with parameter $L_{\Lambda} = (\kappa + 1) L$.
\end{lemma}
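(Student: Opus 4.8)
The plan is to derive both claims from Danskin's (envelope) theorem together with an elementary sensitivity estimate on the inner maximizer. First I would note that $\Lhat = \frac{1}{n}\sum_i \Lhat_i$ inherits from the $\Lhat_i$ both an $L$-Lipschitz gradient and $\mu$-strong concavity in $\bbv$, so for each fixed $\bbw$ the map $\bbv\mapsto\Lhat(\bbw,\bbv)$ is $\mu$-strongly concave on the convex bounded set $\ccalV$; hence the maximizer $\bbv^*(\bbw)\in\argmax_{\bbv\in\ccalV}\Lhat(\bbw,\bbv)$ is unique and $\Lambda(\bbw)=\Lhat(\bbw,\bbv^*(\bbw))$ is well defined. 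Since $\Lhat$ is continuously differentiable with Lipschitz gradient, Danskin's theorem for a strongly concave inner maximization over a convex compact set yields $\gr\Lambda(\bbw)=\grw\Lhat(\bbw,\bbv^*(\bbw))$, which is the first assertion; the hypotheses of that theorem (single-valuedness and continuity of $\bbw\mapsto\bbv^*(\bbw)$) follow from strong concavity and the Lipschitz estimate established next.

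Next I would show that $\bbv^*(\cdot)$ is $\kappa$-Lipschitz. Fix $\bbw_1,\bbw_2$ and write $\bbv_k=\bbv^*(\bbw_k)$. The first-order optimality condition for the constrained inner problem gives, for $k=1,2$ and all $\bbv\in\ccalV$, $\langle\grv\Lhat(\bbw_k,\bbv_k),\bbv-\bbv_k\rangle\le 0$; taking $\bbv=\bbv_2$ in the $k=1$ inequality and $\bbv=\bbv_1$ in the $k=2$ inequality and adding gives $\langle\grv\Lhat(\bbw_1,\bbv_1)-\grv\Lhat(\bbw_2,\bbv_2),\bbv_1-\bbv_2\rangle\ge 0$. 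On the other hand, $\mu$-strong concavity of $\Lhat(\bbw_1,\cdot)$ gives $\langle\grv\Lhat(\bbw_1,\bbv_1)-\grv\Lhat(\bbw_1,\bbv_2),\bbv_1-\bbv_2\rangle\le-\mu\|\bbv_1-\bbv_2\|^2$. Subtracting these and bounding the cross term by $L$-Lipschitzness of $\grv\Lhat$ in its first argument, $\langle\grv\Lhat(\bbw_1,\bbv_2)-\grv\Lhat(\bbw_2,\bbv_2),\bbv_1-\bbv_2\rangle\le L\|\bbw_1-\bbw_2\|\,\|\bbv_1-\bbv_2\|$, I obtain $\mu\|\bbv_1-\bbv_2\|^2\le L\|\bbw_1-\bbw_2\|\,\|\bbv_1-\bbv_2\|$, hence $\|\bbv^*(\bbw_1)-\bbv^*(\bbw_2)\|\le\kappa\|\bbw_1-\bbw_2\|$.

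The Lipschitz-gradient bound for $\Lambda$ then follows from the first assertion by a triangle inequality: $\|\gr\Lambda(\bbw_1)-\gr\Lambda(\bbw_2)\|=\|\grw\Lhat(\bbw_1,\bbv^*(\bbw_1))-\grw\Lhat(\bbw_2,\bbv^*(\bbw_2))\|\le\|\grw\Lhat(\bbw_1,\bbv^*(\bbw_1))-\grw\Lhat(\bbw_2,\bbv^*(\bbw_1))\|+\|\grw\Lhat(\bbw_2,\bbv^*(\bbw_1))-\grw\Lhat(\bbw_2,\bbv^*(\bbw_2))\|\le L\|\bbw_1-\bbw_2\|+L\|\bbv^*(\bbw_1)-\bbv^*(\bbw_2)\|\le(1+\kappa)L\|\bbw_1-\bbw_2\|$, where the second inequality uses Assumption~\ref{assumption: loss}(ii) in each block and the third uses the $\kappa$-Lipschitz bound on $\bbv^*$.

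The main point needing care is the ordering of the argument: Danskin's formula for $\gr\Lambda$ is cleanest to invoke once $\bbv^*(\bbw)$ is already known to be single-valued and (Lipschitz) continuous, so I would structure it as uniqueness of $\bbv^*$ (from strong concavity) $\to$ $\kappa$-Lipschitz continuity of $\bbv^*$ (the variational-inequality estimate above) $\to$ differentiability of $\Lambda$ with $\gr\Lambda(\bbw)=\grw\Lhat(\bbw,\bbv^*(\bbw))$ $\to$ the smoothness bound. The only other thing to watch is that Assumption~\ref{assumption: loss}(ii) is stated with squared norms and a joint argument, so one should first pass to its square-root, per-block consequences, namely $\|\grw\Lhat_i(\bbw,\bbv)-\grw\Lhat_i(\bbw',\bbv')\|\le L(\|\bbw-\bbw'\|+\|\bbv-\bbv'\|)$ and the analogous bound for $\grv$, before using them above; the remaining steps are routine.
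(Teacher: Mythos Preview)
The paper does not supply its own proof of this lemma; it is quoted verbatim from \cite{lin2019gradient} and used as a black box in the subsequent analysis. Your argument is the standard one from that reference: uniqueness of $\bbv^*(\bbw)$ from strong concavity, Danskin's theorem for the gradient identity, the variational-inequality estimate giving $\kappa$-Lipschitzness of $\bbv^*$, and then the triangle-inequality split to obtain $L_\Lambda=(\kappa+1)L$. It is correct as written, including your remark about extracting the per-block, non-squared Lipschitz bounds from the squared joint form in Assumption~\ref{assumption: loss}(ii).
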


\begin{lemma} \label{Lemma: Lambda contraction}
If Assumptions \ref{assumption: loss} and \ref{assumption: stoch gr} hold, then the iterates of {\normalfont \texttt{FedOT-GDA}} satisfy the following contraction inequality for any iteration $t \geq 0$
\begin{align}
    \E [ \Lambda(\wbar_{t+1}) ] - \E [ \Lambda(\wbar_{t}) ]
    \leq
    - \frac{\eta_1}{2} \E \norm{\gr \Lambda(\wbar_{t})}^2 
    +
    \frac{\eta_1}{2} h_t
    -
    \frac{\eta_1}{2} \left( 1 - \eta_1 L_{\Lambda} \right) g_t
    +
    \eta_1^2 \frac{L_{\Lambda}}{2} \frac{\sigmaw}{n}.
\end{align}
\end{lemma}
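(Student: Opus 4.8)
The plan is to run the standard descent-lemma argument for smooth nonconvex objectives on the primal function $\Lambda(\bbw)=\max_{\bbv\in\ccalV}\Lhat(\bbw,\bbv)$, exploiting the fact that although the per-node iterates $(\bbw^i_t,\bbv^i_t)$ differ, the \emph{average} model $\wbar_t$ obeys a clean one-step recursion. First I would observe that in both branches of Algorithm \ref{alg: gda} --- the local-update step and the synchronization step --- averaging the updates over $i\in[n]$ yields the identical recursion
\[
\wbar_{t+1}=\wbar_t-\eta_1\,\frac{1}{n}\sum_{i\in[n]}\sgrw\Lhat_i(\bbw^i_t,\bbv^i_t),
\]
since the synchronization step simply replaces each $\bbw^i_{t+1}$ by this very average, changing the local iterates but not $\wbar_{t+1}$. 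Write $\Delta_t:=\frac{1}{n}\sum_i\sgrw\Lhat_i(\bbw^i_t,\bbv^i_t)$ for this stochastic quantity and $\delta_t:=\frac{1}{n}\sum_i\grw\Lhat_i(\bbw^i_t,\bbv^i_t)$ for its conditional expectation given the history up to iteration $t$ (the local models being measurable with respect to that history). By Assumption \ref{assumption: stoch gr} and independence of the stochastic gradients across nodes, $\E_t\norm{\Delta_t-\delta_t}^2\le\sigmaw/n$.

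Next I would invoke Lemma \ref{lemma:L_Lambda}, which gives that $\Lambda$ has an $L_\Lambda$-Lipschitz gradient, and apply the descent inequality
\[
\Lambda(\wbar_{t+1})\le\Lambda(\wbar_t)+\langle\gr\Lambda(\wbar_t),\wbar_{t+1}-\wbar_t\rangle+\tfrac{L_\Lambda}{2}\norm{\wbar_{t+1}-\wbar_t}^2 .
\]
Substituting $\wbar_{t+1}-\wbar_t=-\eta_1\Delta_t$ and taking the conditional expectation $\E_t$, the cross term becomes $-\eta_1\langle\gr\Lambda(\wbar_t),\delta_t\rangle$ and the quadratic term becomes $\tfrac{L_\Lambda\eta_1^2}{2}\big(\norm{\delta_t}^2+\E_t\norm{\Delta_t-\delta_t}^2\big)$. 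I would then rewrite the cross term via the polarization identity as $-\eta_1\langle\gr\Lambda(\wbar_t),\delta_t\rangle=\tfrac{\eta_1}{2}\big(\norm{\gr\Lambda(\wbar_t)-\delta_t}^2-\norm{\gr\Lambda(\wbar_t)}^2-\norm{\delta_t}^2\big)$, collect the two $\norm{\delta_t}^2$ contributions into $-\tfrac{\eta_1}{2}(1-\eta_1 L_\Lambda)\norm{\delta_t}^2$, bound $\E_t\norm{\Delta_t-\delta_t}^2\le\sigmaw/n$, and finally take total expectation, recognizing $\E\norm{\gr\Lambda(\wbar_t)-\delta_t}^2=h_t$ and $\E\norm{\delta_t}^2=g_t$. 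This reproduces the claimed inequality exactly.

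The argument is largely mechanical; the only points requiring care are (i) verifying that the synchronization step leaves $\wbar_{t+1}$ unchanged, so that a single recursion covers both branches, and (ii) the conditioning and independence bookkeeping that turns the per-node variance bound $\sigmaw$ into the $\sigmaw/n$ bound for the averaged stochastic gradient and ensures the cross term reduces to the deterministic-gradient inner product. Neither of these is a genuine obstacle, so I expect no essential difficulty beyond keeping the filtration explicit and applying the smoothness of $\Lambda$ supplied by Lemma \ref{lemma:L_Lambda}.
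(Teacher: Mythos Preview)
Your proposal is correct and follows essentially the same argument as the paper: apply the $L_\Lambda$-smoothness of $\Lambda$ from Lemma \ref{lemma:L_Lambda}, substitute the averaged update $\wbar_{t+1}-\wbar_t=-\eta_1\frac{1}{n}\sum_i\sgrw\Lhat_i(\bbw^i_t,\bbv^i_t)$, expand the inner product via polarization, and take expectations using the unbiasedness and $\sigmaw/n$ variance bound. Your write-up is in fact slightly more careful than the paper's, which omits explicit mention of point (i) about the synchronization branch and the conditioning bookkeeping in point (ii).
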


\begin{lemma} \label{Lemma: h_t bound}
If Assumption \ref{assumption: loss} (ii) holds, then we have
\begin{align} \label{eq:h_t-final}
    h_t
    \leq
    \frac{4 L^2}{\mu} b_t 
    +
    2 L^2 e_t
    +
    2 L^2 E_t.
\end{align}
\end{lemma}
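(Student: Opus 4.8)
The plan is to express $h_t$ as the squared norm of an average of per-node gradient discrepancies, and then control each discrepancy using the $L$-Lipschitz gradient assumption and the $\mu$-strong concavity of the inner problem. First I would rewrite the target gradient via Lemma~\ref{lemma:L_Lambda}: since $\Lhat = \tfrac1n\sum_{i\in[n]}\Lhat_i$ is an average of functions with $L$-Lipschitz gradients that are $\mu$-strongly concave in $\bbv$, it inherits both properties, so $\gr\Lambda(\wbar_t) = \grw\Lhat(\wbar_t,\bbv^*(\wbar_t)) = \tfrac1n\sum_{i\in[n]}\grw\Lhat_i(\wbar_t,\bbv^*(\wbar_t))$, where $\bbv^*(\wbar_t) = \argmax_{\bbv}\Lhat(\wbar_t,\bbv)$ is uniquely defined. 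Hence
\[
\gr\Lambda(\wbar_t) - \frac1n\sum_{i\in[n]}\grw\Lhat_i(\bbw^i_t,\bbv^i_t) = \frac1n\sum_{i\in[n]}\Bigl(\grw\Lhat_i(\wbar_t,\bbv^*(\wbar_t)) - \grw\Lhat_i(\bbw^i_t,\bbv^i_t)\Bigr),
\]
and applying Jensen's inequality to $\norm{\cdot}^2$, splitting each summand via $\norm{a+b}^2\le 2\norm a^2+2\norm b^2$ around the intermediate point $(\wbar_t,\bbv^i_t)$, and then invoking Assumption~\ref{assumption: loss}(ii) twice gives $h_t \le 2L^2 e_t + \tfrac{2L^2}{n}\sum_{i\in[n]}\E\norm{\bbv^*(\wbar_t)-\bbv^i_t}^2$.

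It then remains to bound $\tfrac1n\sum_i\E\norm{\bbv^*(\wbar_t)-\bbv^i_t}^2$ in terms of $b_t$ and $E_t$. Since $\vbar_t$ is the arithmetic mean of the $\bbv^i_t$, the bias--variance decomposition yields $\tfrac1n\sum_{i\in[n]}\norm{\bbv^*(\wbar_t)-\bbv^i_t}^2 = \norm{\bbv^*(\wbar_t)-\vbar_t}^2 + \tfrac1n\sum_{i\in[n]}\norm{\vbar_t-\bbv^i_t}^2$, whose expectation is $\E\norm{\bbv^*(\wbar_t)-\vbar_t}^2 + E_t$. For the first term I would invoke the quadratic-growth property of the $\mu$-strongly concave map $\Lhat(\wbar_t,\cdot)$ at its maximizer $\bbv^*(\wbar_t)$ (where its gradient vanishes): $\Lambda(\wbar_t) - \Lhat(\wbar_t,\vbar_t) = \Lhat(\wbar_t,\bbv^*(\wbar_t)) - \Lhat(\wbar_t,\vbar_t) \ge \tfrac{\mu}{2}\norm{\bbv^*(\wbar_t)-\vbar_t}^2$, so that $\E\norm{\bbv^*(\wbar_t)-\vbar_t}^2 \le \tfrac{2}{\mu}b_t$. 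Chaining these estimates gives $h_t \le 2L^2 e_t + 2L^2\bigl(\tfrac2\mu b_t + E_t\bigr) = \tfrac{4L^2}{\mu}b_t + 2L^2 e_t + 2L^2 E_t$, which is precisely \eqref{eq:h_t-final}.

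The argument is routine and I do not foresee a genuine obstacle; the two places that need care are (a) applying Lemma~\ref{lemma:L_Lambda} correctly, so that $\gr\Lambda(\wbar_t)$ appears as the average of the \emph{local} gradients evaluated at the common argument $(\wbar_t,\bbv^*(\wbar_t))$ --- this is what lets the difference decompose node by node --- and (b) the quadratic-growth step, which relies on $\bbv^*(\wbar_t)$ being the \emph{exact} inner maximizer rather than an approximate one; both are guaranteed by Assumption~\ref{assumption: loss}(ii). The only remaining care is in the constants, which is handled by the specific choice of the $\norm{a+b}^2\le 2\norm a^2+2\norm b^2$ splitting.
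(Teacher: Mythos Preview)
Your proof is correct and takes essentially the same route as the paper's: both split the gradient discrepancy into a piece bounded via $L$-Lipschitz gradients (yielding the $e_t,E_t$ terms) and a piece $\E\norm{\bbv^*(\wbar_t)-\vbar_t}^2$ controlled by quadratic growth from $\mu$-strong concavity (yielding the $b_t$ term); the paper simply splits at the common intermediate point $(\wbar_t,\vbar_t)$ before Jensen, whereas you apply Jensen first and then recover $\vbar_t$ via the bias--variance identity. One small caveat: since $\ccalV$ is a bounded convex set, the gradient of $\Lhat(\wbar_t,\cdot)$ need not vanish at $\bbv^*(\wbar_t)$, but the quadratic-growth inequality still follows from the first-order optimality condition $\langle \grv\Lhat(\wbar_t,\bbv^*(\wbar_t)),\,\bbv-\bbv^*(\wbar_t)\rangle\le 0$ combined with strong concavity, so your conclusion stands.
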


\begin{lemma}\label{lemma: sum e_t + E_t}
If Assumptions \ref{assumption: loss} and \ref{assumption: stoch gr} hold and the step-sizes $\eta_1, \eta_2$ satisfy $32 (\tau - 1)^2 L^2 (\eta_1^2 + \eta_2^2) \leq 1$, then the average of the sequence $e_t$ and $E_t$ over $t=0,\cdots,T-1$ is bounded as follows
\begin{align}
    \frac{1}{T} \sum_{t=0}^{T-1} e_{t}
    +
    \frac{1}{T} \sum_{t=0}^{T-1} E_{t}
    &\leq
    20 \eta_1^2 (\tau - 1)^2 \frac{1}{T} \sum_{t=0}^{T-1} g_t
    +
    20 \eta_2^2 (\tau - 1)^2 \frac{1}{T} \sum_{t=0}^{T-1} G_t \\
    &\quad
    +
    16 \eta_1^2 (\tau - 1)^2 \rhow^2
    +
    16 \eta_2^2 (\tau - 1)^2 \rhov^2\\
    &\quad
    +
    4 \eta_1^2 (\tau - 1) (n+1) \frac{\sigmaw}{n}
    +
    4 \eta_2^2 (\tau - 1) (n+1) \frac{\sigmav}{n}.
\end{align}
\end{lemma}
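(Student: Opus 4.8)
The plan is to run the standard local-update consensus-error analysis, adapted to the coupled min-max iterates, and then to close a recursion on the sum $e_t+E_t$ using the step-size hypothesis. Fix an iteration $t$ and let $t_0$ be the most recent iteration at which the local models were synchronized, so that $\bbw^i_{t_0}=\wbar_{t_0}$ and $\bbv^i_{t_0}=\vbar_{t_0}$ for every $i$ (hence $e_{t_0}=E_{t_0}=0$) and $t-t_0\le\tau-1$. Unrolling the local updates of Algorithm~\ref{alg: gda} between $t_0$ and $t$ gives
\[
  \bbw^i_t-\wbar_t \,=\, -\eta_1\sum_{s=t_0}^{t-1}\Bigl(\sgrw\Lhat_i(\bbw^i_s,\bbv^i_s)-\tfrac{1}{n}\sum_{k=1}^{n}\sgrw\Lhat_k(\bbw^k_s,\bbv^k_s)\Bigr),
\]
and the analogue for $\bbv^i_t-\vbar_t$ with $\eta_2$ and $\sgrv$. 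Writing each stochastic gradient as its mean plus a zero-mean noise term $\xi^i_s$, I would apply Jensen's inequality to the (at most $\tau-1$) deterministic summands, use the conditional independence across $s$ of the noise summands together with Assumption~\ref{assumption: stoch gr}, average over $i$, and invoke $\tfrac1n\sum_i\norm{X_i-\bar X}^2\le\tfrac1n\sum_i\norm{X_i}^2$. This already produces the noise contributions $4\eta_1^2(\tau-1)(n+1)\tfrac{\sigmaw}{n}$ and $4\eta_2^2(\tau-1)(n+1)\tfrac{\sigmav}{n}$, the $(n+1)/n$ arising from $\norm{\xi^i_s-\bar\xi_s}^2\le 2\norm{\xi^i_s}^2+2\norm{\bar\xi_s}^2$ with $\E\norm{\bar\xi_s}^2\le\sigmaw/n$, and reduces everything else to controlling $\tfrac1n\sum_i\E\norm{\grw\Lhat_i(\bbw^i_s,\bbv^i_s)}^2$ and its $\bbv$-analogue.

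For these deterministic gradient norms I would use the decomposition
\[
  \grw\Lhat_i(\bbw^i_s,\bbv^i_s)=\bigl(\grw\Lhat_i(\bbw^i_s,\bbv^i_s)-\grw\Lhat_i(\wbar_s,\vbar_s)\bigr)+\bigl(\grw\Lhat_i(\wbar_s,\vbar_s)-\grw\Lhat(\wbar_s,\vbar_s)\bigr)+\grw\Lhat(\wbar_s,\vbar_s),
\]
and bound the three pieces, after $\norm{a+b+c}^2\le 3(\norm a^2+\norm b^2+\norm c^2)$ and averaging over $i$: the first by $L^2(e_s+E_s)$ via the $L$-Lipschitz gradients in Assumption~\ref{assumption: loss}(ii), the second by $\rhow^2$ via the gradient-diversity bound in Assumption~\ref{assumption: loss}(iii), and the third by $2g_s+2L^2(e_s+E_s)$ in expectation, since $\grw\Lhat(\wbar_s,\vbar_s)=\tfrac1n\sum_k\grw\Lhat_k(\wbar_s,\vbar_s)$ differs in squared norm from $\tfrac1n\sum_k\grw\Lhat_k(\bbw^k_s,\bbv^k_s)$ by at most $L^2(e_s+E_s)$ and the latter equals $g_s$ in expectation. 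Thus $\tfrac1n\sum_i\E\norm{\grw\Lhat_i(\bbw^i_s,\bbv^i_s)}^2\lesssim g_s+\rhow^2+L^2(e_s+E_s)$, and symmetrically $\lesssim G_s+\rhov^2+L^2(e_s+E_s)$ in $\bbv$.

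Substituting these into the bound for $e_t+E_t$, summing over $t=0,\dots,T-1$, and noting that each index $s$ belongs to the inner sum for at most $\tau-1$ values of $t$, I obtain an inequality of the form
\[
  \sum_{t=0}^{T-1}(e_t+E_t)\,\le\, c\,(\tau-1)^2 L^2(\eta_1^2+\eta_2^2)\sum_{t=0}^{T-1}(e_t+E_t)\,+\,(\text{terms in } g_t,\,G_t,\,\rhow^2,\,\rhov^2,\,\sigmaw,\,\sigmav),
\]
for an absolute constant $c$. The hypothesis $32(\tau-1)^2L^2(\eta_1^2+\eta_2^2)\le 1$ is precisely what makes the self-referential coefficient strictly less than $1$, so this term can be moved to the left-hand side; dividing through, relabelling the resulting constants (which become $\le 20$, $16$, $4$), and dividing by $T$ yields the claimed bound. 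The main obstacle is bookkeeping: one must choose the constants in the repeated $\norm{a+\cdots}^2\le k\sum\norm{\cdot}^2$ splits and in the noise/mean decomposition so that the self-referential coefficient actually satisfies $c\le 32$, matching the step-size condition, and one must keep the $e_s+E_s$ cross-terms coming from the $\eta_1$- and $\eta_2$-branches coupled through $\eta_1^2+\eta_2^2$ rather than handled separately, since otherwise the absorption step fails.
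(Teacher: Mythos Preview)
Your proposal is correct and follows essentially the same route as the paper. The paper unrolls the local updates between synchronizations, separates noise from mean, and decomposes $\grw\Lhat_i(\bbw^i_s,\bbv^i_s)$ to obtain per-step bounds of the same form as your Proposition, then sums and absorbs the $L^2(\eta_1^2+\eta_2^2)(\tau-1)^2$-weighted self-term exactly as you describe. The only cosmetic differences are that the paper splits $\|\,\cdot - \overline{\cdot}\,\|^2\le 2\|\cdot\|^2+2\|\overline{\cdot}\|^2$ first (keeping separate ``$T_1$'' and ``$T_2$'' pieces) and uses a four-term rather than three-term decomposition of the mean gradient (your third term is the paper's last two terms combined), and it outsources the final absorption step to an external lemma (Lemma~9 of \cite{reisizadeh2020robust}) rather than arguing it directly.
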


\begin{lemma} \label{lemma: sum b_t}
If Assumptions \ref{assumption: loss} and \ref{assumption: stoch gr} hold and the step-sizes step-sizes $\eta_1, \eta_2$ satisfy $\frac{\eta_1}{\eta_2} \leq \frac{1}{8 \kappa^2}$, then the average of the sequence  $\{b_t\}_{t = 0}^{T-1}$ scan be bounded as follows
\begin{align} 
    \frac{1}{T} \sum_{t=0}^{T-1} b_{t}
    &\leq
    \frac{L^2}{\mu^2} \frac{D^2}{\eta_2 T}
    +
    \frac{\eta_1}{\eta_2} \frac{1}{\mu} \frac{1}{T} \sum_{t=0}^{T-1} \E \norm{\gr \Lambda(\wbar_{t})}^2 \\
    &\quad +
    \frac{\eta_1^2}{\eta_2} \frac{1}{\mu_2 n} \left( L + L_{\Lambda} + 2 \eta_2 L^2 \right) \frac{1}{T} \sum_{t=0}^{T-1} g_{t}
    -
    \frac{1}{\mu} (1 - \eta_2 L) \frac{1}{T} \sum_{t=0}^{T-1} G_{t}\\
    &\quad +
    +
    \frac{\eta_1 + \eta_2}{\eta_2} \frac{2L^2}{\mu} \left( \frac{1}{T} \sum_{t=0}^{T-1} e_{t} + \frac{1}{T} \sum_{t=0}^{T-1} E_{t}\right) \\
    &\quad +
    \frac{\eta_1^2}{\eta_2} \frac{1}{\mu} \left(  L + L_{\Lambda} + 2 \eta_2 L^2 \right) \frac{\sigmaw}{n} 
    +
    \eta_2 \frac{L}{\mu} \frac{\sigmav}{n}.
\end{align}
\end{lemma}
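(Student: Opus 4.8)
The plan is to derive a one-step recursion for the dual optimality gap $b_t = \E[\Lambda(\wbar_t) - \Lhat(\wbar_t,\vbar_t)]$ and then average it over $t = 0,\dots,T-1$. The starting point, shared with the proofs of Lemmas~\ref{Lemma: Lambda contraction}--\ref{lemma: sum e_t + E_t}, is that the synchronization step of \texttt{FedOT-GDA} only averages already-averaged quantities, so the averaged iterates obey the virtual GDA recursion $\wbar_{t+1} = \wbar_t - \eta_1\,\frac1n\sum_i\sgrw\Lhat_i(\bbw^i_t,\bbv^i_t)$ and $\vbar_{t+1} = \vbar_t + \eta_2\,\frac1n\sum_i\sgrv\Lhat_i(\bbw^i_t,\bbv^i_t)$ for every $t$, regardless of whether $\tau \mid t$. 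I would then decompose $b_{t+1} = [\Lambda(\wbar_{t+1}) - \Lambda(\wbar_t)] + [\Lambda(\wbar_t) - \Lhat(\wbar_t,\vbar_{t+1})] + [\Lhat(\wbar_t,\vbar_{t+1}) - \Lhat(\wbar_{t+1},\vbar_{t+1})]$ and bound the three pieces in turn.

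The middle piece produces the contraction. Writing $\Lambda(\wbar_t) - \Lhat(\wbar_t,\vbar_{t+1}) = b_t - [\Lhat(\wbar_t,\vbar_{t+1}) - \Lhat(\wbar_t,\vbar_t)]$ (at the level of expectations), I would lower-bound the ascent progress using that $\Lhat(\wbar_t,\cdot)$ is $L$-smooth and $\mu$-strongly concave: take a conditional expectation to annihilate the stochastic cross term and bring in the $\sigmav/n$ variance reduction from averaging $n$ independent gradients, replace the averaged local gradient $\frac1n\sum_i\grv\Lhat_i(\bbw^i_t,\bbv^i_t)$ by $\grv\Lhat(\wbar_t,\vbar_t)$ at the cost of an $L^2(e_t+E_t)$ term (via $L$-Lipschitz gradients and Jensen), and invoke the Polyak--\L ojasiewicz inequality $\|\grv\Lhat(\wbar_t,\vbar_t)\|^2 \ge 2\mu\,(\Lambda(\wbar_t) - \Lhat(\wbar_t,\vbar_t))$. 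This yields a lower bound of the shape $\eta_2\mu\,b_t + \tfrac{\eta_2}{2}(1-\eta_2 L)\,G_t - \eta_2 L^2(e_t+E_t) - \mathcal{O}(\eta_2^2\sigmav/n)$, so the middle piece contributes $(1-\eta_2\mu)b_t$, the beneficial $-\tfrac{\eta_2}{2}(1-\eta_2 L)G_t$ term, and the $(e_t+E_t)$ and $\sigmav/n$ terms.

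For the two primal-drift pieces I would use the $L_\Lambda$-smoothness of $\Lambda$ from Lemma~\ref{lemma:L_Lambda} (which also gives $\gr\Lambda(\wbar_t) = \grw\Lhat(\wbar_t,\bbv^*(\wbar_t))$ and the $\kappa$-Lipschitzness of $\bbv^*(\cdot)$) together with the $L$-smoothness of $\Lhat(\cdot,\vbar_{t+1})$ to bound $[\Lambda(\wbar_{t+1}) - \Lambda(\wbar_t)] + [\Lhat(\wbar_t,\vbar_{t+1}) - \Lhat(\wbar_{t+1},\vbar_{t+1})]$ by inner products against $\wbar_{t+1} - \wbar_t$ plus $\tfrac{L+L_\Lambda}{2}\|\wbar_{t+1}-\wbar_t\|^2$; applying Young's inequality to the cross terms and $\E\|\wbar_{t+1}-\wbar_t\|^2 \le \eta_1^2(g_t + \sigmaw/n)$ produces the $\eta_1\E\|\gr\Lambda(\wbar_t)\|^2$, $\eta_1^2(L+L_\Lambda)g_t$ and extra $\eta_1 L^2(e_t+E_t)$ contributions (the $g_t$ and $\sigmaw$ terms carrying the appropriate $1/n$ factors). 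Combining the three pieces gives $b_{t+1} \le (1-\eta_2\mu)b_t - \tfrac{\eta_2}{2}(1-\eta_2 L)G_t + \mathcal{O}\big(\eta_1\E\|\gr\Lambda(\wbar_t)\|^2 + \eta_1^2(L+L_\Lambda)g_t/n + (\eta_1+\eta_2)L^2(e_t+E_t) + \eta_1^2(L+L_\Lambda)\sigmaw/n + \eta_2^2 L\sigmav/n\big)$; the hypothesis $\eta_1/\eta_2 \le 1/(8\kappa^2)$ is exactly what keeps the primal-drift terms --- amplified by $\kappa^2$ through $\|\bbv^*(\wbar_{t+1})-\bbv^*(\wbar_t)\|^2 \le \kappa^2\|\wbar_{t+1}-\wbar_t\|^2$ --- subordinate to the $\eta_2\mu$ contraction. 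Finally I would sum over $t$, telescope $\sum_t(b_{t+1}-b_t) \ge -b_0$, divide by $\eta_2\mu T$, and use $b_0 \le \tfrac L2\|\vbar_0-\bbv^*(\wbar_0)\|^2 \le \tfrac L2 D^2$ from the diameter of $\ccalV$ together with the reverse sandwich $b_t \ge \tfrac\mu2\|\vbar_t-\bbv^*(\wbar_t)\|^2$ to bring out the extra $\kappa$ factor, obtaining the $\tfrac{L^2}{\mu^2}\tfrac{D^2}{\eta_2 T}$ term and the remaining coefficients $\tfrac{\eta_1}{\eta_2}\tfrac1\mu$, $\tfrac{\eta_1^2}{\eta_2}\tfrac{1}{\mu n}(L+L_\Lambda+2\eta_2 L^2)$, $-\tfrac1\mu(1-\eta_2 L)$, $\tfrac{\eta_1+\eta_2}{\eta_2}\tfrac{2L^2}{\mu}$, and the two noise terms displayed in the statement.

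The main obstacle I anticipate is the coupled bookkeeping: the dual error $b_t$ is contaminated by primal progress $\|\wbar_{t+1}-\wbar_t\|^2$, which is itself governed by $g_t$ and the consensus errors and is magnified by $\kappa^2$ through the argmax Lipschitzness, while the local updates mean every gradient is evaluated at the wrong iterate, so $e_t$ and $E_t$ leak into every estimate. Deciding how to split each Young's inequality so that exactly the claimed coefficients survive, and so that $\eta_1/\eta_2 \le 1/(8\kappa^2)$ (together with the implicit smallness of $\eta_2 L$) suffices to preserve the contraction, is where the real care lies. One must also be disciplined about conditioning on the natural filtration to kill the stochastic cross terms via unbiasedness and about tracking the $1/n$ variance improvement throughout, since that factor is what ultimately makes \texttt{FedOT-GDA} communication-efficient.
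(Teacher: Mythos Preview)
Your proposal is essentially correct and follows the same strategy as the paper: establish a one-step contraction $b_{t+1}\le (1-c\,\eta_2\mu)b_t+\ldots$ (Proposition~\ref{prop: b_t contraction} in the paper), then telescope and divide by $\eta_2\mu T$. Two minor differences are worth noting. First, the paper bridges through $(\wbar_{t+1},\vbar_t)$ rather than your $(\wbar_t,\vbar_{t+1})$: it applies smoothness/PL in $\bbv$ with $\wbar_{t+1}$ held fixed, and only afterwards decomposes $\Lambda(\wbar_{t+1})-\Lhat(\wbar_{t+1},\vbar_t)$ via $(\wbar_t,\vbar_t)$. This avoids having to control gradients at the random point $\vbar_{t+1}$ in the primal-drift piece. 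Second, the extra $b_t$ feedback that the stepsize ratio must absorb does not enter through the $\kappa$-Lipschitzness of $\bbv^*(\cdot)$ as you sketch, but through the simpler bound $\E\|\grw\Lhat(\wbar_t,\vbar_t)-\gr\Lambda(\wbar_t)\|^2\le L^2\E\|\vbar_t-\bbv^*(\wbar_t)\|^2\le \tfrac{2L^2}{\mu}b_t$; this yields the multiplicative factor $(1-\mu\eta_2)(1+\eta_1\tfrac{4L^2}{\mu})$, which the hypothesis $\eta_1/\eta_2\le 1/(8\kappa^2)$ reduces to $1-\tfrac{\mu}{2}\eta_2$. Both routes give the same $\kappa^2$ amplification and lead to the stated bound.
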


\begin{lemma}\label{lemma: sum h_t}
If Assumptions \ref{assumption: loss} and \ref{assumption: stoch gr} hold and the step-sizes $\eta_1, \eta_2$ satisfy $\frac{\eta_1}{\eta_2} \leq \frac{1}{8 \kappa^2}$ and $32 (\tau - 1)^2 L^2 (\eta_1^2 + \eta_2^2) \leq 1$, then the average of the sequence $h_t$ over $t=0,\cdots,T-1$ is bounded as follows
\begin{align}
    \frac{1}{T} \sum_{t=0}^{T-1} h_{t}
    &\leq
    \frac{4 L^4}{\mu^3} \frac{D^2}{\eta_2 T}
    +
    \frac{\eta_1}{\eta_2} \frac{4 L^2}{\mu^2} \frac{1}{T} \sum_{t=0}^{T-1} \E \norm{\gr \Lambda(\wbar_{t})}^2 \\
    &\quad +
    \frac{4 L^2}{\mu^2} \frac{\eta_1^2}{\eta_2} \left( L + L_{\Lambda} + 2 \eta_2 L^2 \right) \frac{1}{T} \sum_{t=0}^{T-1} g_{t}
    -
    \frac{4 L^2}{\mu^2} \left( 1 - \eta_2 L \right) \frac{1}{T} \sum_{t=0}^{T-1} G_{t}\\
    &\quad
    +
    \left( 2L^2 + \frac{8L^4}{\mu^2}(\frac{\eta_1}{\eta_2} + 1) \right) \frac{1}{T} \sum_{t=0}^{T-1} (e_{t} + E_{t}) \\
    &\quad +
    \frac{\eta_1^2}{\eta_2} \frac{4 L^2}{\mu^2} \left( L + L_{\Lambda} + 2 \eta_2 L^2 \right) \frac{\sigmaw}{n} 
    +
    \eta_2 \frac{4 L^3}{\mu^2} \frac{\sigmav}{n} .
\end{align}
\end{lemma}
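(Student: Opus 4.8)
The plan is to obtain Lemma \ref{lemma: sum h_t} purely by combining two ingredients already in hand: the per-iteration estimate of Lemma \ref{Lemma: h_t bound} and the averaged optimality-gap bound of Lemma \ref{lemma: sum b_t}; no new probabilistic or optimization argument is required. First I would start from the pointwise inequality
\[
h_t \le \frac{4L^2}{\mu}\, b_t + 2L^2 e_t + 2L^2 E_t,
\]
valid under Assumption \ref{assumption: loss}(ii), sum it over $t = 0,\ldots,T-1$, and divide by $T$. Since $h_t, b_t, e_t, E_t$ are defined as expectations, this averaging is purely linear and gives
\[
\frac{1}{T}\sum_{t=0}^{T-1} h_t \;\le\; \frac{4L^2}{\mu}\,\frac{1}{T}\sum_{t=0}^{T-1} b_t \;+\; 2L^2\,\frac{1}{T}\sum_{t=0}^{T-1}(e_t + E_t).
\]

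Next I would substitute the bound of Lemma \ref{lemma: sum b_t} for $\tfrac1T\sum_t b_t$; that lemma is applicable precisely because the hypotheses of Lemma \ref{lemma: sum h_t} include Assumptions \ref{assumption: loss}--\ref{assumption: stoch gr} and the stepsize restriction $\eta_1/\eta_2 \le 1/(8\kappa^2)$. Multiplying each term of Lemma \ref{lemma: sum b_t} by the factor $4L^2/\mu$ reproduces, term by term, the horizon term $\tfrac{4L^4}{\mu^3}\tfrac{D^2}{\eta_2 T}$, the $\tfrac{\eta_1}{\eta_2}$-weighted $\tfrac1T\sum_t\norm{\gr\Lambda(\wbar_t)}^2$ term, the positive $g_t$-term, the strictly negative $-\tfrac{4L^2}{\mu^2}(1-\eta_2 L)\tfrac1T\sum_t G_t$ term, and the two noise terms proportional to $\sigmaw/n$ and $\sigmav/n$ in the statement. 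The one place that needs care is the local-deviation contribution, which now comes from two sources: the $2L^2$ coefficient already multiplying $\tfrac1T\sum_t(e_t+E_t)$ above, and the coefficient $\tfrac{4L^2}{\mu}\cdot\tfrac{\eta_1+\eta_2}{\eta_2}\cdot\tfrac{2L^2}{\mu} = \tfrac{8L^4}{\mu^2}\bigl(\tfrac{\eta_1}{\eta_2}+1\bigr)$ produced by the $(e_t+E_t)$-term sitting inside Lemma \ref{lemma: sum b_t}. Adding these yields the combined coefficient $2L^2 + \tfrac{8L^4}{\mu^2}\bigl(\tfrac{\eta_1}{\eta_2}+1\bigr)$ appearing in the lemma, which completes the derivation.

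I do not expect a genuine obstacle here: this is an assembly/bookkeeping step, and the two stepsize hypotheses play only supporting roles — $\eta_1/\eta_2 \le 1/(8\kappa^2)$ is exactly what makes Lemma \ref{lemma: sum b_t} applicable, while $32(\tau-1)^2 L^2(\eta_1^2+\eta_2^2)\le 1$ is not used in the present reduction but is retained in the statement so that Lemma \ref{lemma: sum e_t + E_t} can subsequently be applied to the surviving $\tfrac1T\sum_t(e_t+E_t)$ term inside the proof of Theorem \ref{thm: opt}, where its $g_t$ and $G_t$ contributions get absorbed. If anything in this chain of lemmas is delicate, it lies upstream — in establishing the recursive optimality-gap estimate of Lemma \ref{lemma: sum b_t} from $\mu$-strong concavity together with the periodic-synchronization GDA dynamics — whereas the present lemma only repackages that estimate into the form that feeds the contraction inequality of Lemma \ref{Lemma: Lambda contraction}, which is exactly where $h_t$ enters.
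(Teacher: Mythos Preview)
Your proposal is correct and matches the paper's approach exactly: average Lemma \ref{Lemma: h_t bound} over $t$, then substitute the bound of Lemma \ref{lemma: sum b_t} for $\tfrac1T\sum_t b_t$ and collect the two $(e_t+E_t)$ contributions into the combined coefficient. Your remark that the hypothesis $32(\tau-1)^2 L^2(\eta_1^2+\eta_2^2)\le 1$ is not actually used in this step is also accurate; the paper's proof text mentions Lemma \ref{lemma: sum e_t + E_t} but the stated bound leaves $\tfrac1T\sum_t(e_t+E_t)$ unsubstituted, so only Lemma \ref{lemma: sum b_t} is really invoked here.
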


Having set the main preliminary lemmas, we proceed to prove Theorem \ref{thm: opt}.


\subsection{Proof of Theorem \ref{thm: opt}}

Using Lemma \ref{Lemma: Lambda contraction}, we can write
\begin{align}
    \frac{1}{T} \left( \E [ \Lambda(\wbar_{T}) ] - \Lambda(\wbar_0) \right)
    &\leq
    - 
    \frac{\eta_1}{2} \frac{1}{T} \sum_{t=0}^{T-1} \E \norm{\gr \Lambda(\wbar_{t})}^2 
    +
    \frac{\eta_1}{2} \frac{1}{T} \sum_{t=0}^{T-1} h_t
    -
    \frac{\eta_1}{2} \left(1 - \eta_1 L_{\Lambda} \right) \frac{1}{T} \sum_{t=0}^{T-1} g_{t}
    +
    \eta_1^2 \frac{L_{\Lambda}}{2} \frac{\sigmaw}{n}.
\end{align}
Next, we substitute $\frac{1}{T} \sum_{t=0}^{T-1} h_t$ from Lemma \ref{lemma: sum h_t}, which yields
\begin{align}
    \frac{1}{T} \left( \E [ \Lambda(\wbar_{T}) ] - \Lambda(\wbar_0) \right)
    &\leq
    - 
    \frac{\eta_1}{2} \frac{1}{T} \sum_{t=0}^{T-1} \E \norm{\gr \Lambda(\wbar_{t})}^2 \\
    &\quad 
    +
    \frac{\eta_1}{2} \frac{4 L^4}{\mu^3} \frac{D^2}{\eta_2 T}\\
    &\quad 
    +
    \frac{\eta_1}{2} \frac{\eta_1}{\eta_2} \frac{4 L^2}{\mu^2} \frac{1}{T} \sum_{t=0}^{T-1} \E \norm{\gr \Lambda(\wbar_{t})}^2\\
    &\quad 
    +
    \frac{\eta_1}{2} \frac{4 L^2}{\mu^2} \frac{\eta_1^2}{\eta_2} \left( L + L_{\Lambda} + 2 \eta_2 L^2 \right) \frac{1}{T} \sum_{t=0}^{T-1} g_{t}\\
    &\quad 
    -
    \frac{\eta_1}{2} \frac{4 L^2}{\mu^2} \left( 1 - \eta_2 L \right) \frac{1}{T} \sum_{t=0}^{T-1} G_{t}\\
    &\quad 
    +
    \frac{\eta_1}{2} \left( 2L^2 + \frac{8L^4}{\mu^2}(\frac{\eta_1}{\eta_2} + 1) \right) \frac{1}{T} \sum_{t=0}^{T-1} (e_{t} + E_{t})\\
    &\quad 
    +
    \frac{\eta_1}{2} \frac{\eta_1^2}{\eta_2} \frac{4 L^2}{\mu^2} \left( L + L_{\Lambda} + 2 \eta_2 L^2 \right) \frac{\sigmaw}{n} 
    +
    \frac{\eta_1}{2} \eta_2 \frac{4 L^3}{\mu^2} \frac{\sigmav}{n}\\
    &\quad 
    -
    \frac{\eta_1}{2} \left(1 - \eta_1 L_{\Lambda} \right) \frac{1}{T} \sum_{t=0}^{T-1} g_{t}\\
    &\quad +
    \eta_1^2 \frac{L_{\Lambda}}{2} \frac{\sigmaw}{n}.
\end{align}
After regrouping the terms and adopting the notation $\hat{L} \coloneqq L + L_{\Lambda} + 2 \eta_2 L^2$, we can further write
\begin{align}
    \frac{1}{T} \left( \E [ \Lambda(\wbar_{T}) ] - \Lambda(\wbar_0) \right)
    &\leq
    - 
    \frac{\eta_1}{2} \left( 1 - \frac{\eta_1}{\eta_2} \frac{4 L^2}{\mu^2} \right) \frac{1}{T} \sum_{t=0}^{T-1} \E \norm{\gr \Lambda(\wbar_{t})}^2 \\
    &\quad 
    +
    \frac{\eta_1}{\eta_2 T} \frac{2 L^4}{\mu^3} D^2\\
    &\quad 
    - \left( \frac{\eta_1}{2} \left(1 - \eta_1 L_{\Lambda} \right) - \frac{\eta_1^3}{\eta_2} \frac{2 L^2}{\mu^2}  \hat{L} \right)  \frac{1}{T} \sum_{t=0}^{T-1} g_{t} \\
    &\quad 
    -
    \frac{\eta_1}{2} \frac{4 L^2}{\mu^2} \left( 1 - \eta_2 L \right) \frac{1}{T} \sum_{t=0}^{T-1} G_{t} \\
    &\quad 
    +
    \frac{\eta_1}{2} \left( 2L^2 + \frac{8L^4}{\mu^2}(\frac{\eta_1}{\eta_2} + 1) \right) \frac{1}{T} \sum_{t=0}^{T-1} (e_{t} + E_{t}) \\
    &\quad 
    +
    \frac{\eta_1^3}{\eta_2} \frac{2 L^2}{\mu^2}  \hat{L} \frac{\sigmaw}{n} 
    +
    \eta_1^2 \frac{L_{\Lambda}}{2} \frac{\sigmaw}{n}
    +
    \frac{\eta_1}{2} \eta_2 \frac{4 L^3}{\mu^2} \frac{\sigmav}{n}.
\end{align}
Next, we substitute $\frac{1}{T} \sum_{t=0}^{T-1} (e_{t} + E_{t})$ from Lemma \ref{lemma: sum e_t + E_t}, which implies that if the step-sizes satisfy $32 (\tau - 1)^2 L^2 (\eta_1^2 + \eta_2^2) \leq 1$, then
\begin{align}
    \frac{1}{T} \left( \E [ \Lambda(\wbar_{T}) ] - \Lambda(\wbar_0) \right)
    &\leq
    - 
    \frac{\eta_1}{2} \left( 1 - \frac{\eta_1}{\eta_2} \frac{4 L^2}{\mu^2} \right) \frac{1}{T} \sum_{t=0}^{T-1} \E \norm{\gr \Lambda(\wbar_{t})}^2 \\
    &\quad 
    +
    \frac{\eta_1}{\eta_2 T} \frac{2 L^4}{\mu^3} D^2\\
    &\quad 
    +
    \frac{\eta_1^3}{\eta_2} \frac{2 L^2}{\mu^2}  \hat{L} \frac{\sigmaw}{n} 
    +
    \eta_1^2 \frac{L_{\Lambda}}{2} \frac{\sigmaw}{n}
    +
    \frac{\eta_1}{2} \eta_2 \frac{4 L^3}{\mu^2} \frac{\sigmav}{n}\\
    &\quad 
    +
    \frac{\eta_1}{2} \left( 2L^2 + \frac{8L^4}{\mu^2}(\frac{\eta_1}{\eta_2} + 1) \right) 4 \eta_1^2 (\tau - 1) (n+1) \frac{\sigmaw}{n}\\
    &\quad
    +
    \frac{\eta_1}{2} \left( 2L^2 + \frac{8L^4}{\mu^2}(\frac{\eta_1}{\eta_2} + 1) \right) 4 \eta_2^2 (\tau - 1) (n+1) \frac{\sigmav}{n}\\
    &\quad
    +
    \frac{\eta_1}{2} \left( 2L^2 + \frac{8L^4}{\mu^2}(\frac{\eta_1}{\eta_2} + 1) \right) 16 \eta_1^2 (\tau - 1)^2 \rhow^2\\
    &\quad
    +
    \frac{\eta_1}{2} \left( 2L^2 + \frac{8L^4}{\mu^2}(\frac{\eta_1}{\eta_2} + 1) \right) 16 \eta_2^2 (\tau - 1)^2 \rhov^2\\
    &\quad
    - 
    C_g \frac{1}{T} \sum_{t=0}^{T-1} g_{t}
    - 
    C_G \frac{1}{T} \sum_{t=0}^{T-1} G_{t}.
\end{align}
In above, we use the notation
\begin{align}
    C_g 
    &\coloneqq
    \frac{\eta_1}{2} \left(1 - \eta_1 L_{\Lambda} \right) - \frac{\eta_1^3}{\eta_2} \frac{2 L^2}{\mu^2}  \hat{L}
    -
    \frac{\eta_1}{2} \left( 2L^2 + \frac{8L^4}{\mu^2}(\frac{\eta_1}{\eta_2} + 1) \right) 20 \eta_1^2 (\tau - 1)^2,\\
    C_G
    &\coloneqq
    \frac{\eta_1}{2} \frac{4 L^2}{\mu^2} \left( 1 - \eta_2 L \right)
    -
    \frac{\eta_1}{2} \left( 2L^2 + \frac{8L^4}{\mu^2}(\frac{\eta_1}{\eta_2} + 1) \right) 20 \eta_2^2 (\tau - 1)^2.
\end{align}
Next, we use the assumption that $\frac{\eta_1}{\eta_2} \leq \frac{1}{8 \kappa^2}$ and simplify further as follows
\begin{align}
    \frac{1}{T} \left( \E [ \Lambda(\wbar_{T}) ] - \Lambda(\wbar_0) \right)
    &\leq
    - 
    \frac{\eta_1}{4} \frac{1}{T} \sum_{t=0}^{T-1} \E \norm{\gr \Lambda(\wbar_{t})}^2 \\
    &\quad 
    +
    \frac{\eta_1}{\eta_2 T} \frac{2 L^4}{\mu^3} D^2\\
    &\quad 
    +
     \frac{\eta_1^2}{4}  \hat{L} \frac{\sigmaw}{n} 
    +
    \eta_1^2 \frac{L_{\Lambda}}{2} \frac{\sigmaw}{n}
    +
    \frac{\eta_1}{2} \eta_2 \frac{4 L^3}{\mu^2} \frac{\sigmav}{n}\\
    &\quad 
    +
    \frac{\eta_1}{2} \left( 3L^2 + 8 \kappa^2 L^2 \right) 4 \eta_1^2 (\tau - 1) (n+1) \frac{\sigmaw}{n}\\
    &\quad
    +
    \frac{\eta_1}{2} \left( 3L^2 + 8 \kappa^2 L^2 \right) 4 \eta_2^2 (\tau - 1) (n+1) \frac{\sigmav}{n}\\
    &\quad
    +
    \frac{\eta_1}{2} \left( 3L^2 + 8 \kappa^2 L^2 \right) 16 \eta_1^2 (\tau - 1)^2 \rhow^2\\
    &\quad
    +
    \frac{\eta_1}{2} \left( 3L^2 + 8 \kappa^2 L^2 \right) 16 \eta_2^2 (\tau - 1)^2 \rhov^2.
\end{align}
if $C_g \geq 0$ and $C_G \geq 0$. We further divide both sides by $\frac{\eta_1}{4}$ and conclude that
\begin{align}
    \frac{1}{T} \sum_{t=0}^{T-1} \E \norm{\gr \Lambda(\wbar_{t})}^2
    &\leq
    \frac{4}{\eta_1 T} \left( \Lambda(\wbar_0)  - \E [ \Lambda(\wbar_{T}) ] \right)\\
    &\quad 
    +
    \frac{1}{\eta_2 T} 8 \kappa^3 L D^2
    +
    \eta_1 \hat{L} \frac{\sigmaw}{n} 
    +
    \eta_1 2 L_{\Lambda} \frac{\sigmaw}{n}
    +
    \eta_2 8 \kappa^2 L \frac{\sigmav}{n}\\
    &\quad 
    +
    \eta_1^2 8 L^2 \left( 3 + 8 \kappa^2 \right) (\tau - 1) (n+1) \frac{\sigmaw}{n}\\
    &\quad
    +
    \eta_2^2 8 L^2 \left( 3 + 8 \kappa^2 \right) (\tau - 1) (n+1) \frac{\sigmav}{n}\\
    &\quad
    +
    \eta_1^2 32  L^2 \left( 3 + 8 \kappa^2 \right) (\tau - 1)^2 \rhow^2\\
    &\quad
    +
    \eta_2^2 32 L^2 \left( 3 + 8 \kappa^2 \right) (\tau - 1)^2 \rhov^2.
\end{align}
We further simplify by noting that $\Lambda(\wbar_0)  - \E [ \Lambda(\wbar_{T}) ] \leq \Delta_{\Lambda} \coloneqq \Lambda(\bbw_0) - \min_{\bbw \in \ccalW} \Lambda(\bbw)$ and that $\frac{n+1}{n} \leq 2$,
\begin{align}
    \frac{1}{T} \sum_{t=0}^{T-1} \E \norm{\gr \Lambda(\wbar_{t})}^2
    &\leq
    \frac{4 \Delta_{\Lambda}}{\eta_1 T}
    +
    \frac{1}{\eta_2 T} 8 \kappa^3 L D^2
    +
    \eta_1 (\hat{L} + 2 L_{\Lambda}) \frac{\sigmaw}{n} 
    +
    \eta_2 16 \kappa^2 L \frac{\sigmav}{n}\\
    &\quad 
    +
    \eta_1^2 16 L^2 \left( 3 + 8 \kappa^2 \right) (\tau - 1) \sigmaw
    +
    \eta_2^2 8 L^2 \left( 3 + 8 \kappa^2 \right) (\tau - 1) \sigmav\\
    &\quad
    +
    \eta_1^2 32  L^2 \left( 3 + 8 \kappa^2 \right) (\tau - 1)^2 \rhow^2
    +
    \eta_2^2 32 L^2 \left( 3 + 8 \kappa^2 \right) (\tau - 1)^2 \rhov^2.
\end{align}
Note that for $\eta_2 \leq 1/L$, we have $\hat{L} + 2 L_{\Lambda} \leq 3 (\kappa + 2) L = \ccalO(\kappa L)$. Therefore,
\begin{align}
    \frac{1}{T} \sum_{t=0}^{T-1} \E \norm{\gr \Lambda(\wbar_{t})}^2
    &\leq
    \ccalO \left( \frac{4 \Delta_{\Lambda}}{\eta_1 T} \right)
    +
    \ccalO \left(\frac{\kappa^3 L D^2}{\eta_2 T} \right)
    +
    \ccalO \left(\eta_1 \kappa L\frac{\sigmaw}{n} \right)
    +
    \ccalO \left(\eta_2 \kappa^2 L \frac{\sigmav}{n} \right)\\
    &\quad 
    +
    \ccalO \left( (\eta_1^2  \sigmaw + \eta_2^2  \sigmav) L^2 \kappa^2 \tau  \right)
    +
    \ccalO \left((\eta_1^2  \rhow^2 + \eta_2^2 \rhov^2) L^2 \kappa^2 \tau^2  \right).
\end{align}
The two conditions $C_g \geq 0$ and $C_G \geq 0$ can be simplified as follows:
\begin{align}
    C_g 
    &\coloneqq
    \frac{\eta_1}{2} \left(1 - \eta_1 L_{\Lambda} \right) - \frac{\eta_1^3}{\eta_2} \frac{2 L^2}{\mu^2}  \hat{L}
    -
    \frac{\eta_1}{2} \left( 2L^2 + \frac{8L^4}{\mu^2}(\frac{\eta_1}{\eta_2} + 1) \right) 20 \eta_1^2 (\tau - 1)^2\\
    &\quad
    \geq
    \frac{\eta_1}{2} \left( 1 - \eta_1 (\kappa + 1) L  \right)
    -
    \frac{\eta_1^2}{4} (\kappa + 4) L
    -
    \frac{\eta_1}{2} \left( 3L^2 + 8 \kappa^2 L^2 \right) 20 \eta_1^2 (\tau - 1)^2 \\
    &\quad
    \geq
    \frac{\eta_1}{2} \left( 1 - \frac{9}{2} \eta_1 \kappa L - 220 \eta_1^2 \kappa^2 L^2 (\tau - 1)^2 \right),
\end{align}
where we used the fact that $\hat{L} \coloneqq L + L_{\Lambda} + 2 \eta_2 L^2 \leq (\kappa + 4)L \leq 5 \kappa L$ for $\eta_2 \leq 1/L$. Also, $2L^2 + \frac{8L^4}{\mu^2}(\frac{\eta_1}{\eta_2} + 1) \leq 3L^2 + 8 \kappa^2 L^2 \leq 11 \kappa^2 L^2$ for $\frac{\eta_1}{\eta_2} \leq \frac{1}{8 \kappa^2}$. Moreover for $C_G$, 
\begin{align}
    C_G
    &\coloneqq
    \frac{\eta_1}{2} \frac{4 L^2}{\mu^2} \left( 1 - \eta_2 L \right)
    -
    \frac{\eta_1}{2} \left( 2L^2 + \frac{8L^4}{\mu^2}(\frac{\eta_1}{\eta_2} + 1) \right) 20 \eta_2^2 (\tau - 1)^2 \\
    &\quad
    \geq
    2 \eta_1 \kappa^2 \left( 1 - \eta_2 L - 55 \eta_2^2 L^2 (\tau - 1)^2 \right).
\end{align}
Therefore, if the following conditions hold, then $C_g \geq 0$ and $C_G \geq 0$ hold as well,
\begin{align}
    \frac{9}{2} \eta_1 \kappa L + 220 \eta_1^2 \kappa^2 L^2 (\tau - 1)^2 
    &\leq 
    1\\
    \eta_2 L + 55 \eta_2^2 L^2 (\tau - 1)^2 
    &\leq 
    1\\
    \eta_2 
    &\leq
    \frac{1}{L} \\
    \frac{\eta_1}{\eta_2} 
    &\leq
    \frac{1}{8 \kappa^2},
\end{align}
which together with $32 (\tau - 1)^2 L^2 (\eta_1^2 + \eta_2^2) \leq 1$ are the conditions required in Theorem \ref{thm: opt}.


\subsection{Proof of useful lemmas}

\subsubsection{Proof of Lemma \ref{Lemma: Lambda contraction}} \label{proof Lemma: Lambda contraction}
According to Lemma \ref{lemma:L_Lambda}, gradient of the function $\Lambda(\cdot)$ is $L_{\Lambda}$-Lipschitz. Therefore, we can write
\begin{align} \label{eq: Lambda contraction 1}
    \Lambda(\wbar_{t+1}) - \Lambda(\wbar_{t})
    &\leq
    \left \langle \gr \Lambda(\wbar_{t}) , \wbar_{t+1} - \wbar_{t} \right \rangle
    +
    \frac{L_{\Lambda}}{2} \norm{\wbar_{t+1} - \wbar_{t}}^2 \\
    &=
    -\eta_1 \left \langle \gr \Lambda(\wbar_{t}), \frac{1}{n} \sum_{i \in [n]} \sgrw \Lhat_i(\bbw^i_t , \bbv^i_t) \right \rangle
    +
    \eta_1^2 \frac{L_{\Lambda}}{2} \norm{\frac{1}{n} \sum_{i \in [n]} \sgrw \Lhat_i(\bbw^i_t , \bbv^i_t)}^2 ,
\end{align}
where we use the update rule of \texttt{FedOT-GDA} and note that the difference of averaged models can be written as $\wbar_{t+1} - \wbar_{t} = -\eta_1 \frac{1}{n} \sum_{i \in [n]} \sgrw \Lhat_i(\bbw^i_t , \bbv^i_t)$. Moreover, since the stochastic gradients $\sgrw \Lhat_i$ are unbiased and variance-bounded by $\sigmaw$, we can take expectation from both sides of \eqref{eq: Lambda contraction 1} and further simplify it as follows
\begin{align}
    \E [\Lambda(\wbar_{t+1}) - \E [\Lambda(\wbar_{t})]
    \leq
    - \frac{\eta_1}{2} \E \norm{\gr \Lambda(\wbar_{t})}^2 
    +
    \frac{\eta_1}{2} h_t
    -
    \frac{\eta_1}{2} \left( 1 - \eta_1 L_{\Lambda} \right) g_t
    +
    \eta_1^2 \frac{L_{\Lambda}}{2} \frac{\sigmaw}{n}.
\end{align}


\subsubsection{Proof of Lemma \ref{Lemma: h_t bound}} \label{proof Lemma: h_t bound}
We begin bounding $h_t$ by adding/subtracting the term $\grw \Lhat(\wbar_t, \vbar_t)$ to write
\begin{align} \label{eq: h_t 1}
    h_t
    &=
    \E \norm{\gr \Lambda(\wbar_{t}) - \frac{1}{n} \sum_{i \in [n]} \grw \Lhat_i(\w^i_t , \bbv^i_t)}^2 \\
    &\leq
    2 \E \norm{\gr \Lambda(\wbar_{t}) - \grw \Lhat(\wbar_t, \vbar_t)}^2
    +
    2 \E \norm{\grw \Lhat(\wbar_t, \vbar_t) - \frac{1}{n} \sum_{i \in [n]} \grw \Lhat_i(\w^i_t , \bbv^i_t)}^2.
\end{align}
The first term in RHS of \eqref{eq: h_t 1} can be bounded as follows:
\begin{align}  \label{eq: h_t 2}
    \E \norm{\gr \Lambda(\wbar_t) - \grw \Lhat(\wbar_t, \vbar_t)}^2 
    &=
    \E \norm{\grw \Lhat(\wbar_t, \bbv^*(\wbar_t)) - \grw \Lhat(\wbar_t, \vbar_t)}^2 \\
    &\stackrel{(a)}{\leq}
    L^2 \E \norm{\bbv^*(\wbar_t) - \vbar_t}^2 \\
    &\stackrel{(b)}{\leq}
    \frac{2 L^2}{\mu} \E \left[ \Lambda(\wbar_{t}) - \Lhat(\wbar_{t}, \vbar_{t}) \right] \\
    &\stackrel{(c)}{=}
    \frac{2 L^2}{\mu} b_t.
\end{align}
In inequality $(a)$, we use the assumption that $\Lhat$ has $L$-Lipschitz gradients (Assumption \ref{lemma:L_Lambda}). To derive $(b)$, we again use Assumption \ref{lemma:L_Lambda} in which $\Lhat(\bbw,\cdot)$ is $\mu$-strongly concave. Lastly, $(c)$ is implied from the definition of $b_t$. The second term in RHS of \eqref{eq: h_t 1} can be bounded by noting that the local gradients $\grw \Lhat_i(\cdot, \bbv^i)$ are $L_1$-Lipschitz, which we can write
\begin{align} \label{eq: h_t 3}
    \E \norm{\grw \Lhat(\wbar_t, \vbar_t) - \frac{1}{n} \sum_{i \in [n]} \grw \Lhat_i(\w^i_t , \bbv^i_t)}^2 
    &\!=
    \E \norm{\frac{1}{n} \sum_{i \in [n]} \grw \Lhat_i(\wbar_t , \vbar_t) - \frac{1}{n} \sum_{i \in [n]} \grw \Lhat_i(\w^i_t , \bbv^i_t)}^2 \\
    &\leq
    L^2  \frac{1}{n} \sum_{i \in [n]} \E \norm{\bbw^i_t - \wbar_{t}}^2
    +
    L^2  \frac{1}{n} \sum_{i \in [n]} \E \norm{\bbv^i_t - \vbar_{t}}^2\\
    &=
    L^2 e_t + L^2 E_t.
\end{align}
Finally, plugging \eqref{eq: h_t 2} and \eqref{eq: h_t 3} back in \eqref{eq: h_t 1} implies the claim of the lemma, that is
\begin{align}
    h_t
    \leq
    \frac{4 L^2}{\mu} b_t 
    +
    2 L^2 e_t
    +
    2 L^2 E_t.
\end{align}


\subsubsection{Proof of Lemma \ref{lemma: sum e_t + E_t}} \label{proof Lemma: sum e_t + E_t}

To bound the average $\frac{1}{T} \sum_{t=0}^{T-1} e_{t}+\frac{1}{T} \sum_{t=0}^{T-1} E_{t}$, we first prove the following bounds on $e_t$ and $E_t$ individually.

\begin{prop} \label{prop:e-t and E-t}
If Assumptions \ref{assumption: loss} and \ref{assumption: stoch gr} hold, then
\begin{align} \label{eq: prop e_t}
    e_{t}
    &\leq
    16 \eta_1^2 (\tau - 1) L^2 \sum_{l=t_c+1}^{t-1} e_l 
    +
    16 \eta_1^2 (\tau - 1) L^2 \sum_{l=t_c+1}^{t-1} E_l  \\
    &\quad
    +
    10 \eta_1^2 (\tau - 1) \sum_{l=t_c+1}^{t-1} g_l
    +
    8 \eta_1^2 (\tau - 1)^2 \rhow^2 
    +
    2 \eta_1^2 (\tau - 1) (n+1) \frac{\sigmaw}{n},
\end{align}
and
\begin{align}  \label{eq: prop E_t}
    E_{t}
    &\leq
    16 \eta_2^2 (\tau - 1) L^2 \sum_{l=t_c+1}^{t-1} e_l 
    +
    16 \eta_2^2 (\tau - 1) L^2 \sum_{l=t_c+1}^{t-1} E_l  \\
    &\quad
    +
    10 \eta_2^2 (\tau - 1) \sum_{l=t_c+1}^{t-1} G_l
    +
    8 \eta_2^2 (\tau - 1)^2 \rhov^2 
    +
    2 \eta_2^2 (\tau - 1) (n+1) \frac{\sigmav}{n},
\end{align}
where $t_c$ denotes the index of the most recent server-worker communication, i.e. $t_c = \floor*{\frac{t}{\tau}} \tau$.
\end{prop}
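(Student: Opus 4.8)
The plan is to unroll the \texttt{FedOT-GDA} iterates back to the most recent communication step $t_c$ and then control the accumulated drift. Since every local model is reset to the network average at synchronization, $\bbw^i_{t_c} = \wbar_{t_c}$, and applying the (at most $\tau-1$) local descent steps gives
\begin{align}
    \bbw^i_t - \wbar_t \;=\; -\eta_1 \sum_{l=t_c}^{t-1} \Bigl( \sgrw \ccalLhat_i(\bbw^i_l, \bbv^i_l) - \tfrac{1}{n}\sum_{k=1}^{n} \sgrw \ccalLhat_k(\bbw^k_l, \bbv^k_l) \Bigr).
\end{align}
I would write each stochastic gradient as $\sgrw \ccalLhat_i = \grw \ccalLhat_i + \xi^i_l$ with $\xi^i_l$ the conditionally zero-mean, variance-$\sigmaw$-bounded noise, split the summand into a deterministic part $D^i_l := \grw \ccalLhat_i(\bbw^i_l, \bbv^i_l) - \tfrac1n\sum_k \grw \ccalLhat_k(\bbw^k_l, \bbv^k_l)$ and a centered-noise part $\Xi^i_l := \xi^i_l - \tfrac1n\sum_k \xi^k_l$, and bound $\E\|\bbw^i_t - \wbar_t\|^2 \le 2\eta_1^2\, \E\bigl\|\sum_l D^i_l\bigr\|^2 + 2\eta_1^2\, \E\bigl\|\sum_l \Xi^i_l\bigr\|^2$.

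For the noise term I would use that, by unbiasedness of the stochastic gradients, the cross terms $\E\langle \Xi^i_l, \Xi^i_{l'}\rangle$ vanish for $l\neq l'$, so $\E\|\sum_l \Xi^i_l\|^2 = \sum_l \E\|\Xi^i_l\|^2$; combining the independence of the per-node noises with the variance bound and averaging over $i$ gives at most $(\tau-1)\tfrac{n+1}{n}\sigmaw$, the last term of \eqref{eq: prop e_t} (this martingale step is what keeps the noise contribution at $(\tau-1)$ rather than $(\tau-1)^2$). For the deterministic term I would apply Cauchy--Schwarz, $\|\sum_{l=t_c}^{t-1} D^i_l\|^2 \le (\tau-1)\sum_{l=t_c}^{t-1}\|D^i_l\|^2$, and bound $\tfrac1n\sum_i\E\|D^i_l\|^2$ termwise. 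At $l=t_c$ the local models still coincide, so $D^i_{t_c} = \grw\ccalLhat_i(\wbar_{t_c},\vbar_{t_c}) - \grw\ccalLhat(\wbar_{t_c},\vbar_{t_c})$ and Assumption \ref{assumption: loss}(iii) gives $\tfrac1n\sum_i\E\|D^i_{t_c}\|^2 \le \rhow^2$; this is why the $e_l, E_l, g_l$ sums in \eqref{eq: prop e_t} start at $t_c+1$ while the diversity term carries the extra $(\tau-1)$. For $l>t_c$, using $\tfrac1n\sum_i\|a_i - \bar a\|^2 \le \tfrac1n\sum_i\|a_i\|^2$ with $a_i = \grw\ccalLhat_i(\bbw^i_l,\bbv^i_l)$ and decomposing $a_i$ as the sum of $[\grw\ccalLhat_i(\bbw^i_l,\bbv^i_l) - \grw\ccalLhat_i(\wbar_l,\vbar_l)]$, $[\grw\ccalLhat_i(\wbar_l,\vbar_l) - \grw\ccalLhat(\wbar_l,\vbar_l)]$, $[\grw\ccalLhat(\wbar_l,\vbar_l) - \tfrac1n\sum_k\grw\ccalLhat_k(\bbw^k_l,\bbv^k_l)]$, and $\tfrac1n\sum_k\grw\ccalLhat_k(\bbw^k_l,\bbv^k_l)$, a triangle inequality together with $L$-Lipschitzness of the gradients (Assumption \ref{assumption: loss}(ii)) and Assumption \ref{assumption: loss}(iii) bounds $\tfrac1n\sum_i\E\|D^i_l\|^2$ by $\ccalO(L^2)(e_l + E_l) + \ccalO(1)\rhow^2 + \ccalO(1)\,g_l$, where $g_l = \E\|\tfrac1n\sum_k\grw\ccalLhat_k(\bbw^k_l,\bbv^k_l)\|^2$ and $e_l, E_l$ are as defined. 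Collecting constants yields \eqref{eq: prop e_t}, and \eqref{eq: prop E_t} follows verbatim for the $\bbv$-iterates with $\eta_2, \sgrv, \rhov, \sigmav, G_l$ in place of their $\bbw$-counterparts (the ascent sign being immaterial after squaring).

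The main obstacle is the bookkeeping in the deterministic piece: one must add and subtract exactly $\grw\ccalLhat_i(\wbar_l,\vbar_l)$ and $\grw\ccalLhat(\wbar_l,\vbar_l)$ so that the decomposition reproduces precisely the quantities $g_l$, $e_l$, $E_l$, $\rhow^2$ appearing on the right-hand side, and one must handle the endpoint $l=t_c$ (where $e_{t_c}=E_{t_c}=0$ but the diversity term persists) separately. Note that \eqref{eq: prop e_t}--\eqref{eq: prop E_t} are still \emph{recursive} in $e_l, E_l$; this is intentional and is unwound in Lemma \ref{lemma: sum e_t + E_t} using the step-size restriction $32(\tau-1)^2 L^2(\eta_1^2+\eta_2^2) \le 1$.
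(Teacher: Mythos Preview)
Your approach is essentially the paper's, with one minor variation. Both proofs unroll the local iterates to the last synchronization, separate the stochastic noise via martingale orthogonality (giving the $(\tau-1)$ rather than $(\tau-1)^2$ scaling on $\sigmaw$), apply Cauchy--Schwarz to the deterministic sum, and use the same four-term decomposition (Lipschitz to $(\wbar_l,\vbar_l)$, gradient diversity, Lipschitz back, full average gradient $g_l$).

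The one difference is in the first split: the paper immediately applies $\|a-b\|^2 \le 2\|a\|^2 + 2\|b\|^2$ to separate the per-node accumulated gradient ($T_1$) from the network-averaged one ($T_2$), and only then removes the noise and decomposes $T_1$; you instead keep the centered quantity $D^i_l$, apply Cauchy--Schwarz, and use the variance inequality $\tfrac{1}{n}\sum_i\|a_i-\bar a\|^2 \le \tfrac{1}{n}\sum_i\|a_i\|^2$ before the same four-term decomposition. Your variant is slightly cleaner and actually yields a tighter noise constant (you would get at most $\tfrac{n-1}{n}\sigmaw$ per step rather than the paper's $\tfrac{n+1}{n}\sigmaw$, since you avoid the lossy $2\|a\|^2+2\|b\|^2$ split on the noise).

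One small indexing correction: in Algorithm~\ref{alg: gda} the averaging occurs \emph{at} step $t_c$, so the models coincide at $t_c+1$, not at $t_c$. Hence the paper's unrolling naturally runs over $l=t_c+1,\dots,t-1$ (at most $\tau-1$ terms) and there is no need to single out an endpoint; the diversity term $\rhow^2$ arises at every $l$ from the four-term split, and the $(\tau-1)^2$ factor on $\rhow^2$ comes from Cauchy--Schwarz times the number of summands. Adjusting your indexing accordingly removes the special case and matches the stated sums.
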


Next, we employ the results in Proposition \ref{prop:e-t and E-t} and write for any $t$ that 
\begin{align} 
    e_t + E_{t}
    &\leq
    16 (\eta_1^2 + \eta_2^2) (\tau - 1) L^2 \sum_{l=t_c+1}^{t-1} (e_l + E_l)\\
    &\quad
    +
    10 \eta_1^2 (\tau - 1) \sum_{l=t_c+1}^{t-1} g_l
    +
    10 \eta_2^2 (\tau - 1) \sum_{l=t_c+1}^{t-1} G_l\\
    &\quad
    +
    8 \eta_1^2 (\tau - 1)^2 \rhow^2 
    +
    8 \eta_2^2 (\tau - 1)^2 \rhov^2 
    +
    2 \eta_1^2 (\tau - 1) (n+1) \frac{\sigmaw}{n}
    +
    2 \eta_2^2 (\tau - 1) (n+1) \frac{\sigmav}{n}.
\end{align}
Now we use Lemma 9 in \cite{reisizadeh2020robust} and conclude that if $32 (\tau - 1)^2 L^2 (\eta_1^2 + \eta_2^2) \leq 1$ holds, then
\begin{align}
    \frac{1}{T} \sum_{t=0}^{T-1} e_{t}
    +
    \frac{1}{T} \sum_{t=0}^{T-1} E_{t}
    &\leq
    20 \eta_1^2 (\tau - 1)^2 \frac{1}{T} \sum_{t=0}^{T-1} g_t
    +
    20 \eta_2^2 (\tau - 1)^2 \frac{1}{T} \sum_{t=0}^{T-1} G_t \\
    &\quad
    +
    16 \eta_1^2 (\tau - 1)^2 \rhow^2
    +
    16 \eta_2^2 (\tau - 1)^2 \rhov^2\\
    &\quad
    +
    4 \eta_1^2 (\tau - 1) (n+1) \frac{\sigmaw}{n}
    +
    4 \eta_2^2 (\tau - 1) (n+1) \frac{\sigmav}{n}.
\end{align}

\subsubsection{Proof of Proposition \ref{prop:e-t and E-t}}

Proof of the two bounds in \eqref{eq: prop e_t} and \eqref{eq: prop E_t} follow the same logic and we provide the proof of \eqref{eq: prop e_t} in the following.  Consider an iteration $t \geq 1$ and let $t_c$ denote the index of the most recent communication between the workers and the server, i.e. $t_c = \floor*{\frac{t}{\tau}} \tau$. All workers share the same local models at iteration $t_c + 1$, i.e. $\w^1_{t_c+1}=\cdots=\w^n_{t_c+1}=\wbar_{t_c+1}$. According to the update rule of \texttt{FedOT-GDA}, we can write for each node $i$ that
\begin{align} \label{eq: prop e_t 1}
    \w^i_{t_c+2} &= \w^i_{t_c+1} - \eta_1 \sgrw \Lhat_i(\w^i_{t_c+1}, \bbv^i_{t_c+1}), \\
    \vdots \\
    \w^i_{t} &= \w^i_{t-1} - \eta_1 \sgrw \Lhat_i(\w^i_{t-1}, \bbv^i_{t-1}).
\end{align}
Summing up all the equalities in \eqref{eq: prop e_t 1} yields that
\begin{align}
    \w^i_{t} &= \w^i_{t_c+1} - \eta_1 \sum_{l=t_c+1}^{t-1} \sgrw \Lhat_i(\w^i_l, \bbv^i_l).
\end{align}
Therefore, the difference of the local models $\w^i_{t}$ and their average $\wbar_{t}$ can be written as
\begin{align}
    \w^i_{t} - \wbar_{t} 
    &= 
    \w^i_{t_c+1} - \eta_1 \sum_{l=t_c+1}^{t-1} \sgrw \Lhat_i(\w^i_l, \bbv^i_l)
    -
    \left(\wbar_{t_c+1} - \eta_1 \frac{1}{n} \sum_{j \in [n]} \sum_{l=t_c+1}^{t-1} \sgrw \Lhat_j(\w^j_l, \bbv^j_l) \right) \\
    &=
    -\eta_1 \left( \sum_{l=t_c+1}^{t-1} \sgrw \Lhat_i(\w^i_l, \bbv^i_l) - \frac{1}{n} \sum_{j \in [n]} \sum_{l=t_c+1}^{t-1} \sgrw \Lhat_j(\w^j_l, \bbv^j_l) \right).
\end{align}
This yields the following bound on each local deviation from the average $\E \Vert \w^i_{t} - \wbar_{t} \Vert^2$:
\begin{align} \label{eq: prop e_t 2}
    \E \norm{\w^i_{t} - \wbar_{t}}^2 
    &=
    \eta_1^2 \E \norm{\sum_{l=t_c+1}^{t-1} \sgrw \Lhat_i(\w^i_l, \bbv^i_l) - \frac{1}{n} \sum_{j \in [n]} \sum_{l=t_c+1}^{t-1} \sgrw \Lhat_j(\w^j_l, \bbv^j_l)}^2 \\
    &\leq
    2 \eta_1^2 
    \E \norm{\sum_{l=t_c+1}^{t-1} \sgrw \Lhat_i(\w^i_l, \bbv^i_l) }^2
    +
    2 \eta_1^2 
    \E \norm{\frac{1}{n} \sum_{j \in [n]} \sum_{l=t_c+1}^{t-1} \sgrw \Lhat_j(\w^j_l, \bbv^j_l)}^2\\
    &{\leq}
    2 \eta_1^2 
    \underbrace{\E \norm{\sum_{l=t_c+1}^{t-1} \grw \Lhat_i(\w^i_l, \bbv^i_l) }^2 }_{T_1}
    +
    2 \eta_1^2 
    \underbrace{\E \norm{\frac{1}{n} \sum_{j \in [n]} \sum_{l=t_c+1}^{t-1} \grw \Lhat_j(\w^j_l, \bbv^j_l)}^2 }_{T_2} \\
    &\quad +
    2 \eta_1^2 (t - t_c - 1) (n+1) \frac{\sigmaw}{n},
\end{align}
where we used Assumption \ref{assumption: stoch gr}. The term $T_2$ in \eqref{eq: prop e_t 2} can simply be bounded as
\begin{align}
    T_2
    \leq
    \E \norm{\frac{1}{n} \sum_{j \in [n]} \sum_{l=t_c+1}^{t-1} \grw \Lhat_j(\w^j_l, \bbv^j_l)}^2
    \leq
    (t - t_c - 1) \sum_{l=t_c+1}^{t-1} \E \norm{\frac{1}{n} \sum_{j \in [n]} \grw \Lhat_j(\w^j_l, \bbv^j_l)}^2
\end{align}
Note that $t_c$ denotes the latest server-worker communication before iteration $t$, hence $t - t_c \leq \tau$ where $\tau$ is the duration of local updates in each round. Therefore, we have
\begin{align} \label{eq: prop e_t 4}
    T_2
    \leq
    (\tau - 1) \sum_{l=t_c+1}^{t-1} \E \norm{\frac{1}{n} \sum_{j \in [n]} \grw \Lhat_j(\w^j_l, \bbv^j_l)}^2
    \leq
    (\tau - 1) \sum_{l=t_c+1}^{t-1} g_l
\end{align}
Now we proceed to bound the term $T_1$ in \eqref{eq: prop e_t 2} as follows:
\begin{align}
    T_1
    &=
    \E \norm{\sum_{l=t_c+1}^{t-1} \grw \Lhat_i(\w^i_l, \bbv^i_l) }^2 \\
    &\leq
    (\tau - 1) \sum_{l=t_c+1}^{t-1} \E \norm{\grw \Lhat_i(\w^i_l, \bbv^i_l) }^2 \\
    &\leq
    4 (\tau - 1) \sum_{l=t_c+1}^{t-1} \E \norm{\grw \Lhat_i(\w^i_l, \bbv^i_l) - \grw \Lhat_i(\wbar_l, \vbar_l) }^2 \\
    &\quad +
    4 (\tau - 1) \sum_{l=t_c+1}^{t-1} \E \norm{\grw \Lhat_i(\wbar_l, \vbar_l) - \frac{1}{n} \sum_{j \in [n]} \grw \Lhat_j(\wbar_l, \bbv^j_l) }^2 \\
    &\quad +
    4 (\tau - 1) \sum_{l=t_c+1}^{t-1} \E \norm{\frac{1}{n} \sum_{j \in [n]} \grw \Lhat_j(\wbar_l, \vbar_l) - \frac{1}{n} \sum_{j \in [n]} \grw \Lhat_j(\w^j_l, \bbv^j_l) }^2 \\
    &\quad +
    4 (\tau - 1) \sum_{l=t_c+1}^{t-1} \E \norm{\frac{1}{n} \sum_{j \in [n]} \grw \Lhat_j(\w^j_l, \bbv^j_l) }^2
\end{align}
We can simply this bound by using Assumption \ref{assumption: loss} on Lipschitz gradients for the local objectives $\Lhat_i$s and applying the notations for $e_l$ and $g_l$ to derive
\begin{align} \label{eq: prop e_t 3}
    T_1
    &\leq
    4 (\tau - 1) L^2 \sum_{l=t_c+1}^{t-1} \left( \E \norm{\w^i_{l} - \wbar_{l}}^2  + \E \norm{\bbv^i_{l} - \vbar_{l}}^2 \right)\\
    &\quad 
    +
    4 (\tau - 1) \sum_{l=t_c+1}^{t-1} \E \norm{\grw \Lhat_i(\wbar_l, \vbar_l) - \grw \Lhat(\wbar_l, \vbar_l) }^2 \\
    &\quad +
    4 (\tau - 1) L^2 \sum_{l=t_c+1}^{t-1} (e_l + E_l)
    +
    4 (\tau - 1) \sum_{l=t_c+1}^{t-1} g_l
\end{align}
We can plug \eqref{eq: prop e_t 4} and \eqref{eq: prop e_t 3} into \eqref{eq: prop e_t 2} and take the average of the both sides over $i = 1,\cdots,n$. This, together with Assumption \ref{assumption: loss} (ii) and (iii) implies that
\begin{align}
    e_{t}
    &\leq
    16 \eta_1^2 (\tau - 1) L^2 \sum_{l=t_c+1}^{t-1} e_l 
    +
    16 \eta_1^2 (\tau - 1) L^2 \sum_{l=t_c+1}^{t-1} E_l  \\
    &\quad
    +
    10 \eta_1^2 (\tau - 1) \sum_{l=t_c+1}^{t-1} g_l
    +
    8 \eta_1^2 (\tau - 1)^2 \rhow^2 
    +
    2 \eta_1^2 (\tau - 1) (n+1) \frac{\sigmaw}{n}.
\end{align}


\subsubsection{Proof of Lemma \ref{lemma: sum b_t}} \label{proof lemma: sum b_t}

We first establish the following bound on $b_t$.

\begin{prop} \label{prop: b_t contraction}
If Assumptions \ref{assumption: loss} and \ref{assumption: stoch gr} hold, then the sequence of $\{b_t\}_{t \geq 0}$ iterations satisfies the following contraction bound:
\begin{align}  \label{eq:b_t-final}
    b_{t+1} 
    &\leq
    (1 - \mu \eta_2) \left( 1 + \eta_1 \frac{4 L^2}{\mu} \right) b_t
    +
    \frac{\eta_1}{2} \E \norm{\gr \Lambda(\wbar_{t})}^2 \\
    &\quad
    +
    \frac{\eta_1^2}{2} \left( L + L_{\Lambda} + 2 \eta_2 L^2 \right) g_t
    -
    \frac{\eta_2}{2} \left( 1 - \eta_2 L\right) G_t\\
    &\quad +
    L^2 \left( \eta_1  + \eta_2 \right) e_t
    +
    L^2 \left( \eta_1  + \frac{\eta_2}{2} \right) E_t\\
    &\quad
    +
    \frac{\eta_1^2}{2} \left( L + L_{\Lambda} + 2 \eta_2 L^2 \right) \frac{\sigmaw}{n} 
    +
    \frac{\eta_2^2}{2} L \frac{\sigmav}{n},
\end{align}
where $L_{\Lambda}$ is the Lipschitz gradient parameter of the function $\Lambda(\cdot)$ characterized in Lemma \ref{lemma:L_Lambda}.
\end{prop}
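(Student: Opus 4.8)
The plan is to derive the one-step recursion \eqref{eq:b_t-final} for the optimality gap $b_t=\E[\Lambda(\wbar_t)-\Lhat(\wbar_t,\vbar_t)]$ by inserting the intermediate point $(\wbar_{t+1},\vbar_t)$ and splitting the change of $b$ over one round into a ``$\bbv$-ascent'' part, which contracts, and a ``$\wbar$-drift'' part, which inflates. The first point to record is that the server averaging step of \texttt{FedOT-GDA} makes the averaged iterates obey, in either branch of the algorithm, the clean recursions $\wbar_{t+1}-\wbar_t=-\eta_1\,\frac1n\sum_{i\in[n]}\sgrw\Lhat_i(\bbw^i_t,\bbv^i_t)$ and $\vbar_{t+1}-\vbar_t=\eta_2\,\frac1n\sum_{i\in[n]}\sgrv\Lhat_i(\bbw^i_t,\bbv^i_t)$, so only these two identities are used. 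By Assumption \ref{assumption: stoch gr} and Jensen's inequality, $\E\Vert\wbar_{t+1}-\wbar_t\Vert^2\le\eta_1^2(g_t+\sigmaw/n)$ and $\E\Vert\vbar_{t+1}-\vbar_t\Vert^2\le\eta_2^2(G_t+\sigmav/n)$, which is how every stochastic contribution ultimately collapses into the $\sigmaw/n$ and $\sigmav/n$ terms.

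For the ascent half, I would start from the $L$-smoothness of $\Lhat$ in $\bbv$ (Assumption \ref{assumption: loss}(ii)) to lower-bound $\Lhat(\wbar_{t+1},\vbar_{t+1})$ in terms of $\Lhat(\wbar_{t+1},\vbar_t)$, which after taking expectations gives $b_{t+1}\le\widetilde b_{t+1}-\eta_2\,\E\langle\grv\Lhat(\wbar_{t+1},\vbar_t),\tfrac1n\sum_i\sgrv\Lhat_i(\bbw^i_t,\bbv^i_t)\rangle+\tfrac{L}{2}\eta_2^2(G_t+\sigmav/n)$, where $\widetilde b_{t+1}:=\E[\Lambda(\wbar_{t+1})-\Lhat(\wbar_{t+1},\vbar_t)]\ge0$ is the gap at the new $\wbar$ but old $\vbar$. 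The cross term is the heart of the contraction: by first translating the base point of $\grv\Lhat$ from $(\wbar_{t+1},\vbar_t)$ to the $\mathcal F_t$-measurable iterate $(\wbar_t,\vbar_t)$ and passing the expectation through the unbiased local stochastic gradients, it equals $-\eta_2\,\E\langle\grv\Lhat(\wbar_t,\vbar_t),\tfrac1n\sum_i\grv\Lhat_i(\bbw^i_t,\bbv^i_t)\rangle$ up to two error sources, one controlled by $L^2(e_t+E_t)$ (client drift, via Lipschitz gradients) and one by $L^2\Vert\wbar_{t+1}-\wbar_t\Vert^2$ (the base-point shift). A Young split of this inner product combined with the Polyak--{\L}ojasiewicz inequality $\Vert\grv\Lhat(\wbar_{t+1},\vbar_t)\Vert^2\ge 2\mu\,\widetilde b_{t+1}$ (a consequence of $\mu$-strong concavity and Lemma \ref{lemma:L_Lambda}) then produces $-\mu\eta_2\,\widetilde b_{t+1}-\tfrac{\eta_2}{2}(1-\eta_2L)G_t$ plus drift and noise remainders; since $\eta_2\le1/L\le1/\mu$ the factor $1-\mu\eta_2$ is nonnegative, so the contraction is legitimate.

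For the drift half, I would relate $\widetilde b_{t+1}$ back to $b_t$ by writing $\widetilde b_{t+1}=b_t+\E[\Lambda(\wbar_{t+1})-\Lambda(\wbar_t)]+\E[\Lhat(\wbar_t,\vbar_t)-\Lhat(\wbar_{t+1},\vbar_t)]$ and bounding the last two increments by smoothness. The crucial choice is to bound $\Lambda(\wbar_{t+1})-\Lambda(\wbar_t)$ \emph{crudely}, via $L_\Lambda$-smoothness of $\Lambda$ (Lemma \ref{lemma:L_Lambda}, $L_\Lambda=(\kappa+1)L$) and Young's inequality on $\langle\gr\Lambda(\wbar_t),\wbar_{t+1}-\wbar_t\rangle$ rather than exploiting that the update direction is close to $\gr\Lambda(\wbar_t)$: this is exactly what yields the $+\tfrac{\eta_1}{2}\E\Vert\gr\Lambda(\wbar_t)\Vert^2$ term the statement carries (together with $\tfrac{\eta_1}{2}g_t$ and a $\tfrac{L_\Lambda}{2}\eta_1^2(g_t+\sigmaw/n)$ remainder). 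For the $\Lhat(\wbar_t,\vbar_t)-\Lhat(\wbar_{t+1},\vbar_t)$ increment, the decisive estimate is the quadratic-growth bound $\Vert\grw\Lhat(\wbar_t,\vbar_t)-\gr\Lambda(\wbar_t)\Vert^2\le L^2\Vert\vbar_t-\bbv^*(\wbar_t)\Vert^2\le\tfrac{2L^2}{\mu}b_t$ from strong concavity and $\gr\Lambda=\grw\Lhat(\cdot,\bbv^*(\cdot))$, which after splitting off the client-drift part ($\le L^2(e_t+E_t)$) produces the extra $O(\eta_1L^2/\mu)\,b_t$ term; carried through the outer $(1-\mu\eta_2)$ factor this becomes precisely the coefficient $(1-\mu\eta_2)(1+\eta_1\tfrac{4L^2}{\mu})$. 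Collecting all pieces, and bounding $L+L_\Lambda+2\eta_2L^2$ where the statement writes it out (using $\eta_2\le1/L$), yields \eqref{eq:b_t-final}.

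\textbf{Main obstacle.} The conceptual content---a $(1-\mu\eta_2)$-contraction of the inner gap perturbed by the outer descent step and by client drift---is standard and follows the template of \cite{lin2019gradient,reisizadeh2020robust}; the real work is the accounting. Every error term must appear with exactly the stated constant: the $\Vert\gr\Lambda(\wbar_t)\Vert^2$ coefficient must be $\tfrac{\eta_1}{2}$, the $G_t$ coefficient $\tfrac{\eta_2}{2}(1-\eta_2L)$, and the $e_t$, $E_t$ coefficients $L^2(\eta_1+\eta_2)$ and $L^2(\eta_1+\tfrac{\eta_2}{2})$---these precise values are what later force the quantities $C_g,C_G$ in the proof of Theorem \ref{thm: opt} to be nonnegative under the step-size conditions. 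So the delicate part is choosing the Young's-inequality weights and tracking which base point each gradient is evaluated at (and the measurability needed to pass expectations through the stochastic gradients) so that nothing is lost relative to the claimed bound.
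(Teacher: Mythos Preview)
Your proposal follows the same overall skeleton as the paper: insert the intermediate point $(\wbar_{t+1},\vbar_t)$, use smoothness plus the polarization identity on the $\bbv$-ascent step to extract a $-\tfrac{\eta_2}{2}\Vert\grv\Lhat(\wbar_{t+1},\vbar_t)\Vert^2$ term, convert that to $-\mu\eta_2\,\widetilde b_{t+1}$ via the PL inequality, and then decompose $\widetilde b_{t+1}=b_t+[\Lambda(\wbar_{t+1})-\Lambda(\wbar_t)]+[\Lhat(\wbar_t,\vbar_t)-\Lhat(\wbar_{t+1},\vbar_t)]$. Your treatment of the ascent half and of the $\Lhat$-increment (via $\Vert\grw\Lhat(\wbar_t,\vbar_t)-\gr\Lambda(\wbar_t)\Vert^2\le \tfrac{2L^2}{\mu}b_t$) matches the paper exactly.

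The one genuine divergence is how you handle $\Lambda(\wbar_{t+1})-\Lambda(\wbar_t)$. You propose a ``crude'' Young bound that yields $+\tfrac{\eta_1}{2}\E\Vert\gr\Lambda(\wbar_t)\Vert^2+\tfrac{\eta_1}{2}g_t+\tfrac{L_\Lambda\eta_1^2}{2}(g_t+\sigmaw/n)$. The paper instead \emph{reuses Lemma \ref{Lemma: Lambda contraction}} for this increment, which gives $-\tfrac{\eta_1}{2}\E\Vert\gr\Lambda(\wbar_t)\Vert^2+\tfrac{\eta_1}{2}h_t-\tfrac{\eta_1}{2}(1-\eta_1L_\Lambda)g_t+\ldots$, and then bounds $h_t$ via Lemma \ref{Lemma: h_t bound} as $h_t\le \tfrac{4L^2}{\mu}b_t+2L^2e_t+2L^2E_t$. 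This is exactly where the paper's constants come from: the $\tfrac{\eta_1}{2}\cdot\tfrac{4L^2}{\mu}b_t$ from $h_t$ combines with the $\eta_1\tfrac{2L^2}{\mu}b_t$ from the $\Lhat$-increment to give the $\eta_1\tfrac{4L^2}{\mu}$ in the stated coefficient, and the $\tfrac{\eta_1}{2}\cdot 2L^2(e_t+E_t)$ from $h_t$ produces the $\eta_1L^2(e_t+E_t)$ terms. Your crude route would give only $\eta_1\tfrac{2L^2}{\mu}b_t$, a net $\tfrac{3\eta_1}{2}\E\Vert\gr\Lambda\Vert^2$ instead of $\tfrac{\eta_1}{2}$, and no $\eta_1L^2(e_t+E_t)$ contribution---so it does not reproduce \eqref{eq:b_t-final} with the exact constants you claim, though it would still give a valid contraction of the same order. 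If you want the constants to match, route the $\Lambda$-increment through Lemma \ref{Lemma: Lambda contraction} and Lemma \ref{Lemma: h_t bound} rather than through a bare Young inequality.
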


Having set the above contraction bound on $b_t$, we can bound the average over iterations as follows. Consider the coefficient of $b_t$ in \eqref{eq:b_t-final}. If the stepsizes satisfy $\frac{\eta_1}{\eta_2} \leq \frac{1}{8 \kappa^2}$, then we have
\begin{align}  
    b_{t+1} 
    &\leq
    \left(1 - \frac{\mu}{2} \eta_2 \right)  b_t
    +
    \frac{\eta_1}{2} \E \norm{\gr \Lambda(\wbar_{t})}^2 \\
    &\quad
    +
    \frac{\eta_1^2}{2} \left( L + L_{\Lambda} + 2 \eta_2 L^2 \right) g_t
    -
    \frac{\eta_2}{2} \left( 1 - \eta_2 L\right) G_t\\
    &\quad +
    L^2 \left( \eta_1  + \eta_2 \right) e_t
    +
    L^2 \left( \eta_1  + \frac{\eta_2}{2} \right) E_t\\
    &\quad
    +
    \frac{\eta_1^2}{2} \left( L + L_{\Lambda} + 2 \eta_2 L^2 \right) \frac{\sigmaw}{n} 
    +
    \frac{\eta_2^2}{2} L \frac{\sigmav}{n},
\end{align}
We can write the above contraction for all $t=0,\cdots,T-1$ which yields that
\begin{align} 
    \frac{1}{T} \sum_{t=0}^{T-1} b_{t}
    &\leq
    \frac{L^2}{\mu^2} \frac{D^2}{\eta_2 T}
    +
    \frac{\eta_1}{\eta_2} \frac{1}{\mu} \frac{1}{T} \sum_{t=0}^{T-1} \E \norm{\gr \Lambda(\wbar_{t})}^2 \\
    &\quad +
    \frac{\eta_1^2}{\eta_2} \frac{1}{\mu_2 n} \left( L + L_{\Lambda} + 2 \eta_2 L^2 \right) \frac{1}{T} \sum_{t=0}^{T-1} g_{t}
    -
    \frac{1}{\mu} (1 - \eta_2 L) \frac{1}{T} \sum_{t=0}^{T-1} G_{t}\\
    &\quad +
    +
    \frac{\eta_1 + \eta_2}{\eta_2} \frac{2L^2}{\mu} \left( \frac{1}{T} \sum_{t=0}^{T-1} e_{t} + \frac{1}{T} \sum_{t=0}^{T-1} E_{t}\right) \\
    &\quad +
    \frac{\eta_1^2}{\eta_2} \frac{1}{\mu} \left(  L + L_{\Lambda} + 2 \eta_2 L^2 \right) \frac{\sigmaw}{n} 
    +
    \eta_2 \frac{L}{\mu} \frac{\sigmav}{n},
\end{align}
concluding the proof of Lemma \ref{lemma: sum b_t}.

\subsubsection{Proof of Proposition \ref{prop: b_t contraction}} \label{proof prop: b_t contraction}

We first note that according to Assumption \ref{assumption: loss}, gradients $\grv \Lhat(\w, \cdot)$ are $L$-Lipschitz. We can therefore write
\begin{align} \label{eq:b_t bound 1}
    \Lambda(\wbar_{t+1}) - \Lhat(\wbar_{t+1}, \vbar_{t+1}) 
    \leq
    \Lambda(\wbar_{t+1}) - \Lhat(\wbar_{t+1}, \vbar_{t})
    -
    \langle \gr_{\vbar} \Lhat(\wbar_{t+1}, \vbar_{t}) , \vbar_{t+1} - \vbar_{t} \rangle 
    +
    \frac{L}{2} \norm{\vbar_{t+1} - \vbar_{t}}^2. 
\end{align}
Next, we use the fact that $\vbar_{t+1} - \vbar_t = \eta_2 \frac{1}{n} \sum_{i \in [n]} \sgrv \Lhat_i(\bbw^i_t, \bbv^i_t)$ and take expectations from both sides of \eqref{eq:b_t bound 1}, which yields
\begin{align} \label{eq:b_t bound 2}
    \Lambda(\wbar_{t+1}) - \E \Lhat(\wbar_{t+1}, \vbar_{t+1})
    &\leq
    \Lambda(\wbar_{t+1}) - \Lhat(\wbar_{t+1}, \vbar_{t})
    -
    \frac{\eta_2 }{2} \norm{\grv f(\wbar_{t+1}, \vbar_{t})}^2\\
    &\quad
    +
    \frac{\eta_2 }{2} \norm{\grv f(\wbar_{t+1}, \vbar_{t}) - \frac{1}{n} \sum_{i \in [n]} \sgrv \Lhat_i(\bbw^i_t, \bbv^i_t)}^2 \\
    &\quad
    -
    \frac{\eta_2 }{2} (1 - \eta_2 L) G_t
    +
    \eta_2^2 \frac{L}{2} \frac{\sigmav}{n}.
\end{align}

Now, we recall from Assumption \ref{assumption: loss} (ii) that $\Lhat(\wbar_{t+1}, \cdot)$ is $\mu$-strongly concave, implying that $\Vert \grv f(\wbar_{t+1}, \vbar_{t}) \Vert^2 \geq 2 \mu (\Lambda(\wbar_{t+1}) - \Lhat(\wbar_{t+1}, \vbar_{t}))$.  Therefore, we have that
\begin{align} \label{eq:b_t bound 3}
    \Lambda(\wbar_{t+1}) - \E \Lhat(\wbar_{t+1}, \vbar_{t+1})
    &\leq
    (1 - \mu \eta_2) \left( \Lambda(\wbar_{t+1}) - \Lhat(\wbar_{t+1}, \vbar_{t}) \right)\\
    &\quad
    +
    \frac{\eta_2 }{2} \norm{\grv f(\wbar_{t+1}, \vbar_{t}) - \frac{1}{n} \sum_{i \in [n]} \sgrv \Lhat_i(\bbw^i_t, \bbv^i_t)}^2 \\
    &\quad
    -
    \frac{\eta_2 }{2} (1 - \eta_2 L) G_t
    +
    \eta_2^2 \frac{L}{2} \frac{\sigmav}{n}.
\end{align}
Next, we continue to bound the second term in RHS of \eqref{eq:b_t bound 3}. According to Assumption \ref{assumption: loss} (ii), we can write
\begin{align} \label{eq:b_t bound 4}
    \norm{\grv f(\wbar_{t+1}, \vbar_{t}) - \frac{1}{n} \sum_{i \in [n]} \sgrv \Lhat_i(\bbw^i_t, \bbv^i_t)}^2
    &\leq
    L^2 \frac{1}{n} \sum_{i \in [n]} \norm{\wbar_{t+1} - \w^i_t}^2 
    +
    L^2 \frac{1}{n} \sum_{i \in [n]} \norm{\vbar_{t} - \bbv^i_t}^2 \\
    &\leq
    2L^2 e_t + L^2 E_t + 2L^2 \norm{\wbar_{t+1} - \wbar_{t}}^2.
\end{align}
We can bound the last term above $\norm{\wbar_{t+1} - \wbar_{t}}^2$ as follows
\begin{align} \label{eq:b_t bound 5}
    \E \norm{\wbar_{t+1} - \wbar_{t}}^2
    &=
    \eta_1^2 \E \norm{\frac{1}{n} \sum_{i \in [n]} \sgrw \Lhat_i(\w^i_t , \bbv^i_t)}^2 \\
    &\leq
    \eta_1^2 \E \norm{\frac{1}{n} \sum_{i \in [n]} \grw \Lhat_i(\w^i_t , \bbv^i_t)}^2    
    +
    \eta_1^2 \frac{\sigmaw}{n} \\
    &=
    \eta_1^2 g_t + \eta_1^2 \frac{\sigmaw}{n},
\end{align}
which together with \eqref{eq:b_t bound 4} yields that 
\begin{align} \label{eq:b_t bound 6}
    \E \norm{\grv f(\wbar_{t+1}, \vbar_{t}) - \frac{1}{n} \sum_{i \in [n]} \sgrv \Lhat_i(\bbw^i_t, \bbv^i_t)}^2
    &\leq
    2L^2 e_t + L^2 E_t + 2 \eta_1^2  L^2 g_t + 2 \eta_1^2  L^2 \frac{\sigmaw}{n}.
\end{align}
Thus far, we have bounded $b_{t+1} = \E[\Lambda(\wbar_{t+1}) - \Lhat(\wbar_{t+1}, \vbar_{t+1})]$ as follows
\begin{align} \label{eq: b_t bound 8}
    b_{t+1}
    \leq
    (1 - \mu \eta_2) \E \left[ \Lambda(\wbar_{t+1}) - \Lhat(\wbar_{t+1}, \vbar_{t}) \right] 
    +
    \eta_2 L^2 e_t 
    +
    \frac{\eta_2}{2} L^2 E_t 
    +
    \eta_1^2 \eta_2 L^2 g_t 
     +
    \eta_1^2 \eta_2 L^2 \frac{\sigmaw}{n}
    +
    \eta_2^2 \frac{L}{2}  \frac{\sigmav}{n}
\end{align}

To bound the term $\E [ \Lambda(\wbar_{t+1}) - \Lhat(\wbar_{t+1}, \vbar_{t}) ]$, we can decompose it to the following three terms:
\begin{align} \label{eq:decompose1}
    \Lambda(\wbar_{t+1}) - \Lhat(\wbar_{t+1}, \vbar_{t})
    &=
    \Lambda(\wbar_{t}) - \Lhat(\wbar_{t}, \vbar_{t}) + \Lhat(\wbar_{t}, \vbar_{t}) - \Lhat(\wbar_{t+1}, \vbar_{t}) + \Lambda(\wbar_{t+1}) - \Lambda(\wbar_{t}).
\end{align}
Using Lipschitz gradients in Assumption \ref{assumption: loss}, we can write
\begin{align} \label{eq: b_t bound 7}
    \Lhat(\wbar_{t}, \vbar_{t}) - \Lhat(\wbar_{t+1}, \vbar_{t})
    \leq
    - \langle \grw \Lhat(\wbar_t, \vbar_t), \wbar_{t+1} - \wbar_{t} \rangle 
    +
    \frac{L}{2} \norm{\wbar_{t+1} - \wbar_{t}}^2,
\end{align}
where $\overline{w}_{t+1} - \overline{w}_{t} = - \eta_1 \frac{1}{n} \sum_{i \in [n]} \sgrw f^i(\w^i_t , \bpsi^i_t)$. Taking expectation from both sides of \eqref{eq: b_t bound 7} implies that
\begin{align} \label{eq:decompose2}
    \E \left[ \Lhat(\wbar_{t}, \vbar_{t}) - \Lhat(\wbar_{t+1}, \vbar_{t}) \right]
    &\leq
    \eta_1 \E \norm{\grw \Lhat(\wbar_t, \vbar_t) - \gr \Lambda(\wbar_t)}^2
    +
    \eta_1 \E \norm{\gr \Lambda(\wbar_t)}^2 \\
    &\quad +
    \left( \frac{\eta_1}{2} + \eta_1^2 \frac{L_1}{2} \right) g_t
    +
    \eta_1^2 \frac{L}{2} \frac{\sigmaw}{n} \\
    &\leq
    \eta_1 \frac{2 L^2}{\mu} b_t
    \eta_1 \E \norm{\gr \Lambda(\wbar_t)}^2 
    +
    \left( \frac{\eta_1}{2} + \eta_1^2 \frac{L_1}{2} \right) g_t
    +
    \eta_1^2 \frac{L}{2} \frac{\sigmaw}{n}.
\end{align}
In above, we used the notation $ b_t = \E [ \Lambda(\wbar_t) - \Lhat(\wbar_t, \vbar_t)]$  to write
\begin{align} 
    \E \norm{\gr \Lambda(\wbar_t) - \grw \Lhat(\wbar_t, \vbar_t)}^2 
    &=
    \E \norm{\grw \Lhat(\wbar_t, \vbar^*(\wbar_t)) - \grw \Lhat(\wbar_t, \vbar_t)}^2 \\
    &\leq
    L^2 \E \norm{\vbar^*(\wbar_t) - \vbar_t}^2_F \\
    &\leq
    \frac{2 L^2}{\mu} \E \left[ \Lambda(\wbar_{t}) - \Lhat(\wbar_{t}, \vbar_{t}) \right] \\
    &=
    \frac{2 L^2}{\mu} b_t.
\end{align}
Together with the bound on $\E [ \Lambda(\wbar_{t+1}) ] - \E [ \Lambda(\wbar_{t}) ]$ derived in Lemma \ref{Lemma: Lambda contraction} we conclude the claim in Proposition \ref{prop: b_t contraction}.


\subsubsection{Proof of Lemma \ref{lemma: sum h_t}} \label{proof lemma: sum h_t}

We use the result of Lemma \ref{proof Lemma: h_t bound} and take the average over iterations $t=0,\cdots,T-1$, implying that
\begin{align} 
    \frac{1}{T} \sum_{t=0}^{T-1} h_t
    \leq
    \frac{4 L^2}{\mu} \frac{1}{T} \sum_{t=0}^{T-1} b_t 
    +
    2 L^2 \frac{1}{T} \sum_{t=0}^{T-1} (e_t + E_t).
\end{align}
Now, we employ the bounds on $\frac{1}{T} \sum_{t=0}^{T-1} b_t $ and $\frac{1}{T} \sum_{t=0}^{T-1} (e_t + E_t)$ derived in Lemmas \ref{lemma: sum b_t} and \ref{lemma: sum e_t + E_t}, respectively, which concludes the proof.

\end{document}